\def\eqref#1{equation~\ref{#1}}
\def\1{\bm{1}}
\DeclareMathAlphabet{\mathsfit}{\encodingdefault}{\sfdefault}{m}{sl}
\SetMathAlphabet{\mathsfit}{bold}{\encodingdefault}{\sfdefault}{bx}{n}
\newtheorem{claim}{Claim}
\newtheorem{thm}{Theorem}
\newtheorem{lem}{Lemma}
\newtheorem{corollary}{Corollary}
\setlist{leftmargin=7.5mm}
\definecolor{mine}{RGB}{205, 232, 248}%
\newtheorem{theorem}{Theorem}
\newtheorem{definition}[theorem]{Definition}
\newcommand{\cS}{{\mathcal{S}}}
\newcommand{\cA}{\mathcal{A}}
\newcommand{\cT}{\mathcal{T}}
\newcommand{\cO}{\mathcal{O}}
\newcommand{\RR}{\mathbb{R}}
\newcommand{\EE}{\mathbb{E}}
\newcommand{\ood}{\text{ood}}
\def\given{{\,|\,}}
\newcommand{\cD}{\mathcal{D}}
\title{Pessimistic Bootstrapping for Uncertainty-Driven Offline Reinforcement Learning}
\author{Chenjia Bai \\
  Harbin Institute of Technology \\
  {\small \texttt{baichenjia255@gmail.com}}
  \And
  Lingxiao Wang \\
  Northwestern University \\
  \And
  Zhuoran Yang \\
  Princeton University \\
  \And
  Zhihong Deng \\
  University of Technology Sydney \\
  \And
  Animesh Garg \\
  University of Toronto \\
  Vector Institute, NVIDIA 
  \And
  Peng Liu \\
  Harbin Institute of Technology 
  \And
  Zhaoran Wang \\
  Northwestern University
}
\begin{document}

\maketitle

\begin{abstract}
Offline Reinforcement Learning (RL) aims to learn policies from previously collected datasets without exploring the environment. Directly applying off-policy algorithms to offline RL usually fails due to the extrapolation error caused by the out-of-distribution (OOD) actions. Previous methods tackle such problems by penalizing the Q-values of OOD actions or constraining the trained policy to be close to the behavior policy. Nevertheless, such methods typically prevent the generalization of value functions beyond the offline data and also lack a precise characterization of OOD data. In this paper, we propose Pessimistic Bootstrapping for offline RL (PBRL), a purely uncertainty-driven offline algorithm without explicit policy constraints. Specifically, PBRL conducts uncertainty quantification via the disagreement of bootstrapped Q-functions, and performs pessimistic updates by penalizing the value function based on the estimated uncertainty. To tackle the extrapolating error, we further propose a novel OOD sampling method. We show that such OOD sampling and pessimistic bootstrapping yields a provable uncertainty quantifier in linear MDPs, thus providing the theoretical underpinning for PBRL. Extensive experiments on D4RL benchmark show that PBRL has better performance compared to the state-of-the-art algorithms.
\end{abstract}

\section{Introduction}

Deep Reinforcement Learning (DRL) \citep{sutton-18} achieves remarkable success in a variety of tasks. However, in most successful applications, DRL requires millions of interactions with the environment. In real-world applications such as navigation \citep{navigate-18} and healthcare \citep{healthcare}, acquiring a large number of samples by following a possibly suboptimal policy can be costly and dangerous. Alternatively, practitioners seek to develop RL algorithms that learn a policy based solely on an offline dataset, where the dataset is typically available. However, directly adopting online DRL algorithms to the offline setting is problematic. On the one hand, policy evaluation becomes challenging since no interaction is allowed, which limits the usage of on-policy algorithms. On the other hand, although it is possible to slightly modify the off-policy value-based algorithms and sample solely from the offline dataset in training, such modification typically suffers from a significant performance drop compared with their online learning counterpart \citep{levine2020offline}. An important reason for such performance drop is the so-called \textit{distributional shift}. Specifically, the offline dataset follows the visitation distribution of the \textit{behavior policies}. Thus, estimating the $Q$-functions of the corresponding greedy policy with the offline dataset is biased due to the difference in visitation distribution. Such bias typically leads to a significant \textit{extrapolation error} for DRL algorithms since the estimated $Q$-function tends to overestimate the out-of-distribution (OOD) actions \citep{bcq-2019}. 

To tackle the distributional shift issue in offline RL, previous successful approaches typically fall into two categories, namely, policy constraints \citep{bear-2019,td3bc-2021} and conservative methods \citep{cql-2020,mopo-2020, combo-2021}. Policy constraints aim to restrict the learned policy to be close to the behavior policy, thus reducing the extrapolation error in policy evaluation. Conservative methods seek to penalize the $Q$-functions for OOD actions in policy evaluation and hinge on a gap-expanding property to regularize the OOD behavior. Nevertheless, since policy constraints explicitly confine the policy to be close to the behavior policy, such method tends to be easily affected by the non-optimal behavior policy. Meanwhile, although the conservative algorithms such as CQL \citep{cql-2020} do not require policy constraints, CQL equally penalizes the OOD actions and lacks a precise characterization of the OOD data, which can lead to overly conservative value functions. To obtain a more refined characterization of the OOD data, uncertainty quantification is shown to be effective when associated with the model-based approach \citep{mopo-2020, morel-2020}, where the dynamics model can be learned in static data thus providing more stable uncertainty in policy evaluation. Nevertheless, model-based methods need additional modules and may fail when the environment becomes high-dimensional and noisy. In addition, the uncertainty quantification for model-free RL is more challenging since the $Q$-function and uncertainty quantifier need to be learned simultaneously \citep{combo-2021}.

To this end, we propose Pessimistic Bootstrapping for offline RL (PBRL), an uncertainty-driven model-free algorithm for offline RL. To acquire reliable $Q$-function estimates and their corresponding uncertainty quantification, two components of PBRL play a central role, namely, bootstrapping and OOD sampling. Specifically, we adopt bootstrapped $Q$-functions \citep{boot-2016} for uncertainty quantification. We then perform pessimistic $Q$-updates by using such quantification as a penalization. Nevertheless, solely adopting such penalization based on uncertainty quantification is neither surprising nor effective. We observe that training the $Q$-functions based solely on the offline dataset does not regularize the OOD behavior of the $Q$-functions and suffers from the extrapolation error. To this end, we propose a novel OOD sampling technique as a regularizer of the learned $Q$-functions. Specifically, we introduce additional OOD datapoints into the training buffer. The OOD datapoint consists of states sampled from the training buffer, the corresponding OOD actions sampled from the current policy, and the corresponding OOD target based on the estimated $Q$-function and uncertainty quantification. We highlight that having such OOD samples in the training buffer plays an important role in both the $Q$-function estimation and the uncertainty quantification. We remark that, OOD sampling controls the OOD behavior in training, which guarantees the stability of the trained bootstrapped $Q$-functions. 
We further show that under some regularity conditions, such OOD sampling is provably efficient under the linear MDP assumptions. 

We highlight that PBRL exploits the OOD state-action pairs by casting a more refined penalization over OOD data points, allowing PBRL to acquire better empirical performance than the policy-constraint and conservatism baselines. As an example, if an action lies close to the support of offline data but is not contained in the offline dataset, the conservatism \citep{cql-2020} and policy constraint \citep{bcq-2019} methods tend to avoid selecting it. In contrast, PBRL tends to assign a small $Q$-penalty for such an action as the underlying epistemic uncertainty is small. Hence, the agent trained with PBRL has a higher chance consider such actions if the corresponding value estimate is high, yielding better performance than the policy constraint and conservatism baselines.
Our experiments on the D4RL benchmark \citep{d4rl-2020} show that PBRL provides reasonable uncertainty quantification and yields better performance compared to the state-of-the-art algorithms.


\section{Preliminaries}

We consider an episodic MDP defined by the tuple $(\mathcal{S},\mathcal{A},T,r,\mathbb{P})$, where $\mathcal{S}$ is the state space, $\mathcal{A}$ is the action space, $T\in\mathbb{N}$ is the length of episodes, $r$ is the reward function, and $\mathbb{P}$ is the transition distribution. The goal of RL is to find a policy $\pi$ that maximizes the expected cumulative rewards $\mathbb{E}\big[\sum_{i=0}^{T-1}{\gamma^t r_i}]$, where $\gamma\in[0,1)$ is the discount factor in episodic settings. The corresponding $Q$-function of the optimal policy satisfies the following Bellman operator,
\begin{equation}
\mathcal{T}Q_{\theta}(s,a):=r(s,a)+\gamma \mathbb{E}_{s'\sim T(\cdot|s,a)}\big[\max_{a'} Q_{\theta^-}(s',a')\big].
\end{equation}
where $\theta$ is the parameters of $Q$-network. In DRL, the $Q$-value is updated by minimizing the TD-error, namely $\mathbb{E}_{(s,a,r,s')}[(Q-\mathcal{T}Q)^2]$. Empirically, the target $\mathcal{T}Q$ is typically calculated by a separate target-network parameterized by $\theta^-$ without gradient propagation \citep{DQN-2015}. In online RL, one typically samples the transitions $(s,a,r,s')$ through iteratively interacting with the environment. The $Q$-network is then trained by sampling from the collected transitions.

In contrast, in offline RL, the agent is not allowed to interact with the environment. The experiences are sampled from an offline dataset $\cD_{\text{in}}=\{(s^i_t, a^i_t,r^i_t, s^i_{t+1})\}_{i\in[m]}$. Naive off-policy methods such as $Q$-learning suffer from the \textit{distributional shift}, which is caused by different visitation distribution of the behavior policy and the learned policy. Specifically, the greedy action $a'$ chosen by the target $Q$-network in $s'$ can be an OOD-action since $(s',a')$ is scarcely covered by the dateset $\mathcal{D}_{\rm in}$. Thus, the value functions evaluated on such OOD actions typically suffer from significant extrapolation errors. Such errors can be further amplified through propagation and potentially diverges during training. 
We tackle such a challenge by uncertainty quantification and OOD sampling.


\section{Pessimistic Bootstrapping for Offline RL}


\subsection{Uncertainty Quantification with Bootstrapping}

In PBRL, we maintain $K$ bootstrapped $Q$-functions in critic to quantify the epistemic uncertainty. Formally, we denote by $Q^k$ the $k$-th $Q$-function in the ensemble. $Q^k$ is updated by fitting the following target
\begin{equation}
\widehat \cT Q_{\theta}^k(s,a):=r(s,a)+\gamma \widehat {\mathbb{E}}_{s'\sim P(\cdot|s,a),a'\sim \pi(\cdot|s)}\Big[Q^k_{\theta^-}(s',a')\Big].
\label{eq:bootQ}
\end{equation}
Here we denote the empirical Bellman operator by $\widehat \cT$, which estimates the expectation $\EE[Q^k_{\theta^-}(s',a') \given s, a]$ empirically based on the offline dataset. We adopt such an ensemble technique from Bootstrapped DQN \citep{boot-2016}, which is initially proposed for the online exploration task. Intuitively, the ensemble forms an estimation of the posterior distribution of the estimated $Q$-functions, which yields similar value on areas with rich data and diversely on areas with scarce data. Thus, the deviation among the bootstrapped $Q$-functions yields an epistemic uncertainty estimation, which we aim to utilize as a penalization in estimating the $Q$-functions. Specifically, we introduce the following uncertainty quantification $\mathcal{U}(s,a)$ based on the $Q$-functions $\{Q^k\}_{k\in[K]}$,
\begin{equation}
\mathcal{U}(s,a):={\rm Std }(Q^k(s,a))=\sqrt{\frac{1}{K}\sum\nolimits_{k=1}^K \Big(Q^k(s,a)-\bar{Q}(s,a)\Big)^2}.
\label{eq:uncertain}
\end{equation}
Here we denote by $\bar{Q}$ the mean among the ensemble of $Q$-functions. From the Bayesian perspective, such uncertainty quantification yields an estimation of the standard deviation of the posterior of $Q$-functions. To better understand the effectiveness of such uncertainty quantification, we illustrate with a simple prediction task. We use $10$ neural networks with identical architecture and different initialization as the ensemble. We then train the ensemble with $60$ datapoints in $\RR^2$ plane, where the covariate $x$ is generated from the standard Gaussian distribution, and the response $y$ is obtained by feeding $x$ into a randomly generated neural network. We plot the datapoints for training and the uncertainty quantification in Fig.~\ref{fig:regress}. As shown in the figure, the uncertainty quantification rises smoothly from the in-distribution datapoints to the OOD datapoints.

In offline RL, we perform regression $(s,a)\rightarrow \widehat\cT Q^k(s,a)$ in $\cD_{\rm in}$ to train the bootstrapped $Q$-functions, which is similar to regress $x\rightarrow y$ in the illustrative task. The uncertainty quantification allows us to quantify the deviation of a datapoint from the offline dataset, which provides more refined conservatism compared to the previous methods \citep{cql-2020,brac-2019}. 

\begin{figure*}[t]
\centering
\subfigure[Uncertainty]{\includegraphics[width=1.2in]{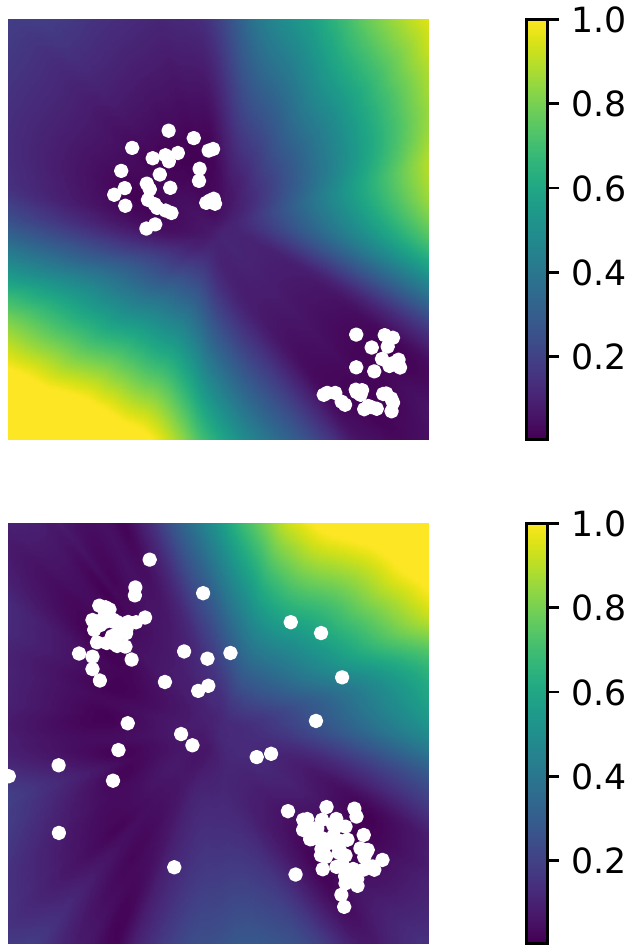}\label{fig:regress}}
\hspace{1.0em}
\subfigure[PBRL: overall architecture]{\includegraphics[width=3.8in]{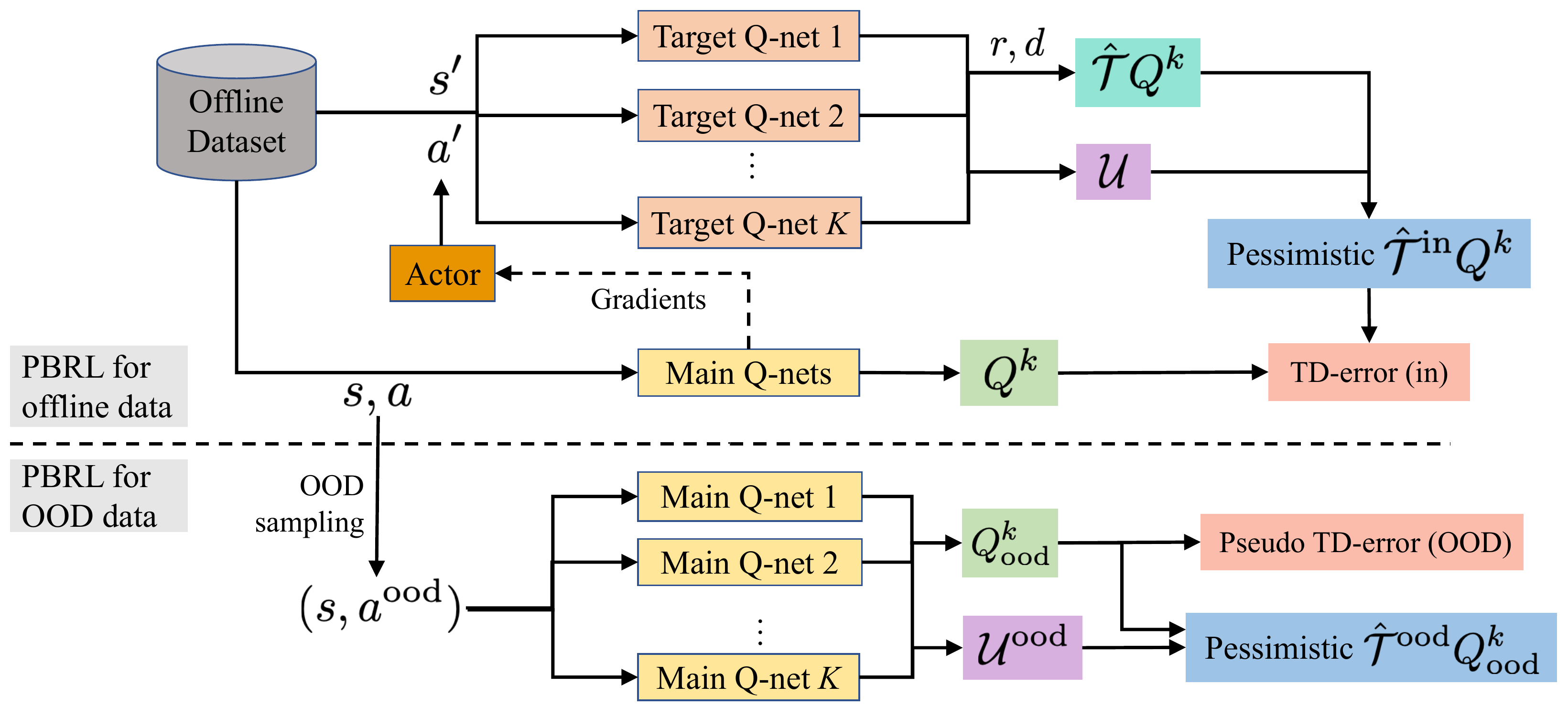}\label{fig:overview}}
\vspace{-1em}
\caption{(a) Illustration of the uncertainty estimations in the regression task. The white dots represent data points, and the color scale represents the bootstrapped-uncertainty values in the whole input space. (b) Illustration of the workflow of PBRL. PBRL splits the loss function into two components. The TD-error (\underline{in}) represents the regular TD-error for \underline{in}-distribution data (i.e., from the offline dataset), and pseudo TD-error (\underline{\rm ood}) represent the loss function for OOD actions. In the update of $Q$-functions, both losses are computed and summed up for the gradient update.}
\vspace{-1em}
\label{fig:ucb}
\end{figure*}

\subsection{Pessimistic Learning}\label{sec:pess}

We now introduce the pessimistic value iteration based on the bootstrapped uncertainty quantification. The idea is to penalize the $Q$-functions based on the uncertainty quantification. To this end, we propose the following target for state-action pairs sampled from $\cD_{\rm in}$,
\begin{equation}
\widehat{\cT}^{\rm in} Q^k_{\theta}(s,a):=r(s,a)+\gamma \widehat{\mathbb{E}}_{s'\sim P(\cdot|s,a),a'\sim \pi(\cdot|s)}\Big[Q^k_{\theta^-}(s',a')-\beta_{\rm in}\colorbox{mine}{$\mathcal{U}_{\theta^-}(s',a')$}\Big],
\label{eq:in-bellman}
\end{equation}
where $\mathcal{U}_{\theta^-}(s',a')$ is the uncertainty estimation at $(s', a')$ based on the target network, and $\beta_{\text{in}}$ is a tuning parameter. In addition, the empirical mean $\widehat{\mathbb{E}}_{s'\sim P(\cdot|s,a),a'\sim \pi(\cdot|s)}$ is obtained by sampling the transition $(s, a, s')$ from $\cD_{\text{in}}$ and further sampling $a' \sim \pi(\cdot\given s')$ from the current policy $\pi$. We denote by $\widehat{\cT}^{\rm in} Q^k_{\theta}(s,a)$ the \underline{in}-distribution target of $Q^k_{\theta}(s, a)$ and write $\widehat{\cT}^{\rm in}$ to distinguish the in-distribution target from that of the OOD target, which we introduce in the sequel.

We remark that there are two options to penalize the $Q$-functions through the operator $\widehat\cT^{\rm in}$. In Eq.~(\ref{eq:in-bellman}), we penalize the next-$Q$ value $Q^k_{\theta^{-}}(s', a')$ with the corresponding uncertainty $\mathcal{U}_{\theta^-}(s',a')$. Alternatively, we can also penalize the immediate reward by $\hat{r}(s,a):=r(s,a)-\mathcal{U}_{\theta}(s,a)$ and use $\hat{r}(s,a)$ in place of $r(s, a)$ for the target. Nevertheless, since the datapoint $(s, a) \in \cD_{\rm in}$ lies on rich-data areas, the penalization  $\mathcal{U}_{\theta}(s,a)$ on the immediate reward is usually very small thus having less effect in training. In PBRL, we penalize the uncertainty of $(s',a')$ in the next-$Q$ value.

Nevertheless, our empirical findings in \S\ref{app:regu} suggest that solely penalizing the uncertainty for in-distribution target is insufficient to control the OOD performance of the fitted $Q$-functions. To enforce direct regularization over the OOD actions, we incorporate the OOD datapoints directly in training and sample OOD data of the form $(s^{\ood}, a^{\ood}) \in \cD_{\ood}$.
Specifically, we sample OOD states from the in-distribution dataset $\cD_{\text{in}}$. Correspondingly, we sample OOD actions $a^{\ood}$ by following the current policy $\pi(\cdot\given s^{\ood})$. We highlight that such OOD sampling requires only the offline dataset $\cD_{\text{in}}$ and does not require additional generative models or access to the simulator.

It remains to design the target for OOD samples. Since the transition $P(\cdot \given s^{\ood}, a^{\ood})$ and reward $r(s^{\ood}, a^{\ood})$ are unknown, the true target of OOD sample is inapplicable. In PBRL, we propose a novel pseudo-target for the OOD datapoints,
\begin{equation}
\widehat \cT^{\rm ood} Q^k_{\theta}(s^{\ood},a^{\rm ood}):=Q^k_{\theta}(s^{\ood},a^{\rm ood})-\beta_{\rm ood}\colorbox{mine}{$\mathcal{U}_{\theta}(s^{\ood},a^{\rm ood})$},
\label{eq:ood-bellman}
\end{equation}
which introduces an additional uncertainty penalization $\mathcal{U}_{\theta}(s^{\ood},a^{\rm ood})$ to enforce pessimistic $Q$-function estimation, and $\beta_{\rm ood}$ is a tuning parameter. For OOD samples that are close to the in-distribution data, such penalization is small and the OOD target is close to the $Q$-function estimation. In contrast, for OOD samples that are distant away from the in-distribution dataset, a larger penalization is incorporated into the OOD target. 
In our implementation, we introduce an additional truncation to stabilize the early stage training as $\max\{0, \cT^{\rm ood} Q^k_{\theta}(s^{\ood},a^{\rm ood})\}$.
In addition, we remark that $\beta_{\ood}$ is important to the empirical performance. Specifically,
\begin{itemize}
    \item At the beginning of training, both the $Q$-functions and the corresponding uncertainty quantifications are inaccurate. We use a large $\beta_{\rm ood}$ to enforce a strong regularization on OOD actions.
    \item We then gradually decrease $\beta_{\rm ood}$ in the training process since the value estimation and uncertainty quantification becomes more accurate in training. We remark that a smaller $\beta_{\rm ood}$ requires more accurate uncertainty estimate for the pessimistic target estimation $\widehat \cT^{\rm ood} Q^k$. In addition, a decaying parameter $\beta_{\rm ood}$ stabilizes the convergence of $Q^k_{\theta}(s^{\ood},a^{\rm ood})$ in the training from the empirical perspective.
\end{itemize}

Incorporating both the in-distribution target and OOD target, we conclude the loss function for critic in PBRL as follows,
\begin{equation}\label{eq:finalLoss}
\mathcal{L}_{\rm critic}=\widehat{\mathbb{E}}_{(s,a,r,s')\sim \cD_{\text{in}}}\big[(\widehat\cT^{\rm in}Q^k-Q^k)^2\big]+\widehat{\mathbb{E}}_{s^{\rm ood} \sim \cD_{\text{in}},a^{\rm ood}\sim \pi}\big[(\widehat\cT^{\rm ood}Q^k-Q^k)^2\big],
\end{equation}
where we iteratively minimize the regular TD-error for the offline data and the pseudo TD-error for the OOD data. Incorporated with OOD sampling, PBRL obtains a smooth and pessimistic value function by reducing the extrapolation error caused by high-uncertain state-action pairs. 

Based on the pessimistic $Q$-functions, we obtain the corresponding policy by solving the following maximization problem,
\begin{equation}
\pi_{\varphi}:=\max_{\varphi}\widehat{\mathbb{E}}_{s\sim \cD_{\text{in}},a\sim\pi(\cdot|s)}\Big[\min_{k=1,\ldots,K}Q^k(s,a)\Big],
\label{eq:actor}
\end{equation}
where $\varphi$ is the policy parameters. Here we follow the previous actor-critic methods \citep{sac-2018, td3-2018} and take the minimum among ensemble $Q$-functions, which stablizes the training of policy network. We illustrate the overall architecture of PBRL in Fig.~\ref{fig:overview}.

\subsection{Theoretical Connections to LCB-penalty}\label{sec:them}

In this section, we show that the pessimistic target in PBRL aligns closely with the recent theoretical investigation on offline RL \citep{pevi-2021, xie2021bellman}. From the theoretical perspective, an appropriate uncertainty quantification is essential to the provable efficiency in offline RL. Specifically, the $\xi$-uncertainty quantifier plays a central role in the analysis of both online and offline RL \citep{jaksch2010near,azar2017minimax, wang2020reward, lsvi-2020, pevi-2021, xie2021bellman, xie2021policy}.

\begin{definition}[$\xi$-Uncertainty Quantifier \citep{pevi-2021}]\label{def1}
The set of penalization $\{\Gamma_t\}_{t\in[T]}$ forms a $\xi$-Uncertainty Quantifier if it holds with probability at least $1 - \xi$ that
\begin{equation*}
|\widehat \cT V_{t+1}(s, a) - \cT V_{t+1}(s, a)| \leq \Gamma_t(s, a)
\end{equation*}
for all $(s, a)\in\cS\times\cA$, where $\cT$ is the Bellman equation and $\widehat \cT$ is the empirical Bellman equation that estimates $\cT$ based on the offline data.
\end{definition}

In linear MDPs \citep{lsvi-2020, wang2020reward, pevi-2021} where the transition kernel and reward function are assumed to be linear to the state-action representation $\phi:\mathcal{S}\times\mathcal{A}\rightarrow\mathbb{R}^d$, The following LCB-penalty \citep{bandit-2011, lsvi-2020} is known to be a $\xi$-uncertainty quantifier for appropriately selected $\{\beta_t\}_{t\in[T]}$,
\begin{equation}
\label{eq:lcb}
\Gamma^{\rm lcb}(s_t,a_t)=\beta_t\cdot\big[\phi(s_t,a_t)^\top\Lambda_t^{-1}\phi(s_t,a_t)\big]^{\nicefrac{1}{2}},
\end{equation} 
where $\Lambda_t=\sum_{i=1}^{m}\phi(s_t^{i},a_t^{i})\phi(s_t^{i},a_t^{i})^\top+\lambda \cdot \mathrm{\mathbf{I}}$ accumulates the features of state-action pairs in $\cD_{\rm in}$ and plays the role of a pseudo-count intuitively. We remark that under such linear MDP assumptions, the penalty proposed in PBRL and $\Gamma^{\rm lcb}(s_t,a_t)$ in linear MDPs is equivalent under a Bayesian perspective. 
Specifically, we make the following claim.
\begin{claim}
\label{lem:lcb-main}
In linear MDPs, the proposed bootstrapped uncertainty $\beta_t\cdot\mathcal{U}(s_t,a_t)$ is an estimation to the LCB-penalty $\Gamma^{\rm lcb}(s_t,a_t)$ in Eq. (\ref{eq:lcb}) for an appropriately selected tuning parameter $\beta_t$.
\end{claim}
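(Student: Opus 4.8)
The plan is to prove Claim~\ref{lem:lcb-main} through a Bayesian linear-regression argument, exploiting the two structural facts available in a linear MDP: (i) every $Q$-function in play is linear in the feature map $\phi$, and (ii) the bootstrapped ensemble is designed to approximate samples from the posterior over the linear weights. First I would write each ensemble member as $Q^k_\theta(s,a)=\phi(s,a)^\top w^k$ with $w^k\in\RR^d$, so that fitting $Q^k$ by the ridge-regularized least-squares regression $(s,a)\mapsto\widehat\cT Q^k(s,a)$ on $\cD_{\rm in}$ with regularizer $\lambda$ is exactly Bayesian inference in the Gaussian-linear model whose design is $\{\phi(s_t^i,a_t^i)\}_{i\in[m]}$. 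Under a $\mathcal{N}(0,\sigma^2\lambda^{-1}I)$ prior and homoskedastic Gaussian noise of variance $\sigma^2$, the posterior over $w^k$ is $\mathcal{N}(\hat w_t,\,\sigma^2\Lambda_t^{-1})$, where $\Lambda_t=\sum_{i=1}^m\phi(s_t^i,a_t^i)\phi(s_t^i,a_t^i)^\top+\lambda I$ is precisely the matrix in Eq.~(\ref{eq:lcb}) and $\hat w_t$ is the ridge solution.

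Next I would invoke the standard justification behind Bootstrapped DQN with randomized priors: the ensemble $\{w^k\}_{k\in[K]}$, produced via independent initialization, bootstrap resampling of $\cD_{\rm in}$, and an additive random prior term, behaves as an approximate draw from this posterior. Pushing the posterior through the linear readout, $Q^k(s_t,a_t)=\phi(s_t,a_t)^\top w^k$ is (approximately) Gaussian with variance $\sigma^2\cdot\phi(s_t,a_t)^\top\Lambda_t^{-1}\phi(s_t,a_t)$. Hence, by the law of large numbers, the empirical standard deviation $\mathcal{U}(s_t,a_t)=\mathrm{Std}(Q^k(s_t,a_t))$ concentrates around $\sigma\cdot[\phi(s_t,a_t)^\top\Lambda_t^{-1}\phi(s_t,a_t)]^{1/2}$ as $K\to\infty$. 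Choosing the tuning parameter $\beta_t$ to absorb the noise scale $\sigma$ (and any multiplicative slack from the finite-$K$ and bootstrap approximation) then gives $\beta_t\cdot\mathcal{U}(s_t,a_t)\approx\Gamma^{\rm lcb}(s_t,a_t)$, which is the asserted estimation.

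The main obstacle is that the identification is exact only in the idealized conjugate Gaussian-linear regime with $K\to\infty$: for finite ensembles, plain bootstrap resampling is known to underestimate the posterior spread precisely on the low-count regions where $\phi^\top\Lambda_t^{-1}\phi$ is large, which is exactly why the randomized-prior mechanism, together with the OOD sampling target of Eq.~(\ref{eq:ood-bellman}), is needed to restore the correct ensemble disagreement away from the data. I would therefore state the result as an approximate equivalence "under a Bayesian perspective," with $\beta_t$ as the free parameter reconciling the two penalties, rather than as a pointwise equality; turning it into a quantitative bound on $|\beta_t\mathcal{U}(s_t,a_t)-\Gamma^{\rm lcb}(s_t,a_t)|$ would require separately controlling the Monte-Carlo error of order $K^{-1/2}$ and the discrepancy between the bootstrap-with-prior law and the true Gaussian posterior, which is the technically delicate part of the argument.
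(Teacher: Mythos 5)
Your argument is essentially the paper's own proof: the paper's formal version (Lemma~\ref{lam:linear}) likewise treats LSVI as Bayesian linear regression with a Gaussian prior $w\sim\mathcal N(0,\mathrm{\mathbf{I}}/\lambda)$ and unit-variance Gaussian noise, derives the posterior $\mathcal N(\mu_t,\Lambda_t^{-1})$ so that $\mathrm{Var}(\phi^\top\widehat w_t)=\phi^\top\Lambda_t^{-1}\phi$, and then appeals to the bootstrapped ensemble as an approximation of this posterior, with $\beta_t$ absorbing the remaining scale. Your only deviations are cosmetic --- keeping a general noise scale $\sigma^2$ folded into $\beta_t$ and being more explicit about the finite-$K$ and bootstrap-approximation error, which the paper also treats only qualitatively.
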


We refer to \S\ref{app:thm} for a detailed explanation and proof. Intuitively, the bootstrapped $Q$-functions estimates a non-parametric $Q$-posterior \citep{boot-2016, boot-2018}. Correspondingly, the uncertainty quantifier $\mathcal{U}(s_t,a_t)$ estimates the standard deviation of the $Q$-posterior, which scales with the LCB-penalty in linear MDPs. As an example, we show that under the tabular setting, $\Gamma^{\rm lcb}(s_t,a_t)$ is approximately proportional to the reciprocal pseudo-count of the corresponding state-action pair in the dataset (See Lemma \ref{lemma:count} in \S\ref{app:thm}). In offline RL, such uncertainty quantification measures how trustworthy the value estimations on state-action pairs are. A low LCB-penalty (or high pseudo-count) indicates that the corresponding state-action pair aligns with the support of offline data.

Under the linear MDP or Bellman-consistent assumptions, penalizing the estimated value function based on the uncertainty quantification is known to yield an efficient offline RL algorithm \citep{pevi-2021, xie2021bellman, xie2021policy}. However, due to the large extrapolation error of neural networks, we find that solely penalizing the value function of the in-distribution samples is insufficient to regularize the fitted value functions of OOD state-action pairs.

A key to the success of linear MDP algorithms \citep{lsvi-2020, wang2020reward, pevi-2021} is the extrapolation ability through $L_2$-regularization in the least-squares value iteration (LSVI), which guarantees that the linear parameterized value functions behave reasonably on OOD state-action pairs. From a Bayesian perspective, such $L_2$-regularization enforces a Gaussian prior on the estimated parameter of linear approximations, which regularizes the value function estimation on OOD state-action pairs with limited data available. Nevertheless, our empirical study in \S\ref{app:regu} shows that $L_2$-regularization is not sufficient to regularize the OOD behavior of neural networks.

To this end, PBRL introduces a direct regularization over an OOD dataset. From the theoretical perspective, we observe that adding OOD datapoint $(s^{\text{ood}}, a^{\text{ood}}, y)$ into the offline dataset leads to an equivalent regularization to the $L_2$-regularization under the linear MDP assumption. Specifically, in linear MDPs, such additional OOD sampling yields a covariate matrix of the following form,
\begin{equation}
    \widetilde\Lambda=\sum\nolimits_{i=1}^{m}\phi(s_t^{i},a_t^{i})\phi(s_t^{i},a_t^{i})^\top+ \sum\nolimits_{(s^{\text{ood}}, a^{\text{ood}}, y) \in\cD_{\text{ood}}}\phi(s^{\text{ood}},a^{\text{ood}})\phi(s^{\text{ood}},a^{\text{ood}})^\top,
\end{equation}
where the matrix $\Lambda_{\text{ood}} = \sum_{(s^{\text{ood}}, a^{\text{ood}}, y) \in\cD_{\text{ood}}}\phi(s^{\text{ood}},a^{\text{ood}})\phi(s^{\text{ood}},a^{\text{ood}})^\top$ plays the role of the $\lambda\cdot \mathrm{\mathbf{I}}$ prior in LSVI. It remains to design a proper target $y$ in the OOD datapoint $(s^{\text{ood}}, a^{\text{ood}}, y)$. The following theorem show that setting $y = \cT V_{h+1}(s^{\text{ood}}, a^{\text{ood}})$ leads to a valid $\xi$-uncertainty quantifier under the linear MDP assumption.
\begin{thm}
\label{thm::OOD_gamma_main}
Let $\Lambda_{\mathrm{ood}} \succeq \lambda\cdot I$. Under the linear MDP assumption, for all the OOD datapoint $(s^{\text{\rm ood}}, a^{\text{\rm ood}}, y)\in\cD_{\text{\rm ood}}$, if we set $y = \mathcal{T} V_{t+1}(s^{\text{\rm ood}}, a^{\text{\rm ood}})$, it then holds for $\beta_t = \cO\bigl(T\cdot \sqrt{d} \cdot \text{log}(T/\xi)\bigr)$ that $\Gamma^{\rm lcb}_t(s_t,a_t)=\beta_t\big[\phi(s_t,a_t)^\top\Lambda_t^{-1}\phi(s_t,a_t)\big]^{\nicefrac{1}{2}}$ forms a valid $\xi$-uncertainty quantifier.
\end{thm}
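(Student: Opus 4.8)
The plan is to follow the least-squares value iteration (LSVI) analysis for linear MDPs \citep{pevi-2021,lsvi-2020}, the only change being that the usual $\lambda\cdot I$ ridge term is replaced by the OOD covariance $\Lambda_{\ood}$, with the corresponding synthetic rows appended to the regression. Fix a stage $t\in[T]$, write $\phi_t^i:=\phi(s_t^i,a_t^i)$ and $\Lambda_t:=\sum_{i=1}^m\phi_t^i(\phi_t^i)^\top+\Lambda_{\ood}$, which satisfies $\Lambda_t\succeq\lambda I$ by assumption. Under the linear MDP assumption the Bellman backup of the bounded value function $V_{t+1}$ is linear in the features: there exists $w_t$ with $\|w_t\|_2=\cO(T\sqrt d)$ such that $\cT V_{t+1}(s,a)=\phi(s,a)^\top w_t$ for all $(s,a)$. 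The empirical operator solves least squares over $\cD_{\rm in}\cup\cD_{\ood}$, giving $\widehat\cT V_{t+1}(s,a)=\phi(s,a)^\top\widehat w_t$ with
\begin{equation*}
\widehat w_t=\Lambda_t^{-1}\Bigl(\sum\nolimits_{i=1}^m\phi_t^i\bigl[r_t^i+V_{t+1}(s_{t+1}^i)\bigr]+\sum\nolimits_{(s^{\ood},a^{\ood},y)\in\cD_{\ood}}\phi(s^{\ood},a^{\ood})\,y\Bigr).
\end{equation*}

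The key step, and the reason the theorem needs the \emph{exact} OOD target $y=\cT V_{t+1}(s^{\ood},a^{\ood})=\phi(s^{\ood},a^{\ood})^\top w_t$, is that the OOD rows then contribute exactly $\Lambda_{\ood}w_t$ to the normal equations. Subtracting $w_t=\Lambda_t^{-1}\Lambda_t w_t=\Lambda_t^{-1}\bigl(\sum_i\phi_t^i(\phi_t^i)^\top w_t+\Lambda_{\ood}w_t\bigr)$ makes the $\Lambda_{\ood}w_t$ terms cancel, leaving
\begin{equation*}
\widehat w_t-w_t=\Lambda_t^{-1}\sum\nolimits_{i=1}^m\phi_t^i\,\varepsilon_t^i,\qquad \varepsilon_t^i:=r_t^i+V_{t+1}(s_{t+1}^i)-\cT V_{t+1}(s_t^i,a_t^i),
\end{equation*}
where each $\varepsilon_t^i$ has zero conditional mean and $|\varepsilon_t^i|=\cO(T)$. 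Unlike a plain $\lambda\cdot I$ prior, which would leave a residual bias $-\lambda\Lambda_t^{-1}w_t$, here the ``prior'' is built from genuine Bellman targets, so it adds curvature to $\Lambda_t$ without adding error — this is the linear-MDP counterpart of the informal claim in \S\ref{sec:them} that OOD sampling plays the role of $L_2$ regularization.

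It remains to bound $|\widehat\cT V_{t+1}(s,a)-\cT V_{t+1}(s,a)|=|\phi(s,a)^\top(\widehat w_t-w_t)|$. Cauchy--Schwarz in the $\Lambda_t^{-1}$-norm gives
\begin{equation*}
\bigl|\phi(s,a)^\top(\widehat w_t-w_t)\bigr|\le\bigl[\phi(s,a)^\top\Lambda_t^{-1}\phi(s,a)\bigr]^{1/2}\cdot\Bigl\|\sum\nolimits_{i=1}^m\phi_t^i\,\varepsilon_t^i\Bigr\|_{\Lambda_t^{-1}},
\end{equation*}
so the theorem follows once the self-normalized sum on the right is shown to be $\cO(T\sqrt d\,\log(T/\xi))$ with probability at least $1-\xi$; one then sets $\beta_t$ to that value and invokes Definition~\ref{def1}. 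For this I would apply the self-normalized martingale tail bound of \citep{bandit-2011} together with a union bound over an $\epsilon$-net of the class of value functions $V_{t+1}$ that PBRL can produce (parameterized by the learned weights and the penalty coefficient), using $\|\phi\|_2\le1$ and $\Lambda_t\succeq\lambda I$ to control both the covering number and the log-determinant potential $\log(\det\Lambda_t/\det(\lambda I))=\cO(d\log T)$, and closing the loop by backward induction on $t=T,\dots,1$.

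The main obstacle is exactly this uniform concentration step. Because $V_{t+1}$ is a function of the same dataset that defines $\Lambda_t$ and the regressors $\phi_t^i$, the noise $\varepsilon_t^i$ is not independent of the design, so one must (i) prove a bound that holds simultaneously over the whole value-function class, and (ii) verify that the iterates generated by PBRL's pessimistic update indeed lie in a class of controlled covering number with the stated parameter ranges; everything else (the cancellation identity and Cauchy--Schwarz) is routine algebra.
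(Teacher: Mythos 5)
Your proposal is correct and follows essentially the same route as the paper's proof: the cancellation you obtain by setting $y=\cT V_{t+1}(s^{\ood},a^{\ood})$ is exactly the paper's decomposition into an in-sample term and an OOD term that vanishes under this choice of target, after which the in-sample term is bounded by Cauchy--Schwarz in the $\Lambda_t^{-1}$-norm and the standard self-normalized concentration (with a covering-number union bound over the value-function class), which the paper defers to the cited LSVI/pessimistic-value-iteration analyses. Your explicit identification of the uniform-concentration step as the only nontrivial ingredient matches what those cited works supply.
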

We refer to \S\ref{app:thm} for a detailed discussion and proof. 
Theorem \ref{thm::OOD_gamma_main} shows that if we set $y = \mathcal{T} V_{t+1}(s^{\text{ood}}, a^{\text{ood}})$, the bootstrapped uncertainty based on disagreement among ensembles is a valid $\xi$-uncertainty quantifier. However, such an OOD target is impossible to obtain in practice as it requires knowing the transition at the OOD datapoint $(s^{\text{ood}}, a^{\text{ood}})$. In practice, if TD error is sufficiently minimized, then $Q(s^{\text{ood}}, a^{\text{ood}})$ should well estimate the target $\mathcal{T} V_{t+1}$. Thus, in PBRL, we utilize 
\begin{equation}
\label{eq::thm_ood_target}
y = Q(s^{\text{ood}}, a^{\text{ood}}) - \Gamma^{\rm lcb}(s^{\text{ood}}, a^{\text{ood}})
\end{equation}
as the OOD target, where we introduce an additional penalization $\Gamma^{\rm lcb}(s^{\text{ood}}, a^{\text{ood}})$ to enforce pessimism. 
In addition, we remark that in theory, we require that the embeddings of the OOD sample are isotropic in the sense that the eigenvalues of the corresponding covariate matrix $\Lambda_{\mathrm{ood}}$ are lower bounded. Such isotropic property can be guaranteed by randomly generating states and actions. In practice, we find that randomly generating states is more expensive than randomly generating actions. Meanwhile, we observe that randomly generating actions alone are sufficient to guarantee reasonable empirical performance since the generated OOD embeddings are sufficiently isotropic. Thus, in our experiments, we randomly generate OOD actions according to our current policy and sample OOD states from the in-distribution dataset. 

\section{Related Works}

Previous model-free offline RL algorithms typically rely on policy constraints to restrict the learned policy from producing the OOD actions. In particular, previous works add behavior cloning (BC) loss in policy training \citep{bcq-2019,td3bc-2021,emaq-2021}, measure the divergence between the behavior policy and the learned policy \citep{bear-2019,brac-2019,fisher-2021}, apply advantage-weighted constraints to balance BC and advantages \citep{keep-2020,CRR-2020}, penalize the prediction-error of a variational auto-encoder \citep{anti-2021}, and learn latent actions (or primitives) from the offline data \citep{psla-2020,opal-2021}. Nevertheless, such methods may cause overly conservative value functions and are easily affected by the behavior policy \citep{awac-2020,offline-online-2021}. We remark that the OOD actions that align closely with the support of offline data could also be trustworthy. CQL \citep{cql-2020} directly minimizes the $Q$-values of OOD samples and thus casts an implicit policy constraint. Our method is related to CQL in the sense that both PBRL and CQL enforce conservatism in $Q$-learning. In contrast, we conduct explicit uncertainty quantification for OOD actions, while CQL penalizes the $Q$-values of all OOD samples equally.

In contrast with model-free algorithms, model-based algorithms learn the dynamics model directly with supervised learning. Similar to our work, MOPO \citep{mopo-2020} and MOReL \citep{morel-2020} incorporate ensembles of dynamics-models for uncertainty quantification, and penalize the value function through pessimistic updates. Other than the uncertainty quantification, previous model-based methods also attempt to constrain the learned policy through BC loss \citep{deploy-2020}, advantage-weighted prior \citep{mabe-2021}, CQL-style penalty \citep{combo-2021}, and Riemannian submanifold \citep{gelato-2021}. Decision Transformer \citep{transformer-2021} builds a transformer-style dynamic model and casts the problem of offline RL as conditional sequence modeling. However, such model-based methods may suffer from additional computation costs and may perform suboptimally in complex environments \citep{pets-2018,mbpo-2020}. In contrast, PBRL conducts model-free learning and is less affected by such challenges.

Our method is related to the previous online RL exploration algorithms based on uncertainty quantification, including bootstrapped $Q$-networks \citep{ob2i-2021,sunrise-2021}, ensemble dynamics \citep{plan2explore-2020}, Bayesian NN \citep{ube-2018,bayesian-2018}, and distributional value functions \citep{dis-2019,ids-2019}. Uncertainty quantification is more challenging in offline RL than its online counterpart due to the limited coverage of offline data and the distribution shift of the learned policy. In model-based offline RL, MOPO \citep{mopo-2020} and MOReL \citep{morel-2020} incorporate ensemble dynamics-model for uncertainty quantification. BOPAH \citep{lee2020batch} combines uncertainty penalization and behavior-policy constraints. In model-free offline RL, UWAC \citep{UWAC-2020} adopts dropout-based uncertainty \citep{dropout-2016} while relying on policy constraints in learning value functions. In contrast, PBRL does not require additional policy constraints. In addition, according to the study in image prediction with data shift \citep{trust-2019}, the bootstrapped uncertainty is more robust to data shift than the dropout-based approach. EDAC \citep{EDAC-2021} is a concurrent work that uses the ensemble $Q$-network. Specifically, EDAC calculates the gradients of each Q-function and diversifies such gradients to ensure sufficient penalization for OOD actions. In contrast, PBRL penalizes the OOD actions through direct OOD sampling and the associated uncertainty quantification. 


Our algorithm is inspired by the recent advances in the theory of both online RL and offline RL. Previous works propose provably efficient RL algorithms under the linear MDP assumption for both the online setting \citep{lsvi-2020} and offline setting \citep{pevi-2021}, which we follow for our analysis. In addition, previous works also study the offline RL under the Bellman completeness assumptions \citep{modi2021model, uehara2021representation, xie2021bellman, zanette2021provable} and the model-based RL under the kernelized nonlinear regulator (KNR) setting \citep{kakade2020information,mania2020active, chang2021mitigating}. In contrast, our paper focus on model-free RL.


\section{Experiments}


In experiments, we include an additional algorithm named \textit{PBRL-prior}, which is a slight modification of PBRL by incorporating random prior functions \citep{prior-2018}. The random prior technique is originally proposed for online exploration in Bootstrapped DQN \citep{boot-2016}. Specifically, each $Q$-function in \textit{PBRL-Prior} contains a trainable network $Q^k_{\theta}$ and a prior network $p_k$, where $p_k$ is randomly initialized and is fixed in training. The prediction of each $Q$-function is the sum of the trainable network and the fixed prior,  $Q^k_p=Q^k_{\theta}+p_k$, where $Q^k_{\theta}$ and $p_k$ shares the same network architecture. The random prior function increases the diversity among ensemble members and improves the generalization of bootstrapped functions \citep{prior-2018}. We adopt SAC \citep{sac-2018} as the basic actor-critic architecture for both \textit{PBRL} and \textit{PBRL-prior}. We refer to \S\ref{app:alg} for the implementation details. 
The code is available at \url{https://github.com/Baichenjia/PBRL}.

In D4RL benchmark \citep{d4rl-2020} with various continuous-control tasks and datasets, we compare the baseline algorithms on the Gym and Adroit domains, which are more extensively studied in the previous research. We compare \textit{PBRL} and \textit{PBRL-Prior} with several state-of-the-art algorithms, including (\romannumeral1)~\textit{BEAR}~\citep{bear-2019} that enforces policy constraints through the MMD distance, (\romannumeral2)~\textit{UWAC} \citep{UWAC-2020} that improves \textit{BEAR} through dropout uncertainty-weighted update, (\romannumeral3)~\textit{CQL} \citep{cql-2020} that learns conservative value functions by minimizing $Q$-values of OOD actions, (\romannumeral4)~\textit{MOPO} \citep{mopo-2020} that quantifies the uncertainty through ensemble dynamics in a model-based setting, and (\romannumeral5)~\textit{TD3-BC} \citep{td3bc-2021}, which adopts adaptive BC constraint to regularize the policy in training.

\begin{table}[t]
\vspace{-1em}
\addtolength{\tabcolsep}{-6pt}
\renewcommand\arraystretch{0.8}
\small
\centering
\begin{tabular}{c@{\hspace{3pt}}l@{\hspace{-0.5pt}}r@{\hspace{-1pt}}lr@{\hspace{-0.5pt}}lr@{\hspace{-0.5pt}}lr@{\hspace{-0.5pt}}lr@{\hspace{-0.5pt}}lr@{\hspace{-0.5pt}}lr@{\hspace{-0.5pt}}lr}
\toprule
\multicolumn{1}{l}{}     &    & \multicolumn{2}{c}{BEAR}                                     & \multicolumn{2}{c}{UWAC}   & \multicolumn{2}{c}{CQL}   & \multicolumn{2}{c}{MOPO}    & \multicolumn{2}{c}{TD3-BC} & \multicolumn{2}{c}{\textbf{PBRL}} &   \multicolumn{2}{c}{\textbf{PBRL-Prior}}  \\
\midrule
\multirow{3}{*}{\rotatebox[origin=c]{90}{Random}} & HalfCheetah & \colorbox{white}{2.3}   & {\color[HTML]{525252} $\pm$0.0}  & \colorbox{white}{2.3} & {\color[HTML]{525252} $\pm$0.0}   & \colorbox{mine}{17.5} & {\color[HTML]{525252} $\pm$1.5}  & \colorbox{mine}{35.9}  & {\color[HTML]{525252} $\pm$2.9}  & \colorbox{white}{11.0}  & {\color[HTML]{525252} $\pm$1.1}  & \colorbox{white}{11.0} & {\color[HTML]{525252} $\pm$5.8} & \colorbox{white}{13.1} & {\color[HTML]{525252} $\pm$1.2} \\
    
    & Hopper      & \colorbox{white}{3.9}   & {\color[HTML]{525252} $\pm$2.3}  & \colorbox{white}{2.7} & {\color[HTML]{525252} $\pm$0.3}   & \colorbox{white}{7.9} & {\color[HTML]{525252} $\pm$0.4}  & \colorbox{white}{16.7}  & {\color[HTML]{525252} $\pm$12.2}  & \colorbox{white}{8.5}  & {\color[HTML]{525252} $\pm$0.6}  & \colorbox{mine}{26.8} & {\color[HTML]{525252} $\pm$9.3} & \colorbox{mine}{31.6} & {\color[HTML]{525252} $\pm$0.3} \\
    
    & Walker2d     & \colorbox{mine}{12.8}   & {\color[HTML]{525252} $\pm$10.2}  & \colorbox{white}{2.0} & {\color[HTML]{525252} $\pm$0.4}   & \colorbox{white}{5.1} & {\color[HTML]{525252} $\pm$1.3}  & \colorbox{white}{4.2}  & {\color[HTML]{525252} $\pm$5.7}  & \colorbox{white}{1.6}  & {\color[HTML]{525252} $\pm$1.7}  & \colorbox{white}{8.1} & {\color[HTML]{525252} $\pm$4.4} & \colorbox{mine}{8.8} & {\color[HTML]{525252} $\pm$6.3} \\
\midrule
\multirow{3}{*}{\rotatebox[origin=c]{90}{Medium}} & HalfCheetah  & \colorbox{white}{43.0}   & {\color[HTML]{525252} $\pm$0.2}  & \colorbox{white}{42.2} & {\color[HTML]{525252} $\pm$0.4}   & \colorbox{white}{47.0} & {\color[HTML]{525252} $\pm$0.5}  & \colorbox{mine}{73.1}  & {\color[HTML]{525252} $\pm$2.4}  & \colorbox{white}{48.3} & {\color[HTML]{525252} $\pm$0.3}  & \colorbox{white}{57.9} & {\color[HTML]{525252} $\pm$1.5} & \colorbox{white}{58.2} & {\color[HTML]{525252} $\pm$1.5} \\
    
    & Hopper       & \colorbox{white}{51.8}   & {\color[HTML]{525252} $\pm$4.0}  & \colorbox{white}{50.9} & {\color[HTML]{525252} $\pm$4.4}   & \colorbox{white}{53.0} & {\color[HTML]{525252} $\pm$28.5}  & \colorbox{white}{38.3}  & {\color[HTML]{525252} $\pm$34.9}  & \colorbox{white}{59.3}  & {\color[HTML]{525252} $\pm$4.2}  & \colorbox{mine}{75.3} & {\color[HTML]{525252} $\pm$31.2} & \colorbox{mine}{81.6} & {\color[HTML]{525252} $\pm$14.5} \\

    & Walker2d     & \colorbox{white}{-0.2}   & {\color[HTML]{525252} $\pm$0.1}  & \colorbox{white}{75.4} & {\color[HTML]{525252} $\pm$3.0}   & \colorbox{white}{73.3} & {\color[HTML]{525252} $\pm$17.7}  & \colorbox{white}{41.2}  & {\color[HTML]{525252} $\pm$30.8}  & \colorbox{mine}{83.7}  & {\color[HTML]{525252} $\pm$2.1}  & \colorbox{mine}{89.6} & {\color[HTML]{525252} $\pm$0.7} & \colorbox{mine}{90.3} & {\color[HTML]{525252} $\pm$1.2} \\

\midrule
\multirow{3}{*}{\rotatebox[origin=c]{90}{\shortstack{Medium\\Replay}}} & HalfCheetah  & \colorbox{white}{36.3}   & {\color[HTML]{525252} $\pm$3.1}  & \colorbox{white}{35.9} & {\color[HTML]{525252} $\pm$3.7}   & \colorbox{white}{45.5} & {\color[HTML]{525252} $\pm$0.7}  & \colorbox{mine}{69.2}  & {\color[HTML]{525252} $\pm$1.1}  & \colorbox{white}{44.6}  & {\color[HTML]{525252} $\pm$0.5}  & \colorbox{white}{45.1} & {\color[HTML]{525252} $\pm$8.0} & \colorbox{white}{49.5} & {\color[HTML]{525252} $\pm$0.8} \\

    & Hopper       & \colorbox{white}{52.2}   & {\color[HTML]{525252} $\pm$19.3}  & \colorbox{white}{25.3} & {\color[HTML]{525252} $\pm$1.7}   & \colorbox{white}{88.7} & {\color[HTML]{525252} $\pm$12.9}  & \colorbox{white}{32.7}  & {\color[HTML]{525252} $\pm$9.4}  & \colorbox{white}{60.9}  & {\color[HTML]{525252} $\pm$18.8}  & \colorbox{mine}{100.6} & {\color[HTML]{525252} $\pm$1.0} & \colorbox{mine}{100.7} & {\color[HTML]{525252} $\pm$0.4} \\

	& Walker2d     & \colorbox{white}{7.0}   & {\color[HTML]{525252} $\pm$7.8}  & \colorbox{white}{23.6} & {\color[HTML]{525252} $\pm$6.9}   & \colorbox{mine}{81.8} & {\color[HTML]{525252} $\pm$2.7}  & \colorbox{white}{73.7}  & {\color[HTML]{525252} $\pm$9.4}  & \colorbox{mine}{81.8}  & {\color[HTML]{525252} $\pm$5.5}  & \colorbox{white}{77.7} & {\color[HTML]{525252} $\pm$14.5} & \colorbox{mine}{86.2} & {\color[HTML]{525252} $\pm$3.4} \\

\midrule
\multirow{3}{*}{\rotatebox[origin=c]{90}{\shortstack{Medium\\Expert}}} & HalfCheetah  & \colorbox{white}{46.0}   & {\color[HTML]{525252} $\pm$4.7}  & \colorbox{white}{42.7} & {\color[HTML]{525252} $\pm$0.3}   & \colorbox{white}{75.6} & {\color[HTML]{525252} $\pm$25.7}  & \colorbox{white}{70.3}  & {\color[HTML]{525252} $\pm$21.9}  & \colorbox{mine}{90.7}  & {\color[HTML]{525252} $\pm$4.3}  & \colorbox{mine}{92.3} & {\color[HTML]{525252} $\pm$1.1} & \colorbox{mine}{93.6} & {\color[HTML]{525252} $\pm$2.3} \\

   & Hopper      & \colorbox{white}{50.6}   & {\color[HTML]{525252} $\pm$25.3}  & \colorbox{white}{44.9} & {\color[HTML]{525252} $\pm$8.1}   & \colorbox{mine}{105.6} & {\color[HTML]{525252} $\pm$12.9}  & \colorbox{white}{60.6}  & {\color[HTML]{525252} $\pm$32.5}  & \colorbox{white}{98.0}  & {\color[HTML]{525252} $\pm$9.4}  & \colorbox{mine}{110.8} & {\color[HTML]{525252} $\pm$0.8} & \colorbox{mine}{111.2} & {\color[HTML]{525252} $\pm$0.7} \\
   
   & Walker2d     & \colorbox{white}{22.1}   & {\color[HTML]{525252} $\pm$44.9}  & \colorbox{white}{96.5} & {\color[HTML]{525252} $\pm$9.1}   & \colorbox{white}{107.9} & {\color[HTML]{525252} $\pm$1.6}  & \colorbox{white}{77.4}  & {\color[HTML]{525252} $\pm$27.9}  & \colorbox{mine}{110.1}  & {\color[HTML]{525252} $\pm$0.5}  & \colorbox{mine}{110.1} & {\color[HTML]{525252} $\pm$0.3} & \colorbox{mine}{109.8} & {\color[HTML]{525252} $\pm$0.2} \\
\midrule
\multirow{3}{*}{\rotatebox[origin=c]{90}{Expert}} & HalfCheetah  & \colorbox{white}{92.7}   & {\color[HTML]{525252} $\pm$0.6}  & \colorbox{white}{92.9} & {\color[HTML]{525252} $\pm$0.6}   & \colorbox{mine}{96.3} & {\color[HTML]{525252} $\pm$1.3}  & \colorbox{white}{81.3}  & {\color[HTML]{525252} $\pm$21.8}  & \colorbox{mine}{96.7}  & {\color[HTML]{525252} $\pm$1.1}  & \colorbox{white}{92.4} & {\color[HTML]{525252} $\pm$1.7} & \colorbox{mine}{96.2} & {\color[HTML]{525252} $\pm$2.3} \\

    & Hopper       & \colorbox{white}{54.6}   & {\color[HTML]{525252} $\pm$21.0}  & \colorbox{mine}{110.5} & {\color[HTML]{525252} $\pm$0.5}   & \colorbox{white}{96.5} & {\color[HTML]{525252} $\pm$28.0}  & \colorbox{white}{62.5}  & {\color[HTML]{525252} $\pm$29.0}  & \colorbox{white}{107.8}  & {\color[HTML]{525252} $\pm$7}  & \colorbox{mine}{110.5} & {\color[HTML]{525252} $\pm$0.4} & \colorbox{mine}{110.4} & {\color[HTML]{525252} $\pm$0.3} \\

	& Walker2d     & \colorbox{white}{106.6}   & {\color[HTML]{525252} $\pm$6.8}  & \colorbox{white}{108.4} & {\color[HTML]{525252} $\pm$0.4}   & \colorbox{white}{108.5} & {\color[HTML]{525252} $\pm$0.5}  & \colorbox{white}{62.4}  & {\color[HTML]{525252} $\pm$3.2}  & \colorbox{mine}{110.2}  & {\color[HTML]{525252} $\pm$0.3}  & \colorbox{mine}{108.3} & {\color[HTML]{525252} $\pm$0.3} & \colorbox{mine}{108.8} & {\color[HTML]{525252} $\pm$0.2} \\

\midrule
\multicolumn{1}{l}{}              & \textbf{Average}        & \colorbox{white}{38.78}   & {\color[HTML]{525252} $\pm$10.0}  & \colorbox{white}{50.41} & {\color[HTML]{525252} $\pm$2.7}   & \colorbox{mine}{67.35} & {\color[HTML]{525252} $\pm$9.1}  & \colorbox{white}{53.3}  & {\color[HTML]{525252} $\pm$16.3}  & \colorbox{mine}{67.55}  & {\color[HTML]{525252} $\pm$3.8}  & \colorbox{mine}{74.37} & {\color[HTML]{525252} $\pm$5.3} & \colorbox{mine}{76.66} & {\color[HTML]{525252} $\pm$2.4} \\
\bottomrule
\end{tabular}
\vspace{-1em}
\caption{Average normalized score and the standard deviation of all algorithms over five seeds in Gym. The highest performing scores are highlighted. The score of TD3-BC is the reported scores in Table 7 of \citet{td3bc-2021}. The scores for other baselines are obtained by re-training with the `v2' dataset of D4RL \citep{d4rl-2020}.} \label{table:results}
\vspace{-1em}
\end{table}

\paragraph{Results in Gym domain.} The Gym domain includes three environments (HalfCheetah, Hopper, and Walker2d) with five dataset types (random, medium, medium-replay, medium-expert, and expert), leading to a total of $15$ problem setups. We train all the baseline algorithms in the latest released `v2' version dataset, which is also adopted in TD3-BC \citep{td3bc-2021} for evaluation. For methods that are originally evaluated on the `v0' dataset, we retrain with their respective official implementations on the `v2' dataset. We refer to \S\ref{app:alg} for the training details. We train each method for one million time steps and report the final evaluation performance through online interaction. Table~\ref{table:results} reports the normalized score for each task and the corresponding average performance. We find CQL and TD3-BC perform the best among all baselines, and PBRL outperforms the baselines in most of the tasks. In addition, PBRL-Prior slightly outperforms PBRL and is more stable in training with a reduced variance among different seeds. 
\begin{wrapfigure}{r}{0.45\textwidth}
\vspace{-10pt}
  \begin{center}
    \includegraphics[width=0.44\textwidth]{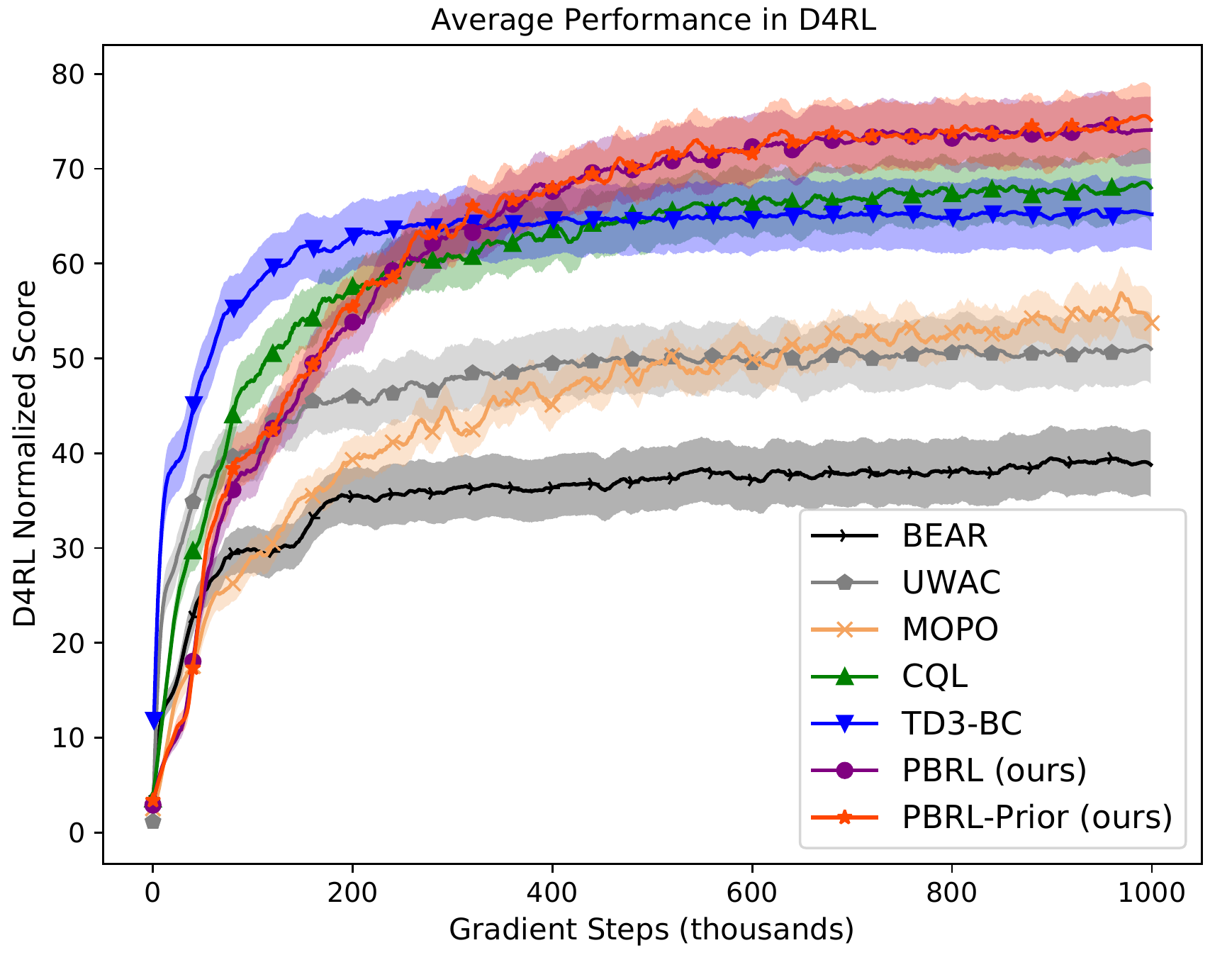}
  \end{center}
  \vspace{-15pt}
  \caption{Average training curve in Gym.}
  \vspace{-13pt}
  \label{fig:curve_d4rl}
\end{wrapfigure}

\vspace{-1em}
We observe that compared with the baseline algorithms, PBRL has strong advantages in the non-optimal datasets, including medium, medium-replay, and medium-expert. In addition, compared with the policy-constraint baselines, PBRL exploits the optimal trajectory covered in the dataset in a theoretically grounded way and is less affected by the behavior policy. We report the average training curves in Fig.~\ref{fig:curve_d4rl}. In addition, we remark that the performance of PBRL and PBRL-Prior are weaker than TD3-BC and CQL in the early stage of training, indicating that the uncertainty quantification is inaccurate initially. Nevertheless, PBRL and PBRL-Prior converge to better policies that outperform the baselines in the learning of uncertainty quantifiers, demonstrating the effectiveness of uncertainty penalization and OOD sampling.
\vspace{-1em}

\paragraph{Results in Adroit domain.} 
The adroit tasks are more challenging than the Gym domain in task complexity. 
In addition, the use of human demonstration in the dataset makes the task even more challenging in the offline setting. We defer the results to \S\ref{app:adroit}. We observe that CQL and BC have the best average performance in all baselines, and PBRL outperforms baselines in most of the tasks.

\begin{figure*}[t]
\centering
\subfigure[Walker2d-Medium-Replay]{\includegraphics[width=2.6in]{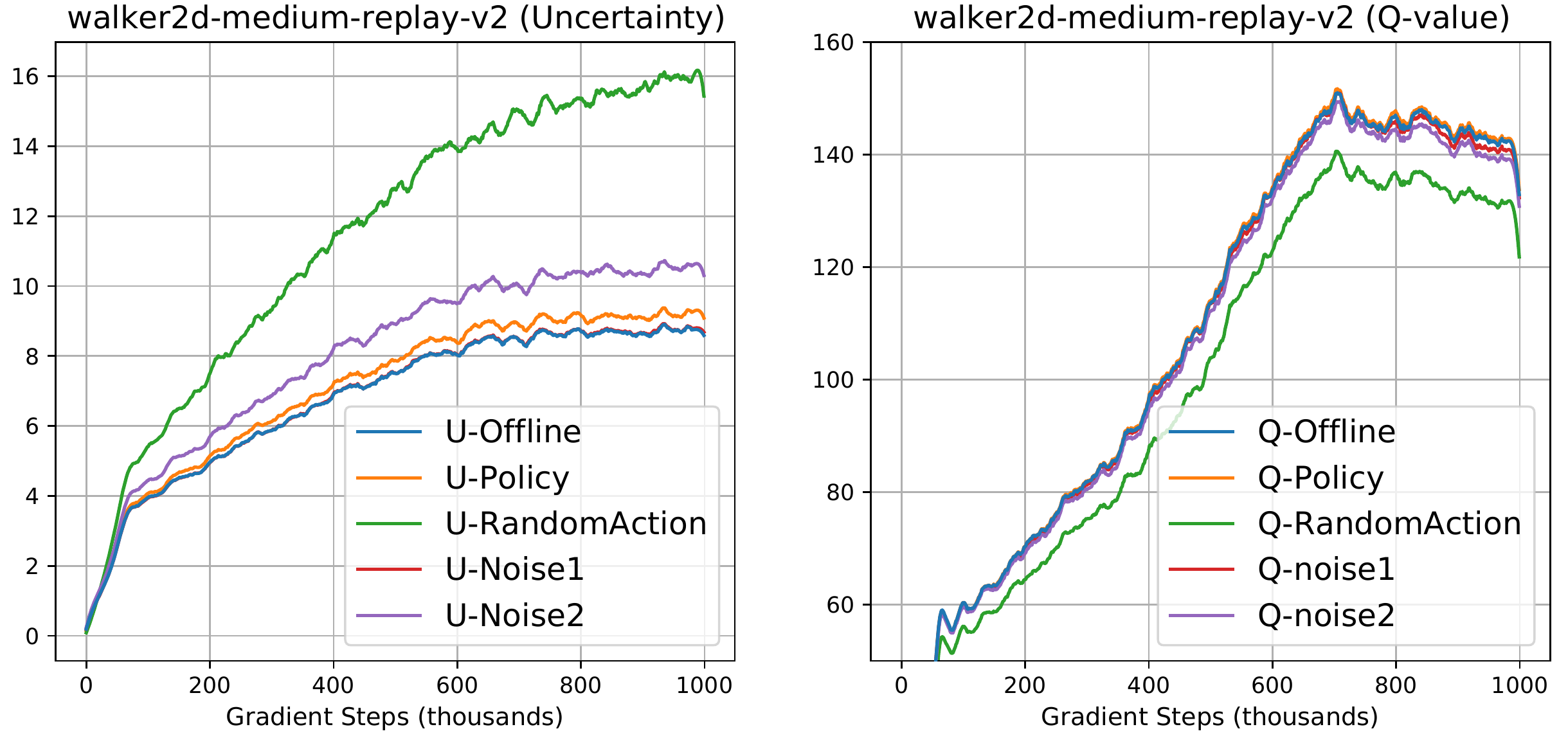}\label{fig:Uncer-Exp-1}}
\hspace{0.6em}
\subfigure[Walker2d-Medium]{\includegraphics[width=2.6in]{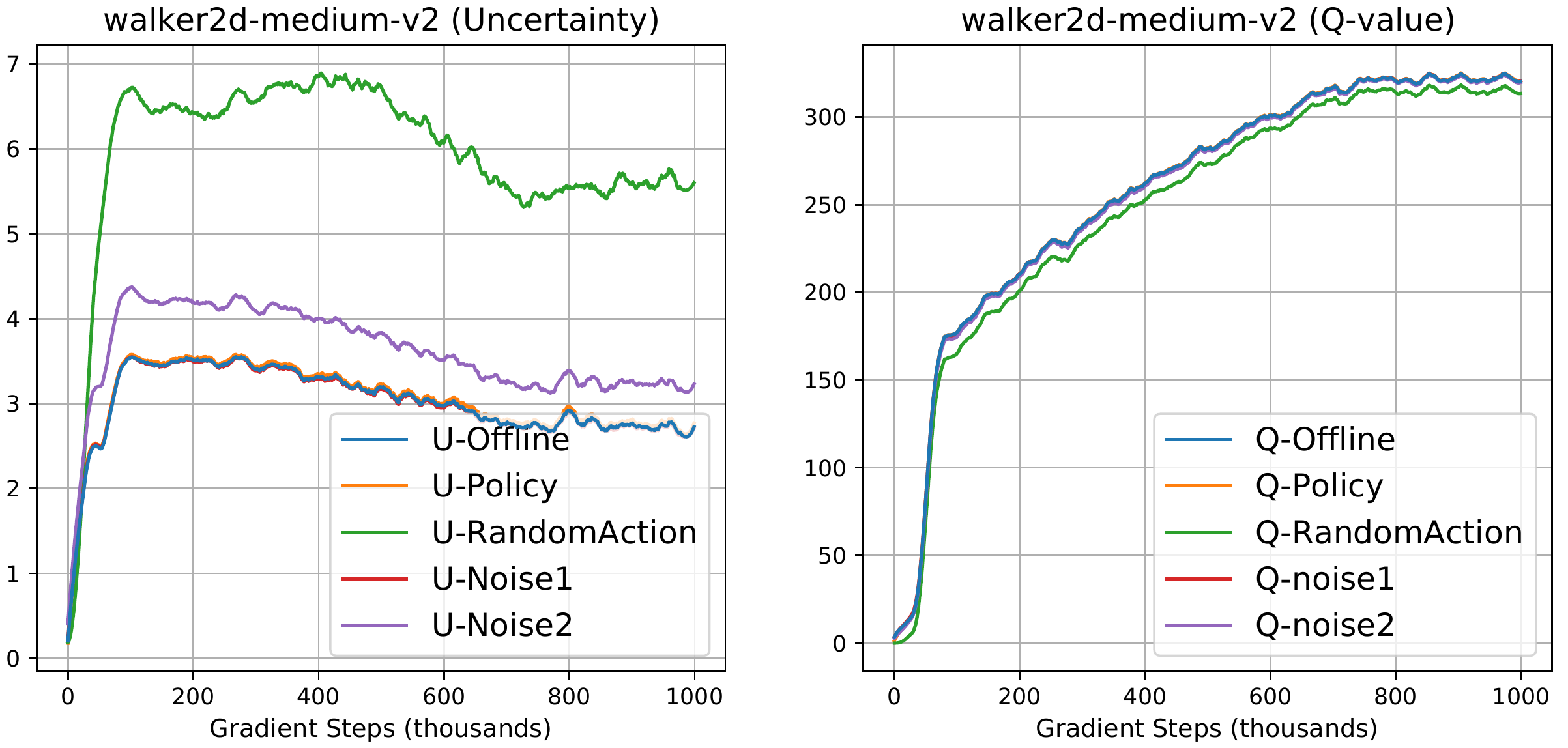}\label{fig:Uncer-Exp-2}}
\vspace{-1em}
\caption{The uncertainty and $Q$-value for different state-action sets in the training process.}
\vspace{-1em}
\label{fig:Uncer-Exp}
\end{figure*}

\vspace{-1em}
\paragraph{Uncertainty quantification.} 
To verify the effectiveness of the bootstrapped uncertainty quantification, we record the uncertainty quantification for different sets of state-action pairs in training. In our experiments, we consider sets that have the same states from the offline dataset but with different types of actions, including (\romannumeral1)~$a_{\rm offline}$,  which is drawn from the offline in-distribution transition; (\romannumeral2)~$a_{\rm policy}$, which is selected by the training policy; (\romannumeral3)~$a_{\rm rand}$, which is uniformly sampled from the action space of the corresponding task; (\romannumeral4)~$a_{\rm noise1}=a_{\rm offline}+\mathcal{N}(0, 0.1)$, which adds a small Gaussian noise onto the offline action to represent state-action pair that is close to in-distribution data; and (\romannumeral5)~$a_{\rm noise2}=a_{\rm offline}+\mathcal{N}(0,0.5)$, which adds a large noise to represent the OOD action. 

We compute the uncertainty and $Q$-value in `Walker2d' task with two datasets (`medium-replay' and `medium'). The results are shown in Fig.~\ref{fig:Uncer-Exp}. We observe that (\romannumeral1) PBRL yields large uncertainties for $(s,a_{\rm noise2})$ and $(s,a_{\rm random})$, indicating that the uncertainty quantification is high on OOD samples. (\romannumeral2) The $(s,a_{\rm offline})$ pair has the smallest uncertainty in both settings as it is an in-distribution sample. (\romannumeral3) The $(s,a_{\rm noise1})$ pair has slightly higher uncertainty compared to $(s,a_{\rm offline})$, showing that the uncertainty quantification rises smoothly from the in-distribution actions to the OOD actions. (\romannumeral4) The $Q$-function of the learned policy $(s,a_{\rm policy})$ is reasonable and does not deviate much from the in-distribution actions, which shows that the learned policy does not take actions with high uncertainty 
due to the penalty with uncertainty quantification. In addition, we observe that there is no superior maximum for OOD actions, indicating that by incorporating the uncertainty quantification and OOD sampling, the $Q$-functions obtained by PBRL does not suffer from the extrapolation error.

\paragraph{Ablation study.} 
In the following, we briefly report the result of the ablation study. We refer to \S\ref{app:abla} and \S\ref{app:regu} for the details. (\romannumeral1) \textit{Number of bootstrapped-$Q$}. We attempt different numbers $K$ of bootstrapped-$Q$ in PBRL, and find the performance to be reasonable for $K\geq6$. (\romannumeral2) \textit{Penalization in $\widehat\cT^{\rm in}$}. We conduct experiments with the penalized reward discussed in \S\ref{sec:pess} and find the penalized reward does not improve the performance. (\romannumeral3) \textit{Factor $\beta_{\rm in}$ in $\widehat\cT^{\rm in}$}. We conduct experiments with different $\beta_{\rm in}$ from $\{0.1,0.01,0.001,0.0001\}$ to study the sensitiveness. Our experiments show that PBRL performs the best for $\beta_{\rm in}\in[0.0001,0.01]$. (\romannumeral4) \textit{Factor $\beta_{\rm ood}$}. We use a large $\beta_{\rm ood}$ at the initial training stage to enforce strong regularization over the OOD actions, and gradually decrease $\beta_{\rm ood}$ in training. We conduct experiments by setting $\beta_{\rm ood}$ to different constants, and find our decaying strategy generalizes better among tasks. (\romannumeral5) \textit{The learning target of OOD actions}. We change the learning target of OOD actions to the most pessimistic zero target $y=0$ and find such a setting leads to overly pessimistic value functions with suboptimal performance. (\romannumeral6) \textit{Regularization types}. We conduct experiments with different regularization types, including our proposed OOD sampling, $L_2$-regularization, spectral normalization, pessimistic initialization, and no regularization. We find OOD sampling the only reasonable regularization strategy and defer the complete report to \S\ref{app:regu}.

\section{Conclusion}

In this paper, we propose PBRL, an uncertainty-based model-free offline RL algorithm. We propose bootstrapped uncertainty to guide the provably efficient pessimism, and a novel OOD sampling technique to regularize the OOD actions. PBRL is closely related to the provable efficient offline RL algorithm under the linear MDP assumption. Our experiments show that PBRL outperforms several strong offline RL baselines in the D4RL environments. 
PBRL exploits the optimal trajectories contained in the suboptimal dataset and is less affected by the behavior policy. Meanwhile, we show that PBRL produces reliable uncertainty quantifications incorporated with OOD sampling.


\clearpage




\bibliography{iclr2022_conference}

\begin{thebibliography}{64}
\providecommand{\natexlab}[1]{#1}
\providecommand{\url}[1]{\texttt{#1}}
\expandafter\ifx\csname urlstyle\endcsname\relax
  \providecommand{\doi}[1]{doi: #1}\else
  \providecommand{\doi}{doi: \begingroup \urlstyle{rm}\Url}\fi

\bibitem[Abbasi-Yadkori et~al.(2011)Abbasi-Yadkori, P{\'a}l, and
  Szepesv{\'a}ri]{bandit-2011}
Yasin Abbasi-Yadkori, D{\'a}vid P{\'a}l, and Csaba Szepesv{\'a}ri.
\newblock Improved algorithms for linear stochastic bandits.
\newblock In \emph{Advances in neural information processing systems},
  volume~24, pp.\  2312--2320, 2011.

\bibitem[Agarwal et~al.(2021)Agarwal, Schwarzer, Castro, Courville, and
  Bellemare]{agarwal2021deep}
Rishabh Agarwal, Max Schwarzer, Pablo~Samuel Castro, Aaron Courville, and
  Marc~G Bellemare.
\newblock Deep reinforcement learning at the edge of the statistical precipice.
\newblock In \emph{Advances in neural information processing systems}, 2021.

\bibitem[Ajay et~al.(2021)Ajay, Kumar, Agrawal, Levine, and Nachum]{opal-2021}
Anurag Ajay, Aviral Kumar, Pulkit Agrawal, Sergey Levine, and Ofir Nachum.
\newblock Opal: Offline primitive discovery for accelerating offline
  reinforcement learning.
\newblock In \emph{International Conference on Learning Representations}, 2021.

\bibitem[An et~al.(2021)An, Moon, Kim, and Song]{EDAC-2021}
Gaon An, Seungyong Moon, Jang-Hyun Kim, and Hyun~Oh Song.
\newblock Uncertainty-based offline reinforcement learning with diversified
  q-ensemble.
\newblock In \emph{Advances in neural information processing systems}, 2021.

\bibitem[Azar et~al.(2017)Azar, Osband, and Munos]{azar2017minimax}
Mohammad~Gheshlaghi Azar, Ian Osband, and R{\'e}mi Munos.
\newblock Minimax regret bounds for reinforcement learning.
\newblock In \emph{International Conference on Machine Learning}, 2017.

\bibitem[Azizzadenesheli et~al.(2018)Azizzadenesheli, Brunskill, and
  Anandkumar]{bayesian-2018}
Kamyar Azizzadenesheli, Emma Brunskill, and Animashree Anandkumar.
\newblock Efficient exploration through bayesian deep q-networks.
\newblock In \emph{2018 Information Theory and Applications Workshop (ITA)},
  pp.\  1--9. IEEE, 2018.

\bibitem[Bai et~al.(2021)Bai, Wang, Han, Hao, Garg, Liu, and Wang]{ob2i-2021}
Chenjia Bai, Lingxiao Wang, Lei Han, Jianye Hao, Animesh Garg, Peng Liu, and
  Zhaoran Wang.
\newblock Principled exploration via optimistic bootstrapping and backward
  induction.
\newblock In \emph{International Conference on Machine Learning}, pp.\
  577--587. PMLR, 2021.

\bibitem[Cang et~al.(2021)Cang, Rajeswaran, Abbeel, and Laskin]{mabe-2021}
Catherine Cang, Aravind Rajeswaran, Pieter Abbeel, and Michael Laskin.
\newblock Behavioral priors and dynamics models: Improving performance and
  domain transfer in offline rl.
\newblock \emph{arXiv preprint arXiv:2106.09119}, 2021.

\bibitem[Chang et~al.(2021)Chang, Uehara, Sreenivas, Kidambi, and
  Sun]{chang2021mitigating}
Jonathan~D Chang, Masatoshi Uehara, Dhruv Sreenivas, Rahul Kidambi, and Wen
  Sun.
\newblock Mitigating covariate shift in imitation learning via offline data
  without great coverage.
\newblock \emph{arXiv preprint arXiv:2106.03207}, 2021.

\bibitem[Chen et~al.(2021)Chen, Lu, Rajeswaran, Lee, Grover, Laskin, Abbeel,
  Srinivas, and Mordatch]{transformer-2021}
Lili Chen, Kevin Lu, Aravind Rajeswaran, Kimin Lee, Aditya Grover, Michael
  Laskin, Pieter Abbeel, Aravind Srinivas, and Igor Mordatch.
\newblock Decision transformer: Reinforcement learning via sequence modeling.
\newblock In \emph{Advances in neural information processing systems}, 2021.

\bibitem[Chua et~al.(2018)Chua, Calandra, McAllister, and Levine]{pets-2018}
Kurtland Chua, Roberto Calandra, Rowan McAllister, and Sergey Levine.
\newblock Deep reinforcement learning in a handful of trials using
  probabilistic dynamics models.
\newblock \emph{Advances in Neural Information Processing Systems}, 31, 2018.

\bibitem[Fu et~al.(2020)Fu, Kumar, Nachum, Tucker, and Levine]{d4rl-2020}
Justin Fu, Aviral Kumar, Ofir Nachum, George Tucker, and Sergey Levine.
\newblock D4rl: Datasets for deep data-driven reinforcement learning.
\newblock \emph{arXiv preprint arXiv:2004.07219}, 2020.

\bibitem[Fujimoto \& Gu(2021)Fujimoto and Gu]{td3bc-2021}
Scott Fujimoto and Shixiang~Shane Gu.
\newblock A minimalist approach to offline reinforcement learning.
\newblock In \emph{Advances in neural information processing systems}, 2021.

\bibitem[Fujimoto et~al.(2018)Fujimoto, Hoof, and Meger]{td3-2018}
Scott Fujimoto, Herke Hoof, and David Meger.
\newblock Addressing function approximation error in actor-critic methods.
\newblock In \emph{International Conference on Machine Learning}, pp.\
  1587--1596. PMLR, 2018.

\bibitem[Fujimoto et~al.(2019)Fujimoto, Meger, and Precup]{bcq-2019}
Scott Fujimoto, David Meger, and Doina Precup.
\newblock Off-policy deep reinforcement learning without exploration.
\newblock In \emph{International Conference on Machine Learning}, pp.\
  2052--2062. PMLR, 2019.

\bibitem[Gal \& Ghahramani(2016)Gal and Ghahramani]{dropout-2016}
Yarin Gal and Zoubin Ghahramani.
\newblock Dropout as a bayesian approximation: Representing model uncertainty
  in deep learning.
\newblock In \emph{International Conference on Machine Learning}, pp.\
  1050--1059. PMLR, 2016.

\bibitem[Ghasemipour et~al.(2021)Ghasemipour, Schuurmans, and Gu]{emaq-2021}
Seyed Kamyar~Seyed Ghasemipour, Dale Schuurmans, and Shixiang~Shane Gu.
\newblock Emaq: Expected-max q-learning operator for simple yet effective
  offline and online rl.
\newblock In \emph{International Conference on Machine Learning}, pp.\
  3682--3691. PMLR, 2021.

\bibitem[Gogianu et~al.(2021)Gogianu, Berariu, Rosca, Clopath, Busoniu, and
  Pascanu]{spectral-2021}
Florin Gogianu, Tudor Berariu, Mihaela~C Rosca, Claudia Clopath, Lucian
  Busoniu, and Razvan Pascanu.
\newblock Spectral normalisation for deep reinforcement learning: An
  optimisation perspective.
\newblock In \emph{International Conference on Machine Learning}, volume 139,
  pp.\  3734--3744, 2021.

\bibitem[Haarnoja et~al.(2018)Haarnoja, Zhou, Abbeel, and Levine]{sac-2018}
Tuomas Haarnoja, Aurick Zhou, Pieter Abbeel, and Sergey Levine.
\newblock Soft actor-critic: Off-policy maximum entropy deep reinforcement
  learning with a stochastic actor.
\newblock In \emph{International Conference on Machine Learning}, pp.\
  1861--1870. PMLR, 2018.

\bibitem[Jaksch et~al.(2010)Jaksch, Ortner, and Auer]{jaksch2010near}
Thomas Jaksch, Ronald Ortner, and Peter Auer.
\newblock Near-optimal regret bounds for reinforcement learning.
\newblock \emph{Journal of Machine Learning Research}, 2010.

\bibitem[Janner et~al.(2019)Janner, Fu, Zhang, and Levine]{mbpo-2020}
Michael Janner, Justin Fu, Marvin Zhang, and Sergey Levine.
\newblock When to trust your model: Model-based policy optimization.
\newblock \emph{Advances in Neural Information Processing Systems},
  32:\penalty0 12519--12530, 2019.

\bibitem[Jin et~al.(2020)Jin, Yang, Wang, and Jordan]{lsvi-2020}
Chi Jin, Zhuoran Yang, Zhaoran Wang, and Michael~I Jordan.
\newblock Provably efficient reinforcement learning with linear function
  approximation.
\newblock In \emph{Conference on Learning Theory}, pp.\  2137--2143. PMLR,
  2020.

\bibitem[Jin et~al.(2021)Jin, Yang, and Wang]{pevi-2021}
Ying Jin, Zhuoran Yang, and Zhaoran Wang.
\newblock Is pessimism provably efficient for offline rl?
\newblock In \emph{International Conference on Machine Learning}, pp.\
  5084--5096. PMLR, 2021.

\bibitem[Kakade et~al.(2020)Kakade, Krishnamurthy, Lowrey, Ohnishi, and
  Sun]{kakade2020information}
Sham Kakade, Akshay Krishnamurthy, Kendall Lowrey, Motoya Ohnishi, and Wen Sun.
\newblock Information theoretic regret bounds for online nonlinear control.
\newblock \emph{arXiv preprint arXiv:2006.12466}, 2020.

\bibitem[Kidambi et~al.(2020)Kidambi, Rajeswaran, Netrapalli, and
  Joachims]{morel-2020}
Rahul Kidambi, Aravind Rajeswaran, Praneeth Netrapalli, and Thorsten Joachims.
\newblock Morel: Model-based offline reinforcement learning.
\newblock In \emph{Advances in Neural Information Processing Systems},
  volume~33, pp.\  21810--21823, 2020.

\bibitem[Kostrikov et~al.(2021)Kostrikov, Fergus, Tompson, and
  Nachum]{fisher-2021}
Ilya Kostrikov, Rob Fergus, Jonathan Tompson, and Ofir Nachum.
\newblock Offline reinforcement learning with fisher divergence critic
  regularization.
\newblock In \emph{International Conference on Machine Learning}, pp.\
  5774--5783. PMLR, 2021.

\bibitem[Kumar et~al.(2019)Kumar, Fu, Soh, Tucker, and Levine]{bear-2019}
Aviral Kumar, Justin Fu, Matthew Soh, George Tucker, and Sergey Levine.
\newblock Stabilizing off-policy q-learning via bootstrapping error reduction.
\newblock In \emph{Advances in Neural Information Processing Systems},
  volume~32, pp.\  11784--11794, 2019.

\bibitem[Kumar et~al.(2020)Kumar, Zhou, Tucker, and Levine]{cql-2020}
Aviral Kumar, Aurick Zhou, George Tucker, and Sergey Levine.
\newblock Conservative q-learning for offline reinforcement learning.
\newblock In H.~Larochelle, M.~Ranzato, R.~Hadsell, M.~F. Balcan, and H.~Lin
  (eds.), \emph{Advances in Neural Information Processing Systems}, volume~33,
  pp.\  1179--1191, 2020.

\bibitem[Lee et~al.(2020)Lee, Lee, Vrancx, Kim, and Kim]{lee2020batch}
Byungjun Lee, Jongmin Lee, Peter Vrancx, Dongho Kim, and Kee-Eung Kim.
\newblock Batch reinforcement learning with hyperparameter gradients.
\newblock In \emph{International Conference on Machine Learning}, pp.\
  5725--5735. PMLR, 2020.

\bibitem[Lee et~al.(2021{\natexlab{a}})Lee, Laskin, Srinivas, and
  Abbeel]{sunrise-2021}
Kimin Lee, Michael Laskin, Aravind Srinivas, and Pieter Abbeel.
\newblock Sunrise: A simple unified framework for ensemble learning in deep
  reinforcement learning.
\newblock In \emph{International Conference on Machine Learning}, pp.\
  6131--6141. PMLR, 2021{\natexlab{a}}.

\bibitem[Lee et~al.(2021{\natexlab{b}})Lee, Seo, Lee, Abbeel, and
  Shin]{offline-online-2021}
Seunghyun Lee, Younggyo Seo, Kimin Lee, Pieter Abbeel, and Jinwoo Shin.
\newblock Offline-to-online reinforcement learning via balanced replay and
  pessimistic q-ensemble.
\newblock In \emph{Annual Conference on Robot Learning}, 2021{\natexlab{b}}.

\bibitem[Levine et~al.(2020)Levine, Kumar, Tucker, and Fu]{levine2020offline}
Sergey Levine, Aviral Kumar, George Tucker, and Justin Fu.
\newblock Offline reinforcement learning: Tutorial, review, and perspectives on
  open problems.
\newblock \emph{arXiv preprint arXiv:2005.01643}, 2020.

\bibitem[Mandlekar et~al.(2021)Mandlekar, Xu, Wong, Nasiriany, Wang, Kulkarni,
  Fei-Fei, Savarese, Zhu, and Mart{\'\i}n-Mart{\'\i}n]{human-2021}
Ajay Mandlekar, Danfei Xu, Josiah Wong, Soroush Nasiriany, Chen Wang, Rohun
  Kulkarni, Li~Fei-Fei, Silvio Savarese, Yuke Zhu, and Roberto
  Mart{\'\i}n-Mart{\'\i}n.
\newblock What matters in learning from offline human demonstrations for robot
  manipulation.
\newblock In \emph{Annual Conference on Robot Learning}, 2021.

\bibitem[Mania et~al.(2020)Mania, Jordan, and Recht]{mania2020active}
Horia Mania, Michael~I Jordan, and Benjamin Recht.
\newblock Active learning for nonlinear system identification with guarantees.
\newblock \emph{arXiv preprint arXiv:2006.10277}, 2020.

\bibitem[Matsushima et~al.(2020)Matsushima, Furuta, Matsuo, Nachum, and
  Gu]{deploy-2020}
Tatsuya Matsushima, Hiroki Furuta, Yutaka Matsuo, Ofir Nachum, and Shixiang Gu.
\newblock Deployment-efficient reinforcement learning via model-based offline
  optimization.
\newblock In \emph{International Conference on Learning Representations}, 2020.

\bibitem[Mavrin et~al.(2019)Mavrin, Yao, Kong, Wu, and Yu]{dis-2019}
Borislav Mavrin, Hengshuai Yao, Linglong Kong, Kaiwen Wu, and Yaoliang Yu.
\newblock Distributional reinforcement learning for efficient exploration.
\newblock In \emph{International conference on machine learning}, pp.\
  4424--4434. PMLR, 2019.

\bibitem[Mirowski et~al.(2018)Mirowski, Grimes, Malinowski, Hermann, Anderson,
  Teplyashin, Simonyan, Zisserman, Hadsell, et~al.]{navigate-18}
Piotr Mirowski, Matt Grimes, Mateusz Malinowski, Karl~Moritz Hermann, Keith
  Anderson, Denis Teplyashin, Karen Simonyan, Andrew Zisserman, Raia Hadsell,
  et~al.
\newblock Learning to navigate in cities without a map.
\newblock \emph{Advances in Neural Information Processing Systems},
  31:\penalty0 2419--2430, 2018.

\bibitem[Mnih et~al.(2015)Mnih, Kavukcuoglu, Silver, Rusu, Veness, Bellemare,
  Graves, Riedmiller, Fidjeland, Ostrovski, Petersen, Beattie, Sadik,
  Antonoglou, King, Kumaran, Wierstra, Legg, and Hassabis]{DQN-2015}
Volodymyr Mnih, Koray Kavukcuoglu, David Silver, Andrei~A. Rusu, Joel Veness,
  Marc~G. Bellemare, Alex Graves, Martin~A. Riedmiller, Andreas Fidjeland,
  Georg Ostrovski, Stig Petersen, Charles Beattie, Amir Sadik, Ioannis
  Antonoglou, Helen King, Dharshan Kumaran, Daan Wierstra, Shane Legg, and
  Demis Hassabis.
\newblock Human-level control through deep reinforcement learning.
\newblock \emph{Nature}, 518:\penalty0 529--533, 2015.

\bibitem[Modi et~al.(2021)Modi, Chen, Krishnamurthy, Jiang, and
  Agarwal]{modi2021model}
Aditya Modi, Jinglin Chen, Akshay Krishnamurthy, Nan Jiang, and Alekh Agarwal.
\newblock Model-free representation learning and exploration in low-rank
  {MDP}s.
\newblock \emph{arXiv preprint arXiv:2102.07035}, 2021.

\bibitem[Nair et~al.(2020)Nair, Dalal, Gupta, and Levine]{awac-2020}
Ashvin Nair, Murtaza Dalal, Abhishek Gupta, and Sergey Levine.
\newblock Accelerating online reinforcement learning with offline datasets.
\newblock \emph{arXiv preprint arXiv:2006.09359}, 2020.

\bibitem[Nikolov et~al.(2019)Nikolov, Kirschner, Berkenkamp, and
  Krause]{ids-2019}
Nikolay Nikolov, Johannes Kirschner, Felix Berkenkamp, and Andreas Krause.
\newblock Information-directed exploration for deep reinforcement learning.
\newblock In \emph{International Conference on Learning Representations}, 2019.

\bibitem[Osband et~al.(2016)Osband, Blundell, Pritzel, and Van~Roy]{boot-2016}
Ian Osband, Charles Blundell, Alexander Pritzel, and Benjamin Van~Roy.
\newblock Deep exploration via bootstrapped dqn.
\newblock In \emph{Advances in Neural Information Processing Systems},
  volume~29, pp.\  4026--4034, 2016.

\bibitem[Osband et~al.(2018{\natexlab{a}})Osband, Aslanides, and
  Cassirer]{boot-2018}
Ian Osband, John Aslanides, and Albin Cassirer.
\newblock Randomized prior functions for deep reinforcement learning.
\newblock \emph{Advances in Neural Information Processing Systems}, 31,
  2018{\natexlab{a}}.

\bibitem[Osband et~al.(2018{\natexlab{b}})Osband, Aslanides, and
  Cassirer]{prior-2018}
Ian Osband, John Aslanides, and Albin Cassirer.
\newblock Randomized prior functions for deep reinforcement learning.
\newblock \emph{Advances in Neural Information Processing Systems}, 31,
  2018{\natexlab{b}}.

\bibitem[Ovadia et~al.(2019)Ovadia, Fertig, Ren, Nado, Sculley, Nowozin,
  Dillon, Lakshminarayanan, and Snoek]{trust-2019}
Yaniv Ovadia, Emily Fertig, Jie Ren, Zachary Nado, D~Sculley, Sebastian
  Nowozin, Joshua Dillon, Balaji Lakshminarayanan, and Jasper Snoek.
\newblock Can you trust your model's uncertainty? evaluating predictive
  uncertainty under dataset shift.
\newblock \emph{Advances in Neural Information Processing Systems},
  32:\penalty0 13991--14002, 2019.

\bibitem[O’Donoghue et~al.(2018)O’Donoghue, Osband, Munos, and
  Mnih]{ube-2018}
Brendan O’Donoghue, Ian Osband, Remi Munos, and Volodymyr Mnih.
\newblock The uncertainty bellman equation and exploration.
\newblock In \emph{International Conference on Machine Learning}, pp.\
  3836--3845, 2018.

\bibitem[Rezaeifar et~al.(2021)Rezaeifar, Dadashi, Vieillard, Hussenot, Bachem,
  Pietquin, and Geist]{anti-2021}
Shideh Rezaeifar, Robert Dadashi, Nino Vieillard, L{\'e}onard Hussenot, Olivier
  Bachem, Olivier Pietquin, and Matthieu Geist.
\newblock Offline reinforcement learning as anti-exploration.
\newblock \emph{arXiv preprint arXiv:2106.06431}, 2021.

\bibitem[Sekar et~al.(2020)Sekar, Rybkin, Daniilidis, Abbeel, Hafner, and
  Pathak]{plan2explore-2020}
Ramanan Sekar, Oleh Rybkin, Kostas Daniilidis, Pieter Abbeel, Danijar Hafner,
  and Deepak Pathak.
\newblock Planning to explore via self-supervised world models.
\newblock In \emph{International Conference on Machine Learning}, pp.\
  8583--8592. PMLR, 2020.

\bibitem[Siegel et~al.(2020)Siegel, Springenberg, Berkenkamp, Abdolmaleki,
  Neunert, Lampe, Hafner, Heess, and Riedmiller]{keep-2020}
Noah~Y Siegel, Jost~Tobias Springenberg, Felix Berkenkamp, Abbas Abdolmaleki,
  Michael Neunert, Thomas Lampe, Roland Hafner, Nicolas Heess, and Martin
  Riedmiller.
\newblock Keep doing what worked: Behavioral modelling priors for offline
  reinforcement learning.
\newblock In \emph{International Conference on Learning Representations}, 2020.

\bibitem[Sutton \& Barto(2018)Sutton and Barto]{sutton-18}
Richard~S Sutton and Andrew~G Barto.
\newblock \emph{Reinforcement learning: An introduction}.
\newblock MIT press, 2018.

\bibitem[Tennenholtz et~al.(2021)Tennenholtz, Baram, and Mannor]{gelato-2021}
Guy Tennenholtz, Nir Baram, and Shie Mannor.
\newblock {GELATO:} geometrically enriched latent model for offline
  reinforcement learning.
\newblock \emph{CoRR}, abs/2102.11327, 2021.

\bibitem[Uehara et~al.(2021)Uehara, Zhang, and Sun]{uehara2021representation}
Masatoshi Uehara, Xuezhou Zhang, and Wen Sun.
\newblock Representation learning for online and offline {RL} in low-rank
  {MDP}s.
\newblock \emph{arXiv preprint arXiv:2110.04652}, 2021.

\bibitem[Wang et~al.(2020{\natexlab{a}})Wang, Du, Yang, and
  Salakhutdinov]{wang2020reward}
Ruosong Wang, Simon~S Du, Lin~F Yang, and Ruslan Salakhutdinov.
\newblock On reward-free reinforcement learning with linear function
  approximation.
\newblock In \emph{Advances in neural information processing systems},
  2020{\natexlab{a}}.

\bibitem[Wang et~al.(2020{\natexlab{b}})Wang, Novikov, Zolna, Merel,
  Springenberg, Reed, Shahriari, Siegel, Gulcehre, Heess, et~al.]{CRR-2020}
Ziyu Wang, Alexander Novikov, Konrad Zolna, Josh~S Merel, Jost~Tobias
  Springenberg, Scott~E Reed, Bobak Shahriari, Noah Siegel, Caglar Gulcehre,
  Nicolas Heess, et~al.
\newblock Critic regularized regression.
\newblock \emph{Advances in Neural Information Processing Systems}, 33,
  2020{\natexlab{b}}.

\bibitem[Wu et~al.(2019)Wu, Tucker, and Nachum]{brac-2019}
Yifan Wu, George Tucker, and Ofir Nachum.
\newblock Behavior regularized offline reinforcement learning.
\newblock \emph{arXiv preprint arXiv:1911.11361}, 2019.

\bibitem[Wu et~al.(2021)Wu, Zhai, Srivastava, Susskind, Zhang, Salakhutdinov,
  and Goh]{UWAC-2020}
Yue Wu, Shuangfei Zhai, Nitish Srivastava, Joshua~M. Susskind, Jian Zhang,
  Ruslan Salakhutdinov, and Hanlin Goh.
\newblock Uncertainty weighted actor-critic for offline reinforcement learning.
\newblock In \emph{International Conference on Machine Learning}, volume 139,
  pp.\  11319--11328, 2021.

\bibitem[Xie et~al.(2021{\natexlab{a}})Xie, Cheng, Jiang, Mineiro, and
  Agarwal]{xie2021bellman}
Tengyang Xie, Ching-An Cheng, Nan Jiang, Paul Mineiro, and Alekh Agarwal.
\newblock Bellman-consistent pessimism for offline reinforcement learning.
\newblock In \emph{Advances in neural information processing systems},
  2021{\natexlab{a}}.

\bibitem[Xie et~al.(2021{\natexlab{b}})Xie, Jiang, Wang, Xiong, and
  Bai]{xie2021policy}
Tengyang Xie, Nan Jiang, Huan Wang, Caiming Xiong, and Yu~Bai.
\newblock Policy finetuning: Bridging sample-efficient offline and online
  reinforcement learning.
\newblock \emph{arXiv preprint arXiv:2106.04895}, 2021{\natexlab{b}}.

\bibitem[Yang \& Wang(2019)Yang and Wang]{yang2019sample}
Lin Yang and Mengdi Wang.
\newblock Sample-optimal parametric {Q}-learning using linearly additive
  features.
\newblock In \emph{International Conference on Machine Learning}, pp.\
  6995--7004. PMLR, 2019.

\bibitem[Yu et~al.(2019)Yu, Liu, and Nemati]{healthcare}
Chao Yu, Jiming Liu, and Shamim Nemati.
\newblock Reinforcement learning in healthcare: A survey.
\newblock \emph{arXiv preprint arXiv:1908.08796}, 2019.

\bibitem[Yu et~al.(2020)Yu, Thomas, Yu, Ermon, Zou, Levine, Finn, and
  Ma]{mopo-2020}
Tianhe Yu, Garrett Thomas, Lantao Yu, Stefano Ermon, James~Y Zou, Sergey
  Levine, Chelsea Finn, and Tengyu Ma.
\newblock Mopo: Model-based offline policy optimization.
\newblock In \emph{Advances in Neural Information Processing Systems},
  volume~33, pp.\  14129--14142, 2020.

\bibitem[Yu et~al.(2021)Yu, Kumar, Rafailov, Rajeswaran, Levine, and
  Finn]{combo-2021}
Tianhe Yu, Aviral Kumar, Rafael Rafailov, Aravind Rajeswaran, Sergey Levine,
  and Chelsea Finn.
\newblock Combo: Conservative offline model-based policy optimization.
\newblock In \emph{Advances in neural information processing systems}, 2021.

\bibitem[Zanette et~al.(2021)Zanette, Wainwright, and
  Brunskill]{zanette2021provable}
Andrea Zanette, Martin~J Wainwright, and Emma Brunskill.
\newblock Provable benefits of actor-critic methods for offline reinforcement
  learning.
\newblock \emph{Advances in neural information processing systems}, 2021.

\bibitem[Zhou et~al.(2020)Zhou, Bajracharya, and Held]{psla-2020}
Wenxuan Zhou, Sujay Bajracharya, and David Held.
\newblock Plas: Latent action space for offline reinforcement learning.
\newblock In \emph{Conference on Robot Learning}, 2020.

\end{thebibliography}
\bibliographystyle{iclr2022_conference}

\clearpage
\appendix

\section{Theoretical Proof}\label{app:thm}

\subsection{Background of LCB-penalty in linear MDPs}
\label{sec::linear_MDP}
In this section, we introduce the provably efficient LCB-penalty in linear MDPs \citep{bandit-2011, lsvi-2020, pevi-2021}. We consider the setting of $\gamma = 1$ in the following. In linear MDPs, the feature map of the state-action pair takes the form of $\phi:\mathcal{S}\times\mathcal{A}\rightarrow\mathbb{R}^d$, and the transition kernel and reward function are assumed to be linear in $\phi$. As a result, for any policy $\pi$, the state-action value function is also linear in $\phi$ \citep{lsvi-2020}, that is, 
\begin{equation}
Q^\pi(s^i_t,a^i_t)=\widehat{w}_t^{\top}\phi(s^i_t,a^i_t).
\end{equation}
The parameter $w_t$ can be solved in the closed-form by following the Least-Squares Value Iteration (LSVI) algorithm, which minimizes the following loss function,
\begin{equation}
\label{eq::appendix_OOD_LSVI}
\widehat w_t = \min_{w\in \RR^d} \sum^m_{i = 1}\bigl(\phi(s^i_t, a^i_t)^\top w - r(s^i_t, a^i_t)- V_{t+1}(s^i_{t+1})\bigr)^2 + \lambda\cdot \|w\|_2^2,
\end{equation}
where $V_{t+1}$ is the estimated value function in the $(t+1)$-th step, and $r(s^i_t, a^i_t)+ V_{t+1}(s^i_{t+1})$ is the target of LSVI. The explicit solution to (\ref{eq::appendix_OOD_LSVI}) takes the form of
\begin{equation}
\label{eq::pf_tilde_w}
\widehat w_t = \Lambda^{-1}_t\sum^m_{i = 1}\phi(s^i_t, a^i_t)\bigl(V_{t+1}(s^i_{t+1}) + r(s^i_t, a^i_t)\bigr),\quad \Lambda_t = \sum^m_{i=1}\phi(s^i_t, a^i_t)\phi(s^i_t, a^i_t)^\top + \lambda \cdot \mathrm{\mathbf{I}}.
\end{equation}
Here $\Lambda_t$ accumulate the state-action features from the training buffer. Based on the solution of $w_t$, the action-value function can be estimated by $Q_t(s_t,a_t)\approx \widehat{w}_t^{\top}\phi(s_t,a_t)$. In addition, in offline RL with linear function assumption, the following LCB-penalty yields an uncertainty quantification,
\begin{equation}
\Gamma^{\rm lcb}(s_t,a_t)=\beta_t\cdot\big[\phi(s_t,a_t)^\top\Lambda_t^{-1}\phi(s_t,a_t)\big]^{\nicefrac{1}{2}},
\end{equation}
which measures the confidence interval of the $Q$-functions with the given training data \citep{bandit-2011, lsvi-2020, pevi-2021}. In offline RL, the pessimistic value function $\widehat{Q}_t(s_t,a_t)$ penalizes $Q_t$ by the uncertainty quantification $\Gamma^{\rm lcb}(s_t,a_t)$ as a penalty as follows,
\begin{equation}
\begin{aligned}
\widehat{Q}_t(s_t,a_t)=&\:Q_t(s_t,a_t)-\Gamma^{\rm lcb}(s_t,a_t)\\
=\:&w_t^{\top}\phi(s_t,a_t)-\Gamma^{\rm lcb}(s_t,a_t),
\end{aligned}
\end{equation}
where $w_t$ is defined in Eq.~(\ref{eq::appendix_OOD_LSVI}). Under the linear MDP setting, such pessimistic value iteration is known to be information-theoretically optimal \citep{pevi-2021}. In addition, exploration with $\Gamma^{\rm lcb}(s_t,a_t)$ as a bonus is also provably efficient in the online RL setting \citep{bandit-2011,lsvi-2020}.

\subsection{Connection between the bootstrapped uncertainty and $\Gamma^{\rm lcb}$}
\label{sec::pf_bonus}
In the sequel, we consider a Bayesian linear regression perspective of LSVI in Eq.~(\ref{eq::appendix_OOD_LSVI}). According to the Bellman equation, the objective of LSVI is to approximate the Bellman target $b^i_t=r(s^i_t, a^i_t)+V_{t+1}(s^i_{t+1})$ with the $Q$-function $Q_t$, where $V_{t+1}$ is the estimated value function in the $(t+1)$-th step. In linear MDPs, We parameterize the $Q$-function by $Q_t(s_t,a_t)=\widehat{w}_t^\top \phi(s_t,a_t)$. We further define the noise $\epsilon$ in this least-square problem as follows,
\begin{equation}\label{eq:yt}
\epsilon=b^i_t-{w}_t^\top \phi(s_t,a_t),
\end{equation}
where $w_t$ is the underlying true parameter. In the offline dataset with $\mathcal{D}_{\rm in}=\{(s^i_t, a^i_t, s^i_{t+1})\}_{i \in [m]}$, we denote by $\widehat w_t$ the Bayesian posterior of $w$ given the dataset $\mathcal{D}_{\rm in}$. In addition, we assume that we are given a Gaussian prior of the parameter $w \sim \mathcal N(0, \mathrm{\mathbf{I}}/\lambda)$ as a non-informative prior. The following Lemma establishes connections between bootstrapped uncertainty and the LCB-penalty $\Gamma^{\rm lcb}$.
\begin{lem}[Formal Version of Claim~\ref{lem:lcb-main}]\label{lam:linear}
We assume that $\epsilon$ follows the standard Gaussian distribution $\mathcal N(0, 1)$ given the state-action pair $(s^i_t, a^i_t)$. It then holds for the posterior $w_t$ given $\mathcal{D}_{\rm in}$ that
\begin{equation}
\text{\rm Var}_{\widehat w_t}\bigl( Q_t(s^i_t, a^i_t)\bigr) = \text{\rm Var}_{\widehat w_t}\bigl(\phi(s^i_t, a^i_t)^\top \widehat{w}_t \bigr) = \phi(s^i_t, a^i_t)^\top \Lambda^{-1}_t \phi(s^i_t, a^i_t), \quad \forall (s^i_t, a^i_t)\in\mathcal{S}\times\mathcal{A}.
\end{equation}
\end{lem}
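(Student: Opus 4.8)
The plan is to treat Eq.~(\ref{eq::appendix_OOD_LSVI}) as the MAP estimate of a conjugate Bayesian linear model and then read off the posterior covariance directly. First I would stack the offline data into the design matrix $\Phi_t = [\phi(s^1_t,a^1_t),\ldots,\phi(s^m_t,a^m_t)]^\top \in \RR^{m\times d}$ and the target vector $b_t = (b^1_t,\ldots,b^m_t)^\top$ with $b^i_t = r(s^i_t,a^i_t) + V_{t+1}(s^i_{t+1})$, so that the model in Eq.~(\ref{eq:yt}) reads $b_t = \Phi_t w_t + \epsilon$ with $\epsilon \sim \mathcal N(0, \mathrm{\mathbf{I}}_m)$, together with the non-informative Gaussian prior $w \sim \mathcal N(0, \mathrm{\mathbf{I}}/\lambda)$.

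Second, I would compute the posterior $p(w \given \mathcal D_{\rm in}) \propto \exp(-\tfrac12\|b_t - \Phi_t w\|_2^2)\,\exp(-\tfrac{\lambda}{2}\|w\|_2^2)$ by completing the square in $w$. The negative log-posterior is a quadratic form in $w$ whose Hessian is $\Phi_t^\top \Phi_t + \lambda\,\mathrm{\mathbf{I}} = \Lambda_t$, hence the posterior is Gaussian with covariance $\Lambda_t^{-1}$ and mean $\Lambda_t^{-1}\Phi_t^\top b_t$. The posterior mean is precisely the closed-form solution $\widehat w_t$ from Eq.~(\ref{eq::pf_tilde_w}), which confirms that the ridge estimator in Eq.~(\ref{eq::appendix_OOD_LSVI}) is the posterior (mode, equivalently mean) of $w$ under this model.

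Third, since $Q_t(s^i_t, a^i_t) = \phi(s^i_t, a^i_t)^\top \widehat w_t$ is a fixed linear functional of the Gaussian posterior vector $\widehat w_t$, its variance is $\phi(s^i_t, a^i_t)^\top \Cov(\widehat w_t)\, \phi(s^i_t, a^i_t) = \phi(s^i_t, a^i_t)^\top \Lambda_t^{-1} \phi(s^i_t, a^i_t)$, which is exactly the claimed identity. To close the loop with Claim~\ref{lem:lcb-main}, I would then observe that the ensemble quantity $\mathcal U(s^i_t, a^i_t)$ in Eq.~(\ref{eq:uncertain}) is the empirical standard deviation of the bootstrapped $Q$-functions, which approximates $\sqrt{\Var_{\widehat w_t}(Q_t(s^i_t, a^i_t))} = [\phi(s^i_t, a^i_t)^\top \Lambda_t^{-1} \phi(s^i_t, a^i_t)]^{1/2}$, so that $\beta_t \cdot \mathcal U$ estimates $\Gamma^{\rm lcb}$ up to the scaling $\beta_t$.

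The Gaussian conjugacy computation is entirely routine; the only genuinely delicate point is the passage from the bootstrapped ensemble to the Bayesian posterior, which is an approximation rather than an identity (it is asymptotically exact for the predictive variance under suitable designs, in the spirit of the bootstrap-as-posterior correspondence of \citet{boot-2018}). I would therefore state and prove the variance identity for the idealized posterior $\widehat w_t$ and explicitly flag the ensemble-to-posterior step as heuristic, consistent with the word ``estimation'' in the statement of Claim~\ref{lem:lcb-main}.
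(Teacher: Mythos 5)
Your proposal is correct and follows essentially the same route as the paper: both treat Eq.~(\ref{eq::appendix_OOD_LSVI}) as conjugate Bayesian linear regression with prior $\mathcal N(0,\mathrm{\mathbf{I}}/\lambda)$ and unit-variance Gaussian noise, complete the square to get the posterior $\mathcal N(\mu_t,\Lambda_t^{-1})$, and read off $\Var(\phi^\top\widehat w_t)=\phi^\top\Lambda_t^{-1}\phi$. Your explicit flagging of the ensemble-to-posterior step as an approximation matches the paper's own remarks following the lemma.
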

\begin{proof}
The proof follows the standard analysis of Bayesian linear regression. Under the assumption that $\epsilon\sim\mathcal{N}(0,1)$, we obtain that 
\begin{equation}\label{eq::pf_density_y}
b^i_t \:|\: (s^i_t, a^i_t), \widehat{w} \sim \mathcal{N}\bigl(\widehat{w}_t^\top \phi(s^i_t, a^i_t), 1\bigr).	
\end{equation}
Recall that we have the prior distribution $w \sim \mathcal N(0, \mathrm{\mathbf{I}}/\lambda)$. Our objective is to compute the posterior density $\widehat{w}_t = w \:|\: \mathcal{D}_{\rm in}$. It holds from Bayes rule that 
\begin{equation}\label{eq::pf_bayes_rule}
\log p(\widehat{w} \:|\: \mathcal{D}_{\rm in}) = \log p(\widehat{w}) + \log p(\mathcal{D}_{\rm in} \:|\: \widehat{w}) + {\rm Const.},
\end{equation}
where $p(\cdot)$ denote the probability density function of the respective distributions. Plugging (\ref{eq::pf_density_y}) and the probability density function of Gaussian distribution into (\ref{eq::pf_bayes_rule}) yields
\begin{equation}\label{eq::pf_density_posterior}
\begin{aligned}
\log p(\widehat{w}\:|\: \mathcal{D}_{\rm in}) &= -\|\widehat{w}\|^2/2 -\sum^m_{i=1} \| \widehat{w}^\top\phi(s^i_t, a^i_t) - y^i_t\|^2/2 + {\rm Const.}\notag\\
&=-(\widehat{w} - \mu_t)^\top \Lambda^{-1}_t(\widehat{w} - \mu_t)/2 + {\rm Const.},
\end{aligned}
\end{equation}
where we define 
\begin{equation}
\mu_t = \Lambda^{-1}_t \sum^m_{i = 1}\phi(s^i_t, a^i_t) y^i_t, \qquad \Lambda_t=\sum_{i=1}^{m}\phi(s_t^i,a_t^i)\phi(s_t^i,a_t^i)^\top+\lambda \cdot \mathrm{\mathbf{I}}.
\end{equation}
Then we obtain that $\widehat{w}_t = w\:|\: \mathcal{D}_{\rm in} \sim \mathcal N(\mu_t, \Lambda^{-1}_t)$. It then holds for all $(s_t, a_t)\in\mathcal{S}\times\mathcal{A}$ that
\begin{equation}
\text{\rm Var}\bigl(\phi(s^i_t, a^i_t)^\top \widehat{w}_t \bigr) =  \text{\rm Var}\bigl( Q_t(s^i_t, a^i_t)\bigr) = \phi(s^i_t, a^i_t)^\top \Lambda^{-1}_t \phi(s^i_t, a^i_t), 
\end{equation}
which concludes the proof.
\end{proof}

In Lemma \ref{lam:linear}, we show that the standard deviation of the $Q$-posterior is equivalent to the LCB-penalty $\text{\rm Var}\bigl( Q(s_t, a_t)\bigr) = \phi(s_t, a_t)^\top \Lambda^{-1}_t \phi(s_t, a_t)$ introduced in \S\ref{sec::linear_MDP}. Recall that our proposed bootstrapped uncertainty takes the form of
\begin{equation}
\mathcal{U}(s_t,a_t)\approx\text{\rm Std}\bigl( Q^k(s_t, a_t)\bigr),
\end{equation}
which is the standard deviation of the bootstrapped $Q$-functions. Such bootstrapping serves as an estimation of the posterior of $Q$-functions \citep{boot-2016}. Thus, our proposed uncertainty quantification can be seen as an estimation of the LCB-penalty under the linear MDP assumptions.



In addition, under the tabular setting where the states and actions are finite, the LCB-penalty takes a simpler form, which we show in the following lemma.
\begin{lem}\label{lemma:count}
In tabular MDPs, the bootstrapped uncertainty $\mathcal{U}(s,a)$ is approximately proportional to the reciprocal-count of $(s,a)$, that is,
\begin{equation}
\mathcal{U}(s,a)\approx\Gamma^{\rm lcb}(s,a)/\beta_t=\frac{1}{\sqrt{N_{s,a}+\lambda}}.
\end{equation}
\end{lem}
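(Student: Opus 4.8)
The plan is to specialize the linear-MDP machinery of \S\ref{sec::linear_MDP} to the tabular case, where the natural feature map is the one-hot (indicator) encoding $\phi(s,a) = e_{(s,a)} \in \RR^{|\cS||\cA|}$, so that $\phi(s,a)^\top \phi(s',a') = \1\{(s,a)=(s',a')\}$. First I would observe that under this choice the Gram matrix $\Lambda_t = \sum_{i=1}^m \phi(s_t^i,a_t^i)\phi(s_t^i,a_t^i)^\top + \lambda \cdot \mathrm{\mathbf{I}}$ becomes diagonal, with the $(s,a)$-th diagonal entry equal to $N_{s,a} + \lambda$, where $N_{s,a}$ is the visitation count of the pair $(s,a)$ in $\cD_{\rm in}$. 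Consequently $\Lambda_t^{-1}$ is diagonal with entries $1/(N_{s,a}+\lambda)$, and the quadratic form collapses to $\phi(s,a)^\top \Lambda_t^{-1}\phi(s,a) = 1/(N_{s,a}+\lambda)$.

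Next I would chain this with the two facts already established earlier in the excerpt. By Lemma~\ref{lam:linear}, the posterior variance of the $Q$-value satisfies $\mathrm{Var}_{\widehat w_t}(Q_t(s,a)) = \phi(s,a)^\top \Lambda_t^{-1}\phi(s,a)$, which now equals $1/(N_{s,a}+\lambda)$; taking square roots gives the posterior standard deviation $1/\sqrt{N_{s,a}+\lambda}$. By the discussion following Lemma~\ref{lam:linear}, the bootstrapped uncertainty $\mathcal{U}(s,a) \approx \mathrm{Std}(Q^k(s,a))$ is an estimate of this posterior standard deviation, so $\mathcal{U}(s,a) \approx 1/\sqrt{N_{s,a}+\lambda}$. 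Finally, by the definition of $\Gamma^{\rm lcb}$ in Eq.~(\ref{eq:lcb}), $\Gamma^{\rm lcb}(s,a)/\beta_t = [\phi(s,a)^\top \Lambda_t^{-1}\phi(s,a)]^{1/2} = 1/\sqrt{N_{s,a}+\lambda}$, which ties everything together and yields the claimed chain of approximate equalities.

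The bulk of this argument is bookkeeping; the only genuine content is recognizing that the tabular case is precisely the linear-MDP case with indicator features, making $\Lambda_t$ diagonal. The main obstacle — and really the only thing worth being careful about — is being explicit that the ``$\approx$'' symbols are carrying two distinct approximations of different character: one is the finite-ensemble sampling error in estimating the $Q$-posterior standard deviation by the empirical standard deviation over $K$ bootstrapped heads, and the other is the modeling approximation (Bayesian posterior versus bootstrap distribution) inherited from Lemma~\ref{lam:linear}. I would state the lemma as an identity at the level of the population LCB-penalty, i.e.\ $\Gamma^{\rm lcb}(s,a)/\beta_t = 1/\sqrt{N_{s,a}+\lambda}$ holds exactly, and then note that $\mathcal{U}(s,a)$ approximates this quantity via the bootstrap-as-posterior heuristic, so that no quantitative concentration bound is needed for the stated (deliberately informal) conclusion.
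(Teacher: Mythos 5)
Your proposal is correct and follows essentially the same route as the paper's proof: identify the tabular case as linear MDP with one-hot features, observe that $\Lambda_t$ becomes diagonal with entries $N_{s,a}+\lambda$, so the quadratic form collapses to $1/(N_{s,a}+\lambda)$, and then invoke Lemma~\ref{lam:linear} for the bootstrap-as-posterior interpretation of $\mathcal{U}(s,a)$. Your explicit separation of the exact identity for $\Gamma^{\rm lcb}/\beta_t$ from the heuristic approximation carried by $\mathcal{U}(s,a)$ is a nice clarification but does not change the argument.
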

\begin{proof}
In tabular MDPs, we consider the joint state-action space $d=|\mathcal{S}|\times|\mathcal{A}|$. Then $j$-th state-action pair can be encoded as a one-hot vector as $\phi(s,a)\in \mathbb{R}^{d}$, where $j\in [0,d-1]$, thus is a special case of the linear MDP \citep{yang2019sample, lsvi-2020}. Specifically, we define
\begin{equation}
\phi(s_j,a_j)=
\begin{bmatrix}
\begin{smallmatrix}
0\\
\vdots\\
1\\
\vdots\\
0
\end{smallmatrix}
\end{bmatrix}
\in \mathbb{R}^d,
\qquad
\phi(s_j,a_j)\phi(s_j,a_j)^{\top}=
\begin{bmatrix}
\begin{smallmatrix}
0      & \cdots & 0 & \cdots & 0\\
\vdots & \ddots &   &        & \vdots \\
0      &        & 1 &        & 0\\
\vdots &        &   & \ddots & \vdots \\
0      & \cdots & 0 & \cdots & 0\\
\end{smallmatrix}
\end{bmatrix}
\in \mathbb{R}^{d\times d},
\end{equation}
where the value of $\phi(s_j,a_j)$ is $1$ at the $j$-th entry and $0$ elsewhere. Then the matrix $\Lambda_j=\sum_{i=0}^{m}\phi(s_j^i,a_j^i)\phi(s_j^{i},a_j^{i})^\top+\lambda \cdot \mathrm{\mathbf{I}}$ is the sum of $\phi(s_j,a_j)\phi(s_j,a_j)^{\top}$ over $(s_j, a_j)\in\mathcal{D}_{\rm in}$, which takes the form of
\begin{equation}
\small
\Lambda_j=
\begin{bmatrix}
\begin{smallmatrix}
n_0+\lambda    & 0      &        & \cdots              & 0      \\
0      & n_1+\lambda    &        & \cdots              & 0      \\
\vdots &        & \ddots &                     & \vdots \\
0      & \cdots &        & \!\!\!\!\!\!\!\!n_j+\lambda & 0      \\
\vdots &        &        & \:\:\:\:\:\:\ddots  & \vdots \\
0      & \cdots &        & \cdots              & n_{d-1}+\lambda
\end{smallmatrix}
\end{bmatrix},
\end{equation}
where the $j$-th diagonal element of $\Lambda_j$ is the corresponding counts for state-action $(s_j,a_j)$, i.e., 
\[n_j=N_{s_j,a_j}.\] 
It thus holds that 
\begin{equation}\label{app::count}
\big[\phi(s_j,a_j)^{\top}\Lambda_j^{-1}\phi(s_j,a_j)\big]^{\nicefrac{1}{2}}=\frac{1}{\sqrt{N_{s_j,a_j}+\lambda}},
\end{equation}
which concludes the proof.
\end{proof}



\subsection{Regularization with OOD Sampling}
\label{sec::reg_theory}
In this section, we discuss how OOD sampling plays the role of regularization in RL, which regularizes the extrapolation behavior of the estimated $Q$-functions on OOD samples. 

Similar to \S\ref{sec::pf_bonus}, we consider the setup of LSVI-UCB \citep{lsvi-2020} under linear MDPs. Specifically, we assume that the transition dynamics and reward function takes the form of
\begin{equation}
\label{eq::pf_def_linearMDP}
\mathbb{P}_t(s_{t+1} \,|\, s_t, a_t) = \langle \psi(s_{t+1}), \phi(s_t, a_t) \rangle, \quad r(s_t, a_t) = \theta^\top \phi(s_t, a_t), \quad\forall(s_{t+1}, a_t, s_t)\in\mathcal{S}\times\mathcal{A}\times\mathcal{S},
\end{equation}
where the feature embedding $\phi: \cS\times\cA\mapsto \RR^d$ is known. We further assume that the reward function $r:\cS\times\cA\mapsto[0, 1]$ is bounded and the feature is bounded by $\|\phi\|_2 \leq 1$. Given the dataset $\cD_{\rm in}$, LSVI iteratively minimizes the least-square loss in Eq.~(\ref{eq::appendix_OOD_LSVI}).
Recall that the explicit solution to Eq.~(\ref{eq::appendix_OOD_LSVI}) takes the form of
\begin{equation}
\label{eq::pf_tilde_w2}
\widehat w_t = \Lambda^{-1}_t\sum^m_{i = 1}\phi(s^i_t, a^i_t)\bigl(V_{t+1}(s^i_{t+1}) + r(s^i_t, a^i_t)\bigr),\quad \Lambda_t = \sum^m_{i=1}\phi(s^i_t, a^i_t)\phi(s^i_t, a^i_t)^\top + \lambda \cdot \mathrm{\mathbf{I}}.
\end{equation}
We remark that for the regression problem in Eq.~(\ref{eq::appendix_OOD_LSVI}), the $L_2$-regularizer $\lambda\cdot \|w\|_2^2$ enforces a Gaussian prior under the notion of Bayesian regression. Such regularization ensures that the linear function approximation $\phi^\top w^i_t$ extrapolates well outside the region covered by the dataset $\cD_{\rm in}$. 

However, as shown in \S\ref{app:regu}, we observe that such $L_2$-regularization is ineffective for offline DRL. To this end, we propose \emph{OOD sampling} in our proposed PBRL. To demonstrate the effectiveness of OOD sampling as a regularizer, we consider the following least-squares loss with OOD sampling and without the $L_2$-regularizer,
\begin{equation}
\label{eq::appendix_OOD_LSVI1}
\widetilde w^i_t = \min_{w\in \RR^d} \sum^m_{i = 1}\bigl(\phi(s^i_t, a^i_t)^\top w - r(s^i_t, a^i_t)- V_{t+1}(s^i_{t+1})\bigr)^2 + \sum_{(s^{\text{ood}}, a^{\text{ood}}, y)\in \cD_{\text{ood}}}  \bigl(\phi(s^{\text{ood}}, a^{\text{ood}})^\top w - y\bigr)^2.
\end{equation}
The explicit solution of Eq.~(\ref{eq::appendix_OOD_LSVI1}) takes the form of
\begin{equation}\label{eq::appendix_w_ood_solu}
\widetilde w^i_t = \widetilde\Lambda^{-1}_t\biggl(\sum^m_{i = 1}\phi(s^i_t, a^i_t)\bigl( r(s^i_t, a^i_t) + V_{t+1}(s^i_{t+1})\bigr) + \sum_{(s^{\text{ood}}, a^{\text{ood}}, y)\in \cD_{\text{ood}}}\phi(s^{\text{ood}}, a^{\text{ood}})y \biggr),
\end{equation}
where
\begin{equation}
\label{eq::def_tilde_lambda}
\widetilde\Lambda_t = \sum^m_{i=1}\phi(s^i_t, a^i_t)\phi(s^i_t, a^i_t)^\top + \sum_{(s^{\text{ood}}, a^{\text{ood}}, y)\in \cD_{\text{ood}}} \phi(s^{\text{ood}}, a^{\text{ood}})\phi(s^{\text{ood}}, a^{\text{ood}})^\top.
\end{equation}
Hence, if we further set $y = 0$ for all $(s^{\text{ood}}, a^{\text{ood}}, y)\in\cD_{\text{ood}}$, then (\ref{eq::appendix_OOD_LSVI1}) enforces a Gaussian prior with the covariance matrix $\Lambda_{\text{ood}}^{-1}$, where we define
\begin{equation}
\Lambda_{\text{ood}} = \sum_{(s^{\text{ood}}, a^{\text{ood}}, y)\in \cD_{\text{ood}}} \phi(s^{\text{ood}}, a^{\text{ood}})\phi(s^{\text{ood}}, a^{\text{ood}})^\top.
\end{equation}
Specifically, if we further enforce $\cD_{\text{ood}} = \{(s^j, a^j, 0)\}_{j\in[d]}$ with $\phi(s^j, a^j) = \lambda\cdot e^j$, where $e^j\in\RR^d$ is the unit vector with the $j$-th entry equals one, it further holds that $\Lambda_{\text{ood}} = \lambda\cdot \mathrm{\mathbf{I}}$ and Eq.~(\ref{eq::appendix_OOD_LSVI1}) is equivalent to Eq.~(\ref{eq::appendix_OOD_LSVI}). In addition, under the tabular setting, by following the same proof as in Lemma~\ref{lemma:count}, having such OOD samples in the training is equivalent to setting the count in Eq.~(\ref{app::count}) to be
\begin{equation*}
    \widetilde N_{s_j, a_j} = N_{s_j, a_j} + N^{\text{ood}}_{s_j, a_j},
\end{equation*}
where $N_{s_j, a_j}$ is the occurrence of $(s_j, a_j)$ in the dataset and $N^{\text{ood}}_{s_j, a_j}$ is the occurrence of $(s_j ,a_j)$ in the OOD dataset.

However, to enforce such a regularizer without affecting the estimation of value functions, we need to set the target $y$ of the OOD samples to zero. In practice, we find such a setup to be overly pessimistic. Since the $Q$-network is smooth, such a strong regularizer enforces the $Q$-functions to be zero for state-action pairs from both the offline data and OOD data, as show in Fig.~\ref{fig:aba-zeroTarget1} and Fig.~\ref{fig:aba-zeroTarget2} of \S\ref{app:abla}. We remark that adopting a nonzero OOD target $y$ does not hinder the effect of regularization as it still imposes the same prior in the covariate matrix $\widetilde \Lambda_t$. However, adopting nonzero OOD target may introduce additional bias in the value function estimation and the corresponding uncertainty quantification. To maintain a consistent and pessimistic estimation of value functions, one needs to carefully design the nonzero OOD target $y$. 

To this end, we recall the definition of a $\xi$-uncertainty quantifier in Definition \ref{def1} as follows.
\begin{definition}[$\xi$-Uncertainty Quantifier \citep{pevi-2021}]
The set of penalization $\{\Gamma_t\}_{t\in[T]}$ forms a $\xi$-Uncertainty Quantifier if it holds with probability at least $1 - \xi$ that
\begin{equation*}
|\widehat \cT V_{t+1}(s, a) - \cT V_{t+1}(s, a)| \leq \Gamma_t(s, a)
\end{equation*}
for all $(s, a)\in\cS\times\cA$, where $\cT$ is the Bellman operator and $\widehat \cT$ is the empirical Bellman operator that estimates $\cT$ based on the data.
\end{definition}
We remark that here we slightly abuse the notation $\cT$ of Bellman operator and write $\cT V(s, a) = \EE[r(s, a) + V(s')\given s, a]$. Under the linear MDP setup, the empirical estimation $\widehat \cT V_{t+1}$ is obtained via fitting the least-squares loss in Eq.~(\ref{eq::appendix_OOD_LSVI1}). Thus, the empirical estimation $\widehat \cT V_{t+1}$ takes the following explicit form,
\begin{equation*}
\widehat\cT V_{t+1}(s_t, a_t) = \phi(s_t, a_t)^\top \widetilde w_t,
\end{equation*}
where $\widetilde w_t$ is the solution to the least-squares problem defined in Eq.~(\ref{eq::appendix_w_ood_solu}). We remark that such $\xi$-uncertainty quantifier plays an important role in the theoretical analysis of RL algorithms, both for online RL and offline RL \citep{bandit-2011, azar2017minimax, wang2020reward, lsvi-2020, pevi-2021, xie2021bellman,xie2021policy}. Our goal is therefore to design a proper OOD target $y$ such that we can obtain $\xi$-uncertainty quantifier based on the bootstrapped value functions. Our design is motivated by the following theorem.
\begin{thm}
\label{lem::OOD_gamma}
Let $\Lambda_{\text{ood}} \succeq \lambda\cdot \mathrm{\mathbf{I}}$. For all the OOD datapoint $(s^{\text{\rm ood}}, a^{\text{\rm ood}}, y)\in\cD_{\text{\rm ood}}$, if we set $y = \mathcal{T} V_{t+1}(s^{\text{\rm ood}}, a^{\text{\rm ood}})$, it then holds for $\beta_t = \cO\bigl(T\cdot \sqrt{d} \cdot \text{log}(T/\xi)\bigr)$ that
\begin{equation*}
\Gamma^{\rm lcb}_t(s_t,a_t)=\beta_t\big[\phi(s_t,a_t)^\top\Lambda_t^{-1}\phi(s_t,a_t)\big]^{\nicefrac{1}{2}}
\end{equation*}
forms a valid $\xi$-uncertainty quantifier.
\end{thm}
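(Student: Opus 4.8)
The plan is to reduce the statement to the standard concentration argument for LSVI-type estimators, treating the OOD samples as auxiliary data rather than as a regularizer. First I would write out the empirical Bellman estimate in closed form: by Eq.~(\ref{eq::appendix_w_ood_solu}), $\widehat\cT V_{t+1}(s_t,a_t)=\phi(s_t,a_t)^\top\widetilde w_t$ with $\widetilde w_t=\widetilde\Lambda_t^{-1}(\sum_{i=1}^m\phi(s^i_t,a^i_t)(r^i_t+V_{t+1}(s^i_{t+1}))+\sum_{(s^{\ood},a^{\ood},y)}\phi(s^{\ood},a^{\ood})y)$. Using the linear-MDP structure of Eq.~(\ref{eq::pf_def_linearMDP}), the true Bellman target is $\cT V_{t+1}(s,a)=\phi(s,a)^\top w_t^\star$ for some $w_t^\star$ with $\|w_t^\star\|_2\le \cO(T\sqrt d)$. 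The key algebraic point is that the design matrix $\widetilde\Lambda_t$ already contains the term $\Lambda_{\ood}\succeq\lambda I$, so $\widetilde\Lambda_t\succeq\Lambda_t$ in the sense needed, and the choice $y=\cT V_{t+1}(s^{\ood},a^{\ood})=\phi(s^{\ood},a^{\ood})^\top w_t^\star$ makes the OOD block of the normal equations contribute $\Lambda_{\ood}w_t^\star$ exactly — i.e.\ the OOD data are "noiseless" observations of the true target. Hence the OOD sum plays precisely the role that $\lambda w_t^\star$ would play if we instead used the prior $\lambda I$ with a centered target; this is what lets the $\lambda I$ regularizer be dropped.

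Next I would decompose the error $\phi(s,a)^\top(\widetilde w_t - w_t^\star)$ in the usual way into (i) a "bias" term coming from the regularization/OOD block, namely $-\phi(s,a)^\top\widetilde\Lambda_t^{-1}(\lambda I)w_t^\star$ — which here vanishes or is absorbed because the OOD target equals $\phi^\top w_t^\star$, so the bias becomes $\phi(s,a)^\top\widetilde\Lambda_t^{-1}(\Lambda_{\ood}w_t^\star - \Lambda_{\ood}w_t^\star)=0$ up to the discrepancy between $V_{t+1}$ and the population value — and (ii) a "noise" term $\phi(s,a)^\top\widetilde\Lambda_t^{-1}\sum_{i=1}^m\phi(s^i_t,a^i_t)\eta^i_t$, where $\eta^i_t = r^i_t+V_{t+1}(s^i_{t+1}) - \cT V_{t+1}(s^i_t,a^i_t)$ is the mean-zero Bellman noise. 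Both terms get bounded by Cauchy–Schwarz in the $\widetilde\Lambda_t^{-1}$-norm: $|\phi(s,a)^\top\widetilde\Lambda_t^{-1}v|\le \|\phi(s,a)\|_{\widetilde\Lambda_t^{-1}}\cdot\|v\|_{\widetilde\Lambda_t^{-1}}$, and since $\widetilde\Lambda_t\succeq\Lambda_t$ we have $\|\phi(s,a)\|_{\widetilde\Lambda_t^{-1}}\le\|\phi(s,a)\|_{\Lambda_t^{-1}}=[\phi(s,a)^\top\Lambda_t^{-1}\phi(s,a)]^{1/2}$, which is exactly the shape of $\Gamma_t^{\rm lcb}$. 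The remaining job is to show the scalar multiplier is $\cO(T\sqrt d\log(T/\xi))$: the bias-type term is bounded deterministically using $\|w_t^\star\|_2\le\cO(T\sqrt d)$ and $\Lambda_{\ood}\succeq\lambda I$, and the noise term requires a self-normalized concentration inequality (Abbasi-Yadkori-style elliptical potential / the bound in \citet{bandit-2011, lsvi-2020}) together with a covering-number argument over the class of value functions $V_{t+1}$ (since $V_{t+1}$ is itself data-dependent), giving the $\sqrt d\log(T/\xi)$ factor, and a union bound over $t\in[T]$ giving the extra $\log T$ and the overall factor $T$ from $\|w_t^\star\|$ / the range of returns.

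I would then assemble: with probability at least $1-\xi$, simultaneously for all $t$ and all $(s,a)$,
\begin{equation*}
|\widehat\cT V_{t+1}(s,a)-\cT V_{t+1}(s,a)|\le \beta_t\,[\phi(s,a)^\top\Lambda_t^{-1}\phi(s,a)]^{1/2}=\Gamma_t^{\rm lcb}(s,a),
\end{equation*}
with $\beta_t=\cO(T\sqrt d\log(T/\xi))$, which is precisely the $\xi$-uncertainty-quantifier property of Definition~\ref{def1}. The main obstacle I anticipate is the covering-number step: because $V_{t+1}$ is built from the same dataset, the Bellman noise $\eta^i_t$ is not independent of the features in $\Lambda_t$, so one cannot apply a vanilla self-normalized bound pointwise in $V_{t+1}$; one must discretize the (bounded, parametric) class of candidate value functions, control the covering number by $\cO(d\log(\cdot))$, apply the concentration bound uniformly over the net, and then pay a small Lipschitz/approximation penalty — this is the standard but delicate part of LSVI analyses, and getting the $\Lambda_{\ood}\succeq\lambda I$ assumption to correctly substitute for the usual $\lambda I$ term (so that the elliptical-potential lemma still applies with the $\widetilde\Lambda_t$ design) is where the argument must be done carefully.
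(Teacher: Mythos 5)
Your proposal follows essentially the same route as the paper's proof: expand $\widetilde w_t$ in closed form, use the linear-MDP structure to write $\cT V_{t+1}=\phi^\top w_t$, decompose the error into an in-distribution Bellman-noise term bounded by the standard self-normalized concentration (with the covering argument over data-dependent $V_{t+1}$ that the paper only cites) and an OOD term that vanishes exactly because $y=\cT V_{t+1}(s^{\ood},a^{\ood})$, with $\Lambda_{\ood}\succeq\lambda\cdot\mathrm{\mathbf{I}}$ giving $\widetilde\Lambda_t\succeq\Lambda_t$ so the bound can be stated in the $\Lambda_t^{-1}$-norm. The only cosmetic difference is that you phrase the OOD contribution as a vanishing ``bias'' of a ridge-type prior, whereas the paper treats it as an explicit second term that is annihilated by the choice of target; the content is identical.
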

\begin{proof}
Recall that we define the empirical Bellman operator $\widehat\cT$ as follows,
\begin{equation*}
\widehat\cT V_{t+1}(s_t, a_t) = \phi(s_t, a_t)^\top \widetilde w_t,
\end{equation*}
It suffices to upper bound the following difference between the empirical Bellman operator and Bellman operator
\begin{equation*}
\cT V_{t+1}(s, a) - \widehat \cT V_{t+1}(s, a) = \phi(s, a)^\top (w_t - \widetilde w_t).
\end{equation*}
Here we define $w_t$ as follows
\begin{equation}
\label{eq::pf_def_wt_pess}
w_t = \theta + \int_{\cS} V_{t+1}(s_{t+1})\psi(s_{t+1}) \text{d} s_{t+1},
\end{equation}
where $\theta$ and $\psi$ are defined in Eq.~(\ref{eq::pf_def_linearMDP}). It then holds that
\begin{align}
\label{eq::pf_ood_0}
\cT V_{t+1}(s, a) - \widehat \cT V_{t+1}(s, a) &= \phi(s, a)^\top (w_t - \widetilde w_t)\notag\\
&= \phi(s, a)^\top w_t - \phi(s, a)^\top \widetilde \Lambda^{-1}_t\sum^m_{i = 1}\phi(s^i_t, a^i_t)\bigl( r(s^i_t, a^i_t) + V^i_{t+1}(s^i_{t+1})\bigr) \notag\\
&\quad - \phi(s, a)^\top \widetilde \Lambda^{-1}_t\sum_{(s^{\text{ood}}, a^{\text{ood}}, y)\in \cD_{\text{ood}}}\phi(s^{\text{ood}}, a^{\text{ood}})y. 
\end{align}
where we plug the solution of $\widetilde w_t$ in Eq.~(\ref{eq::appendix_w_ood_solu}). Meanwhile, by the definitions of $\widetilde \Lambda_t$ and $w_t$ in Eq.~(\ref{eq::def_tilde_lambda}) and Eq.~(\ref{eq::pf_def_wt_pess}), respectively, we have
\begin{equation}
\begin{aligned}
\label{eq::pf_ood_1}
\phi&(s, a)^\top w_t = \phi(s, a)^\top \widetilde \Lambda_t^{-1} \widetilde \Lambda_t w_t\\
&=\phi(s, a)^\top \widetilde \Lambda_t^{-1} \biggl(\sum^m_{i = 1}\phi(s^i_t, a^i_t) \cT V_{t+1}(s_t, a_t) + \sum_{(s^{\text{ood}}, a^{\text{ood}}, y)\in \cD_{\text{ood}}}\phi(s^{\text{ood}}, a^{\text{ood}}) \cT V_{t+1}(s^{\text{ood}}, a^{\text{ood}})\biggr).
\end{aligned}
\end{equation}
Plugging (\ref{eq::pf_ood_1}) into (\ref{eq::pf_ood_0}) yields
\begin{equation}
\label{eq::pf_ood_2}
\cT V_{t+1}(s, a) - \widehat \cT V_{t+1}(s, a) = \text{(i)} + \text{(ii)},
\end{equation}
where we define
\begin{align*}
\text{(i)} &= \phi(s, a)^\top \widetilde \Lambda^{-1}_t\sum^m_{i = 1}\phi(s^i_t, a^i_t)\bigl( \cT V_{t+1}(s^i_t, a^i_t) - r(s^i_t, a^i_t) - V^i_{t+1}(s^i_{t+1})\bigr),\\
\text{(ii)} &= \phi(s, a)^\top \widetilde \Lambda_t^{-1}\sum_{(s^{\text{ood}}, a^{\text{ood}}, y)\in \cD_{\text{ood}}}\phi(s^{\text{ood}}, a^{\text{ood}}) \bigl(\cT V_{t+1}(s^{\text{ood}}, a^{\text{ood}}) - y\bigr).
\end{align*}
Following the standard analysis based on the concentration of self-normalized process \citep{bandit-2011, azar2017minimax, wang2020reward, lsvi-2020, pevi-2021} and the fact that $\Lambda_{\text{ood}} \succeq \lambda\cdot I$, it holds that
\begin{equation}
|\text{(i)}| \leq \beta_t \cdot\big[\phi(s_t,a_t)^\top\Lambda_t^{-1}\phi(s_t,a_t)\big]^{\nicefrac{1}{2}},
\end{equation}
with probability at least $1 - \xi$, where $\beta_t = \cO\bigl(T\cdot \sqrt{d} \cdot \text{log}(T/\xi)\bigr)$. Meanwhile, by setting $y = \cT V_{t+1} (s^{\text{ood}}, a^{\text{ood}})$, it holds that $\text{(ii)} = 0$. Thus, we obtain from (\ref{eq::pf_ood_2}) that 
\begin{equation}
|\cT V_{t+1}(s, a) - \widehat \cT V_{t+1}(s, a)| \leq \beta_t \cdot\big[\phi(s_t,a_t)^\top\Lambda_t^{-1}\phi(s_t,a_t)\big]^{\nicefrac{1}{2}}
\end{equation}
for all $(s, a)\in\cS\times\cA$ with probability at least $1 - \xi$.
\end{proof}

Theorem \ref{lem::OOD_gamma} allows us to further characterize the optimality gap of the pessimistic value iteration. In particular, the following corollary holds.
\begin{corollary}[\citet{pevi-2021}]
\label{cor::opt_gap}
Under the same conditions as Theorem \ref{lem::OOD_gamma}, it holds that
\begin{equation*}
V^*(s_1) - V^{\pi_1}(s_1) \leq \sum^T_{t = 1}\mathbb{E}_{\pi^*} \bigl[\Gamma_t^{\rm lcb}(s_t, a_t)\given s_1 \bigr]
\end{equation*}
\end{corollary}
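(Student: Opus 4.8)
The plan is to recognize the statement as the suboptimality guarantee of pessimistic value iteration (PEVI): once Theorem~\ref{lem::OOD_gamma} certifies that $\{\Gamma^{\rm lcb}_t\}_{t\in[T]}$ is a valid $\xi$-uncertainty quantifier for the OOD-augmented least-squares estimator $\widehat\cT V(s,a)=\phi(s,a)^\top\widetilde w_t$, the bound follows from the analysis of \citet{pevi-2021}. First I would write out the pessimistic backward recursion explicitly: set $\widehat V_{T+1}\equiv 0$ and, for $t=T,\dots,1$, $\widehat Q_t(s,a)=\min\bigl\{\widehat\cT\widehat V_{t+1}(s,a)-\Gamma^{\rm lcb}_t(s,a),\,T-t+1\bigr\}^{+}$, $\widehat V_t(s)=\max_a \widehat Q_t(s,a)$, and let $\pi_t(\cdot\given s)$ be greedy with respect to $\widehat Q_t(s,\cdot)$, with $\pi_1$ the first-step policy appearing in the statement.

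Next I would apply the extended value-difference decomposition of \citet{pevi-2021}. Defining the per-step model evaluation error $\iota_t(s,a):=\cT\widehat V_{t+1}(s,a)-\widehat Q_t(s,a)$, one has
\begin{align*}
V^*(s_1)-V^{\pi_1}(s_1) = {} & \sum_{t=1}^{T}\EE_{\pi^*}\bigl[\iota_t(s_t,a_t)\given s_1\bigr]-\sum_{t=1}^{T}\EE_{\pi_1}\bigl[\iota_t(s_t,a_t)\given s_1\bigr] \\
& {}+\sum_{t=1}^{T}\EE_{\pi_1}\bigl[\bigl\langle\widehat Q_t(s_t,\cdot),\,\pi^*_t(\cdot\given s_t)-\pi_t(\cdot\given s_t)\bigr\rangle\given s_1\bigr].
\end{align*}
Since $\pi_t$ is greedy with respect to $\widehat Q_t$, every inner product in the last sum is nonpositive, so that term is $\le 0$ and may be discarded.

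It remains to control $\iota_t$ on the event $\mathcal{E}$ of probability at least $1-\xi$ on which $|\widehat\cT\widehat V_{t+1}(s,a)-\cT\widehat V_{t+1}(s,a)|\le\Gamma^{\rm lcb}_t(s,a)$ for all $(s,a)$ and all $t$, furnished by Theorem~\ref{lem::OOD_gamma}. A short clipping argument (the upper truncation is inactive on $\mathcal{E}$ since $\widehat\cT\widehat V_{t+1}\le\cT\widehat V_{t+1}+\Gamma^{\rm lcb}_t\le T-t+1+\Gamma^{\rm lcb}_t$, and the lower truncation only shrinks $\iota_t$) shows $0\le\iota_t(s,a)\le 2\,\Gamma^{\rm lcb}_t(s,a)$ pointwise on $\mathcal{E}$. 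Plugging this into the decomposition, the $\pi_1$-term is $\ge 0$ and is dropped, while the $\pi^*$-term is bounded by $2\sum_t\EE_{\pi^*}[\Gamma^{\rm lcb}_t(s_t,a_t)\given s_1]$; absorbing the constant $2$ into $\beta_t$ (which remains $\cO(T\sqrt{d}\,\log(T/\xi))$) gives the claimed inequality. The only genuinely delicate point is that Theorem~\ref{lem::OOD_gamma} must be applied to the \emph{data-dependent} iterates $\widehat V_{t+1}$ rather than to a fixed value function; this is the standard uniform-concentration (covering-number) step of the linear-MDP analysis and is already accounted for in the stated choice of $\beta_t$, so here it requires no further work — the corollary is an immediate consequence of Theorem~\ref{lem::OOD_gamma} together with the PEVI decomposition.
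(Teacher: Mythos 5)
Your proof is correct and is essentially the paper's own route: the paper gives no argument of its own for this corollary, deferring entirely to \citet{pevi-2021}, and you have simply reconstructed that PEVI argument (pessimistic backward recursion, extended value-difference decomposition, greediness making the comparison term nonpositive, and $0\le\iota_t\le 2\,\Gamma^{\rm lcb}_t$ on the good event supplied by Theorem~\ref{lem::OOD_gamma}, with the factor $2$ absorbed into $\beta_t$). The only nit is that in the decomposition the last (inner-product) term should be taken under $\EE_{\pi^*}$ rather than $\EE_{\pi_1}$; since that term is pointwise nonpositive by greediness and is discarded anyway, the slip does not affect the conclusion.
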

\begin{proof}
See e.g., \cite{pevi-2021} for a detailed proof.
\end{proof}
We remark that the optimality gap in Corollary \ref{cor::opt_gap} is information-theoretically optimal under the linear MDP setup with finite horizon \citep{pevi-2021}. Intuitively, for an offline dataset with sufficiently good coverage on the optimal trajectories such as the experience from experts, such gap is small. For a dataset collected from random policy, such a gap can be large. Our experiments also support such intuition empirically, where the score obtained by training with the expert dataset is higher than that with the random dataset.

Theorem \ref{lem::OOD_gamma} shows that if we set $y = \mathcal{T} V_{t+1}(s^{\text{ood}}, a^{\text{ood}})$, then our estimation based on disagreement among ensembles is a valid $\xi$-uncertainty quantifier. However, such OOD target is impossible to obtain in practice as it requires knowing the transition at the OOD datapoint $(s^{\text{ood}}, a^{\text{ood}})$. In practice, if TD error is sufficiently minimized, then $Q_{t+1}(s, a)$ should well estimate the target $\mathcal{T} V_{t+1}$. Thus, in practice, we utilize 
\begin{equation*}
y = Q_{t+1}(s^{\text{ood}}, a^{\text{ood}}) - \Gamma_{t+1}(s^{\text{ood}}, a^{\text{ood}})
\end{equation*}
as the OOD target, where we introduce an additional penalization $\Gamma_{t+1}(s^{\text{ood}}, a^{\text{ood}})$ to enforce the pessimistic value estimation. 

In addition, we remark that in theory, we require that the embeddings of the OOD sample are isotropic, in the sense that the eigenvalues of the corresponding covariate matrix $\Lambda_{\mathrm{ood}}$ are lower bounded. Such isotropic property can be guaranteed by randomly generating states and actions. In practice, we find that randomly generating states is more expensive than randomly generating actions. Meanwhile, we observe that randomly generating actions alone are sufficient to guarantee reasonable empirical performance since the generated OOD embeddings are sufficiently isotropic. Thus, in our experiments, we randomly generate OOD actions according to our current policy and sample OOD states from the in-distribution dataset.



\section{Implementation Detail}\label{app:alg}

\subsection{Algorithmic Description}

\begin{algorithm}[h!]
\caption{PBRL algorithm}
\label{alg2}
\begin{algorithmic}[1]
\State {{\bf Initialize:} $K$ bootstrapped $Q$-networks and target $Q$-networks with parameter $\theta$ and $\theta^-$, policy $\pi$ with parameter $\varphi$, and hyper-parameters $\beta_{\rm in}$, $\beta_{\rm ood}$}
\State {{\bf Initialize:} total training steps $H$, current frame $h=0$}
\While {$h<H$}
\State {Sample mini-batch transitions $(s,a,r,s')$ from the offline dataset $\mathcal{D}_{\rm in}$}
\State {\textit{\# Critic Training for offline data.}}
\State {Calculate the bootstrapped uncertainty $\mathcal{U}_{\theta^-}(s',a')$ through the target-networks.}
\State {Calculate the $Q$-target in Eq.~(\ref{eq:in-bellman}) and the resulting TD-loss $|Q_{\theta}^k(s,a)-\widehat\cT^{\rm in}Q^k(s,a)|^2$.}
\State {\textit{\# Critic Training for OOD data}}
\State {Perform OOD sampling and obtains $N_{\rm ood}$ OOD actions $a^{\rm ood}$ for each $s$.}
\State {Calculate the bootstrapped uncertainty $\mathcal{U}_{\theta}(s,a^{\rm ood})$ for OOD actions through $Q$-networks.}
\State {Calculate the $Q$-target in Eq.~(\ref{eq:ood-bellman}) and the pseudo TD-loss $|Q^k_{\theta}(s,a^{\rm ood})-\widehat\cT^{\rm ood}Q^k(s,a^{\rm ood})|^2$.}
\State {Minimize $|Q_{\theta}^k(s,a)-\widehat\cT^{\rm in}Q^k(s,a)|^2+|Q_\theta^k(s,a^{\rm ood})-\widehat\cT^{\rm ood}Q^k(s,a^{\rm ood})|^2$ to train $\theta$ by SGD.}
\State {\textit{\# Actor Training}}
\State {Improve $\pi_\varphi$ by maximizing $\min_k Q^k(s,a_{\pi})-\log\pi_{\varphi}(a_{\pi}|s)$ with entropy regularization.}
\State {Update the target $Q$-network via $\theta^- \leftarrow (1-\tau) \theta^- + \tau \theta$.}
\EndWhile
\end{algorithmic}
\end{algorithm}

\subsection{Hyper-parameters}

Most hyper-parameters of PBRL follow the SAC implementations in \url{https://github.com/rail-berkeley/rlkit}. We use the hyper-parameter settings in Table~\ref{tab:hyper-PBRL} for all the Gym domain tasks. We use different settings of $\beta_{\rm in}$ and $\beta_{\rm ood}$ for the experiment for Adroit domain and fix the other hyper-parameters the same as Table 2. See \S\ref{app:adroit} for the setup of Adroit. In addition, we use the same settings for discount factor, target network smoothing factor, learning rate, and optimizers as CQL \citep{cql-2020}.

\begin{table}[h!]
\small
  \caption{Hyper-parameters of PBRL}
  \vspace{3pt}
  \label{tab:hyper-PBRL}
  \centering
  \begin{tabular}{p{0.18\columnwidth}p{0.18\columnwidth}p{0.54\columnwidth}}
    \toprule
    Hyper-parameters & Value     & Description \\
    \midrule
    $K$          & 10 & The number of bootstrapped networks. \\
    $Q$-network & FC(256,256,256) & Fully Connected (FC) layers with ReLU activations.\\
    $\beta_{\rm in}$ & 0.01 & The tuning parameter of in-distribution target $\widehat\cT^{\rm in}$.\\
    $\beta_{\rm ood}$ & $5.0\rightarrow 0.2 $ & The tuning parameter of OOD target $\widehat\cT^{\rm ood}$. We perform linearly decay within the first 50K steps, and perform exponentially decay in the remaining steps. \\
    $\tau$ & 0.005 & Target network smoothing coefficient.\\
    $\gamma$ & 0.99 & Discount factor.\\
    $lr$ of actor & 1e-4 & Policy learning rate.\\
    $lr$ of critic & 3e-4 & Critic learning rate.\\
	Optimizer & Adam &  Optimizer. \\
    $H$ & 1M & Total gradient steps. \\
    $N_{\rm ood}$ & 10 & Number of OOD actions for each state. \\
    \bottomrule
  \end{tabular}
\end{table}

\paragraph{Baselines.} We conduct experiments on D4RL with the latest `v2' datasets. The dataset is released at \url{http://rail.eecs.berkeley.edu/datasets/offline_rl/gym_mujoco_v2_old/}. We now introduce the implementations of baselines. (\romannumeral1) The implementation of CQL \citep{cql-2020} is adopted from the official implementation at \url{https://github.com/aviralkumar2907/CQL}. In our experiment, we remove the BC warm-up stage since we find CQL performs better without warm-up for `v2' dataset. (\romannumeral2) For BEAR \citep{bear-2019}, UWAC \citep{UWAC-2020} and MOPO \citep{mopo-2020}, we adopt their official implementations at \url{https://github.com/aviralkumar2907/BEAR}, \url{https://github.com/apple/ml-uwac}, and \url{https://github.com/tianheyu927/mopo}, respectively. We adopt their default hyper-parameters in training. (\romannumeral3) Since the original paper of TD3-BC \citep{td3bc-2021} reports the performance of Gym in `v2' dataset in the appendix section, we directly cite the reported scores in Table~\ref{table:results}. The learning curve reported in Fig.~\ref{fig:curve_d4rl} is trained by implementation released at \url{https://github.com/sfujim/TD3_BC}.

\paragraph{Computational cost comparison.}

In the sequel, we compare the computational cost of PBRL against CQL. We conduct such a comparison based on the Halfcheetah-medium-v2 task. We measure the number of parameters, GPU memory, and runtime per epoch (1K gradient steps) for both PBRL and CQL in the training. We run experiments on a single A100 GPU. We summarize the result in Table~\ref{tab:hyper-cost}. We observe that, similar to the other ensemble-based methods such as Bootstrapped DQN \citep{boot-2016}, IDS \citep{ids-2019}, and Sunrize \citep{sunrise-2021}, our method requires extra computation to handle the ensemble of $Q$-networks. In addition, we remark that a large proportion of computation for CQL is due to the costs of logsumexp over multiple sampled actions \citep{td3bc-2021}, which we do not require. 

\begin{table}[h!]
\small
  \caption{Comparison of computational costs.}
  \vspace{3pt}
  \label{tab:hyper-cost}
  \centering
  \begin{tabular}{cccc}
    \toprule
    & Runtime (s/epoch) & GPU memory  & Number of parameters \\
    \midrule
    CQL & 30.3 & 1.1G & 0.42M \\
    PBRL & 52.1 & 1.7G & 1.52M \\
  \bottomrule
  \end{tabular}
\end{table}

\subsection{Remarks on formulations of pessimism in actor and critic}

We use different formulations to enforce pessimism in actor and critic. In the critic training, we use the penalized Q-function as in Eq.~(\ref{eq:in-bellman}) and Eq.~(\ref{eq:ood-bellman}). While in the actor training, we use the minimum of ensemble Q-function as in Eq.~(\ref{eq:actor}). According to the analysis in EDAC [6], using the minimum of ensemble Q-function $\min_{j=1,\dots,K} Q_j$ as the target is approximately equivalent to using $\bar{Q}-\beta_0 \cdot {\rm Std}(Q_j)$ with a fixed $\beta_0$ as the target. In contrast, in the critic training of PBRL, we tune different factors (i.e., $\beta_{\rm in}$ and $\beta_{\rm ood}$) for the in-distribution target and the OOD target, which yields better performance for the critic estimation.


In the actor training, since we already have pessimistic Q-functions learned by the critic, it is not necessary to enforce large penalties in the actor. To see such a fact, we refer to the ablation study in Fig. \ref{fig:aba-action}, where utilizing the mean as the target achieves reasonable performances. We remark that taking the minimum as the target avoids possible large values at certain state-action pairs, which may arise due to the numerical computation in fitting neural networks. As suggested by our ablation study, taking the minimum among the ensemble of Q-functions as the target achieves the best performance. Thus, we use the minimum among the ensemble of Q-functions as the target in PBRL.

\clearpage
\section{Ablation Study}\label{app:abla}

In this section, we present the ablation study of PBRL. In the training, we perform online interactions to evaluate the performance for every 1K gradient steps. Since we run each method for 1M gradient steps totally, each method is evaluated 1000 times in training. The experiments follow such evaluation criteria, and each curve is drawn by 1000 evaluated scores through online interaction.

\paragraph{Number of bootstrapped-$Q$.} We conduct experiments with different numbers of bootstrapped $Q$-networks in PBRL, and present the performance comparison in Fig.~\ref{fig:aba-ensemble}. We observe that the performance of PBRL is improved with the increase of the bootstrapped networks. We remark that since the training of offline RL is more challenging than online RL, it is better to use sufficient bootstrapped $Q$-functions to obtain a reasonable estimation of the non-parametric $Q$-posterior. We observe from Fig.~\ref{fig:aba-ensemble} that using $K=6,8$, and $10$ yields similar final scores, and a larger $K$ leads to more stable performance in the training process. We adopt $K = 10$ for our implementation.  

\begin{figure*}[h!]
\centering
\includegraphics[width=4.8in]{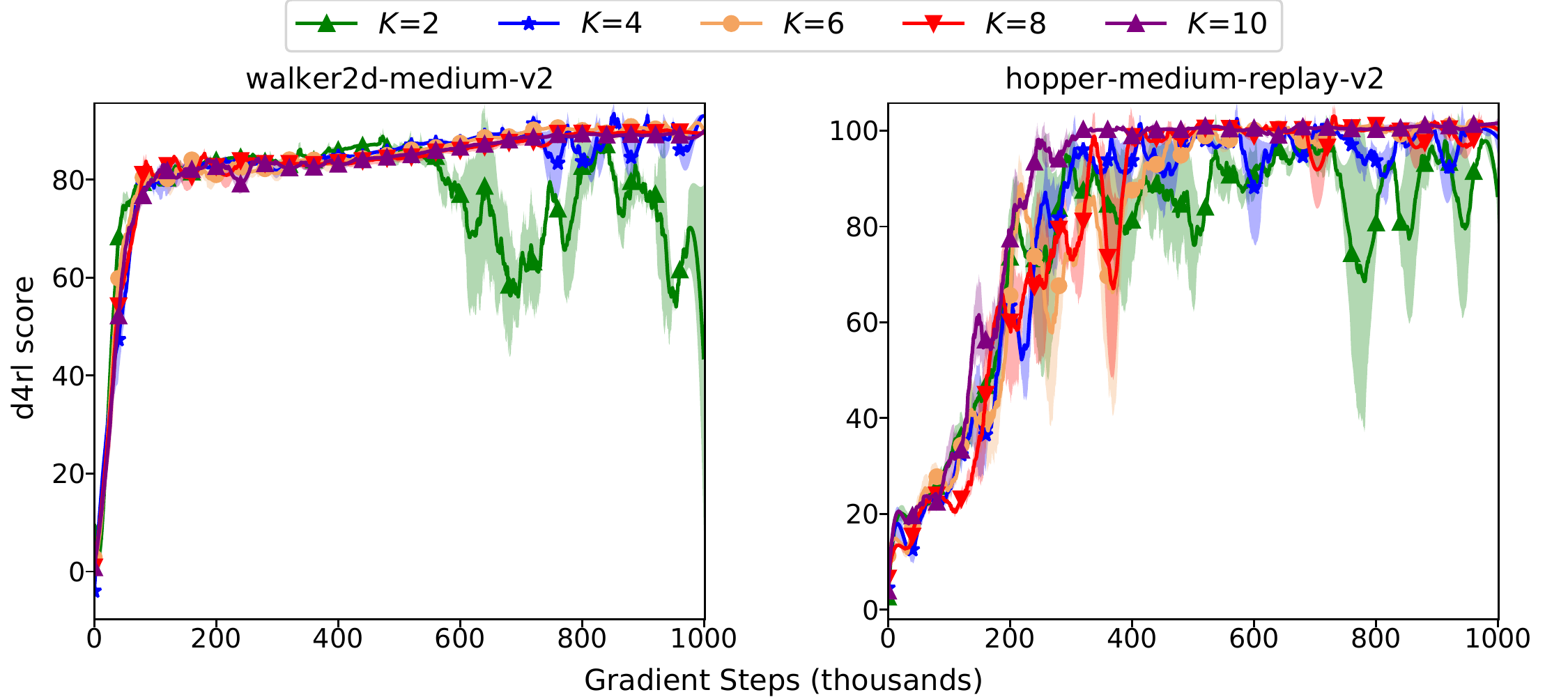}
\caption{The Ablation on the number of bootstrapped $Q$-functions.}
\label{fig:aba-ensemble}
\end{figure*}

\paragraph{Uncertainty of in-distribution target.} We compare different kinds of uncertainty penalization in $\widehat\cT^{\rm in}$ for in-distribution data. (\romannumeral1) Penalizing the immediate reward only. (\romannumeral2) Penalizing the next-$Q$ value only, which is adopted in PBRL. (\romannumeral3) Penalizing both the immediate reward and next-$Q$ value. We present the comparison in Fig.~\ref{fig:aba-in-penalty}. We observe that the target-$Q$ penalization performs the best, and adopt such penalization in the proposed PBRL algorithm.


\begin{figure*}[h!]
\centering
\includegraphics[width=4.8in]{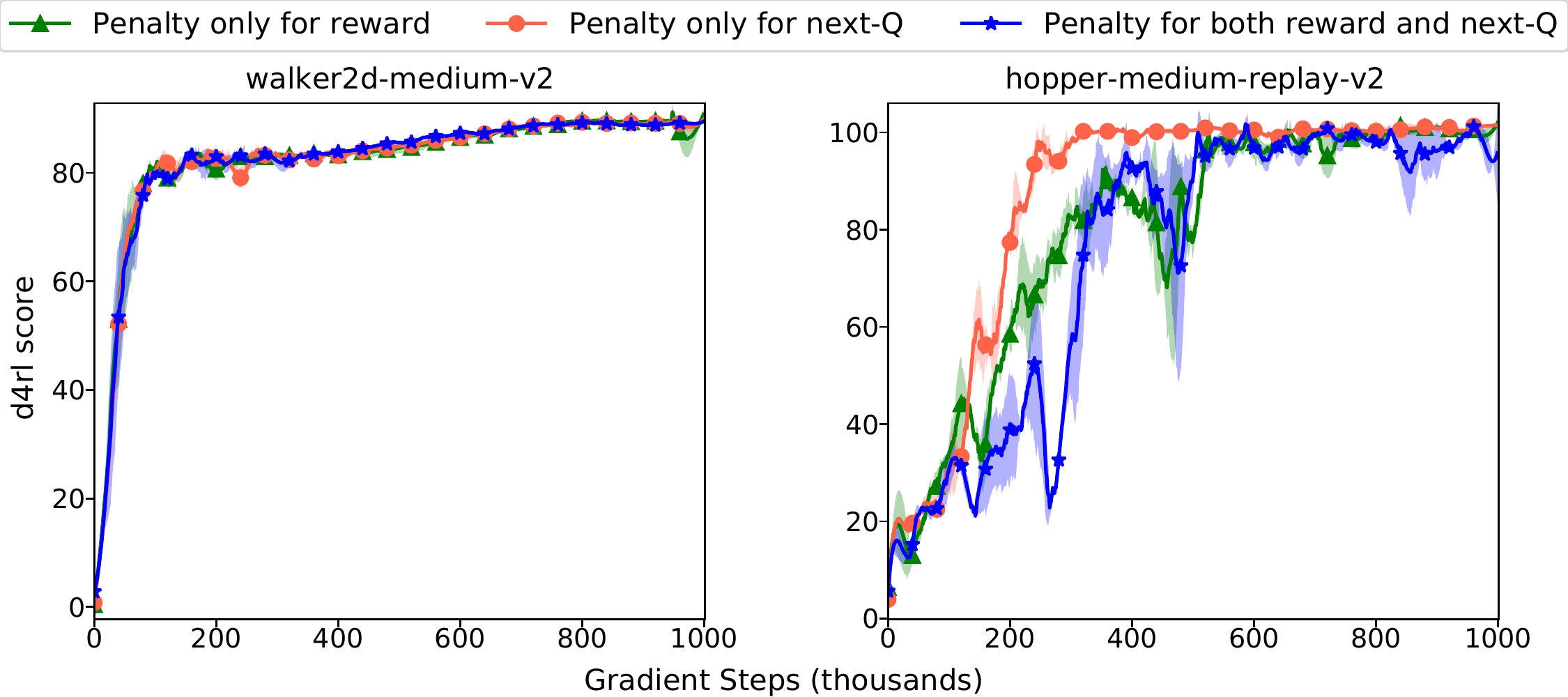}
\caption{The Ablation on penalty strategy for the in-distribution data.}
\label{fig:aba-in-penalty}
\end{figure*}

\paragraph{Tuning parameter $\beta_{\rm in}$.} We conduct experiments with $\beta_{\rm in} \in \{0.1,0.01,0.001,0.0001\}$ to study the sensitivity of PBRL to the tuning parameter $\beta_{\rm in}$ for the in-distribution target. We present the comparison in Fig.~\ref{fig:aba-beta-in}. We observe that in the `medium' dataset generated by a single policy, the performance of PBRL is insensitive to $\beta_{\rm in}$. One possible reason is that since the `medium' dataset is generated by a single policy, the offline dataset tends to concentrate around a few trajectories and has low uncertainty. Thus, the magnitude of $\beta_{\rm in}$ has a limited effect on the penalty. 
Nevertheless, in the `medium-replay' dataset, since the data is generated by various policies, the uncertainty of offline data is larger than that of the `medium' dataset (as shown in Fig.~\ref{fig:Uncer-Exp}). Correspondingly, the performance of PBRL is affected by the magnitude of $\beta_{\text{in}}$. Our experiment shows that PBRL performs the best for $\beta_{\rm in}\in[0.0001,0.01]$. We adopt $\beta_{\rm in} = 0.01$ for our implementation. 

\begin{figure*}[h!]
\centering
\includegraphics[width=4.8in]{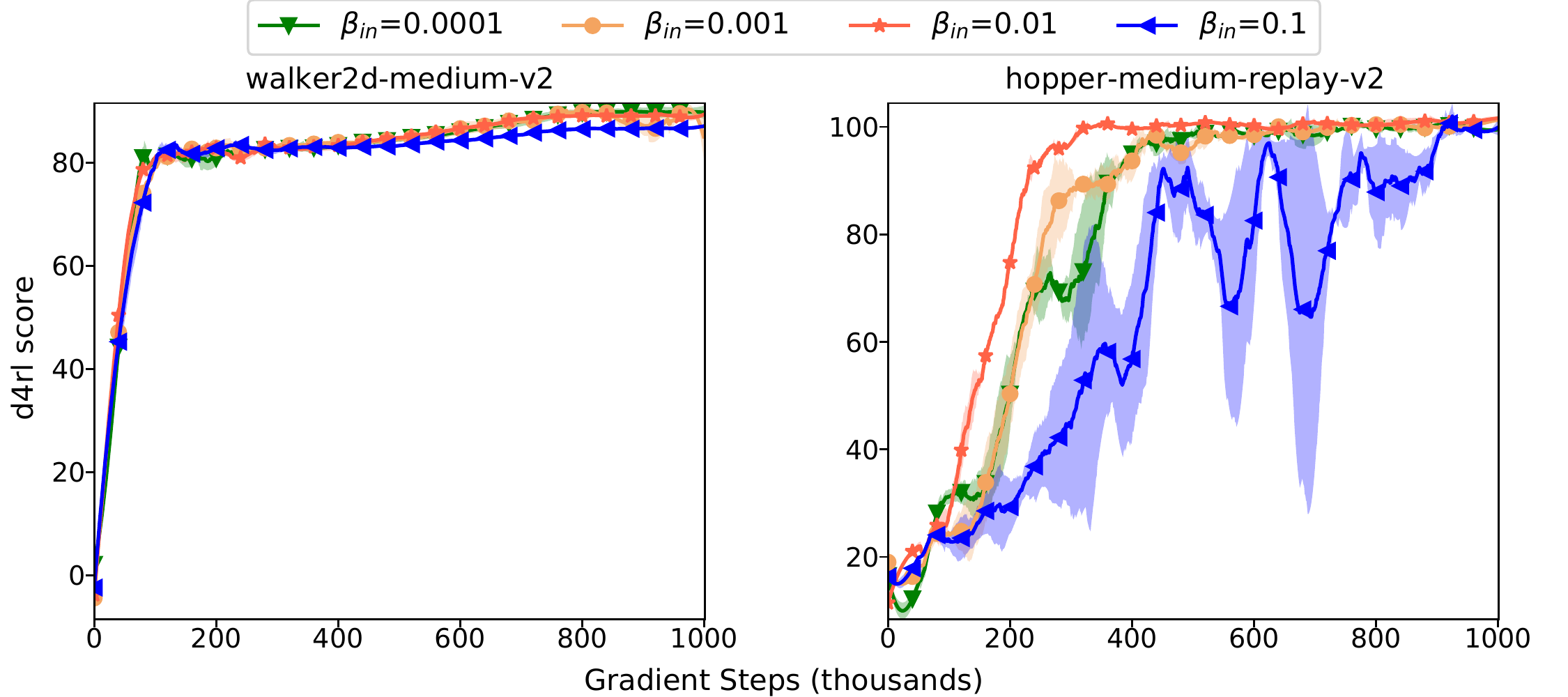}
\caption{The Ablation on the tuning parameter $\beta^{\rm in}$ in the in-distribution target.}
\label{fig:aba-beta-in}
\end{figure*}

\paragraph{Tuning parameter $\beta_{\rm ood}$.} We use a large $\beta_{\rm ood}$ initially to enforce strong OOD regularization in the beginning of training, and then decrease $\beta_{\rm ood}$ linearly while the training evolves. We conduct experiments with constant settings of $\beta_{\rm ood}\in\{0.01,0.1,1.0\}$. 
We observe that in the `medium' dataset, a large $\beta_{\rm ood}=1.0$ performs the best since the samples are generated by a fixed policy with a relatively concentrated in-distribution dataset. Thus, the OOD samples tend to have high uncertainty and are less trustworthy. In contrast, in the `medium-replay' dataset, a small $\beta_{\rm ood}\in\{0.1,0.01\}$ performs reasonably well since the mixed dataset has larger coverage of the state-action space and the uncertainty of OOD data is smaller than that of the `medium' dataset. Thus, adopting a smaller $\beta_{\rm ood}$ for the `medium-replay' dataset allows the agent to exploit the OOD actions and gain better performance. To match all possible situations, we propose the decaying strategy. Empirically, we find such a decaying strategy performs well in both the `medium' and `medium-replay' datasets.

\begin{figure*}[h!]
\centering
\includegraphics[width=4.8in]{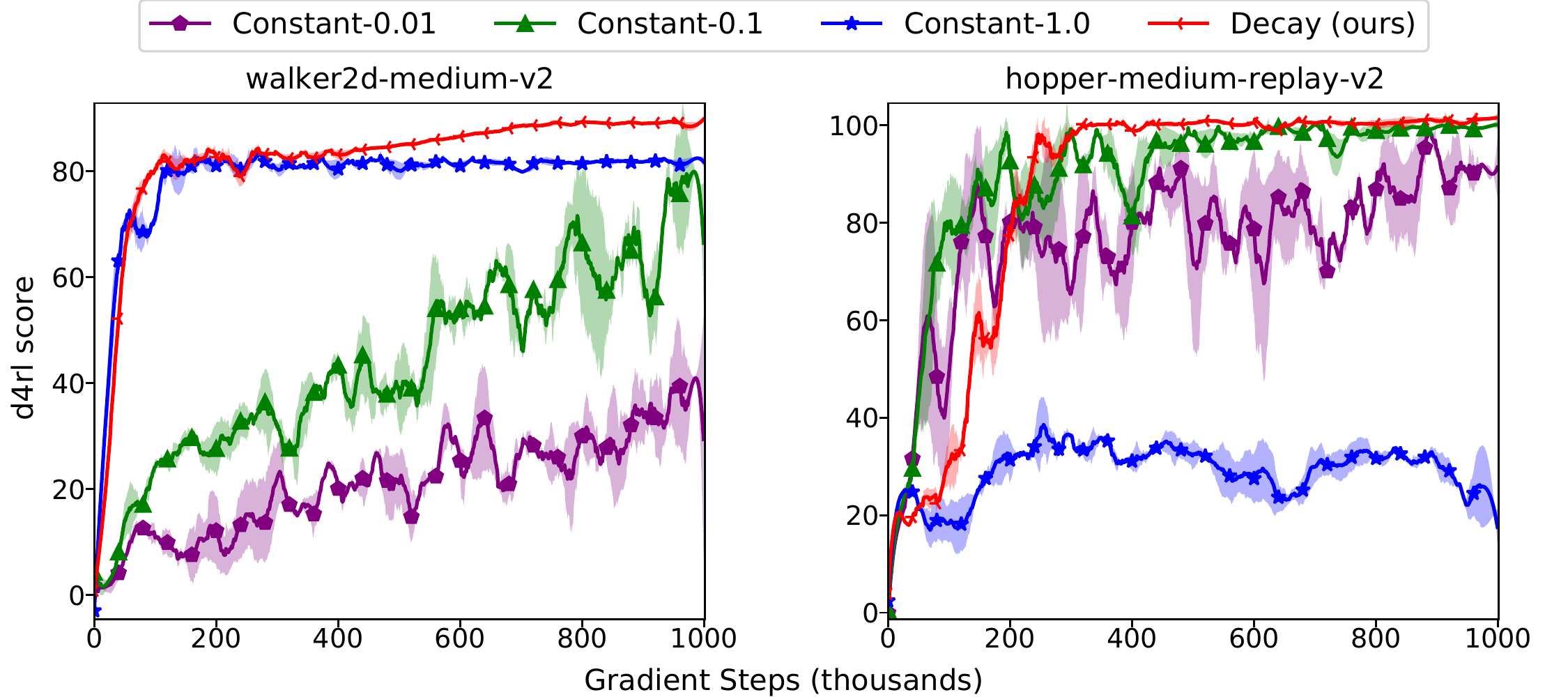}
\caption{The Ablation on the tuning parameter $\beta^{\rm ood}$ in the OOD target.}
\label{fig:aba-beta-ood}
\end{figure*}

We also record the $Q$-value for $(s,a)\sim \cD_{\rm in}$ in the training process. As shown in Fig.~\ref{fig:aba-beta-ood-q}, since both the in-distribution actions and OOD actions are represented by the same $Q$-network, a large constant $\beta_{\rm ood}$ makes the Q-value for in-distribution action overly pessimistic and leads to sub-optimal performance. It is desirable to use the decaying strategy for $\beta_{\rm ood}$ in practice.

\begin{figure*}[h!]
\centering
\includegraphics[width=4.8in]{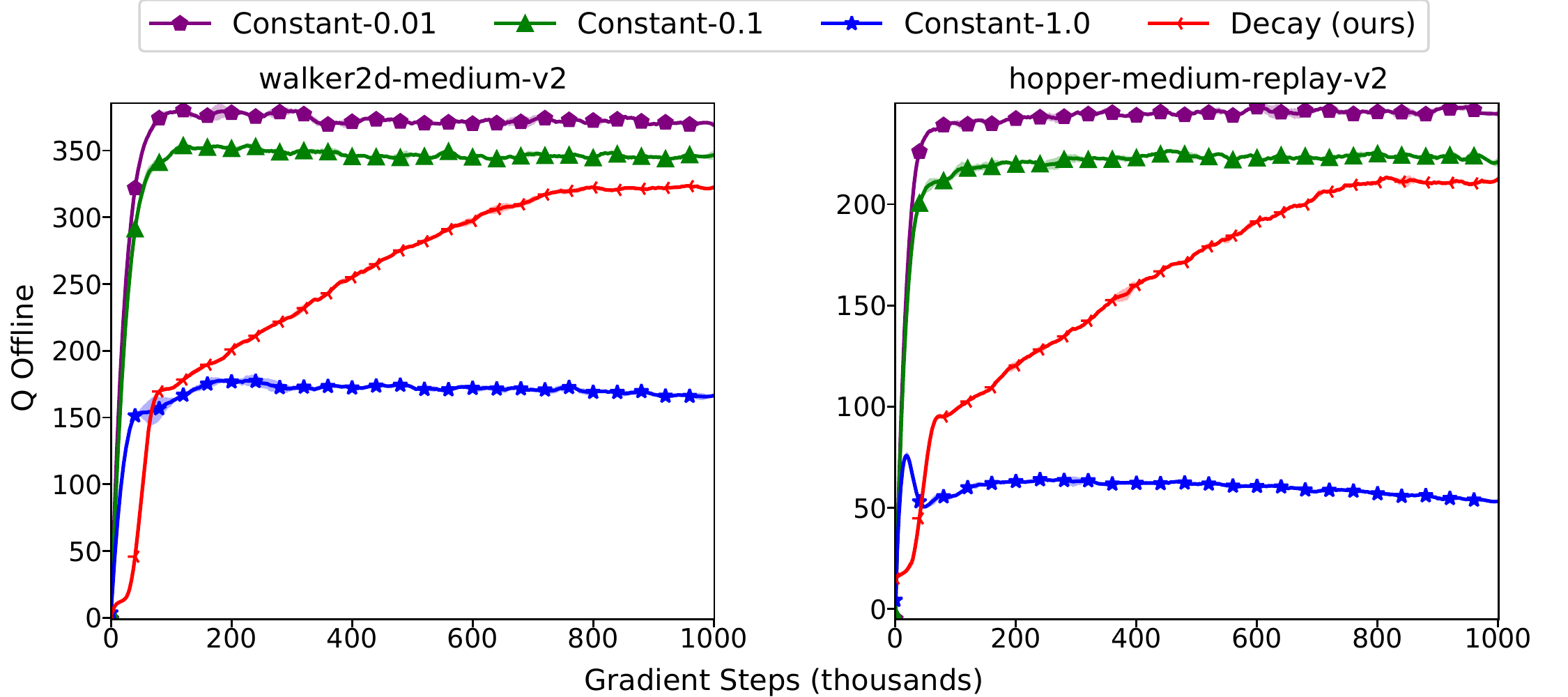}
\caption{With different settings of $\beta^{\rm ood}$, we show the Q-value for state-action pairs sampled from $\cD_{\rm in}$ in the training process.}
\label{fig:aba-beta-ood-q}
\end{figure*}

\paragraph{Actor training.} We evaluate different actor training schemes in Eq.~(\ref{eq:actor}), i.e., the actor follows the gradients of `min' (in PBRL), `mean' or `max' value among $K$ Q-functions. The result in Fig.~\ref{fig:aba-action} shows training the actor by maximizing the `min' among ensemble-$Q$ performs the best. In the `medium-replay' dataset, since the uncertainty estimation is difficult in mixed data, taking `mean' in the actor can be unstable in training. Taking `max' in the actor performs worse in both tasks due to overestimation of $Q$-functions.

\begin{figure*}[h!]
\centering
\includegraphics[width=4.8in]{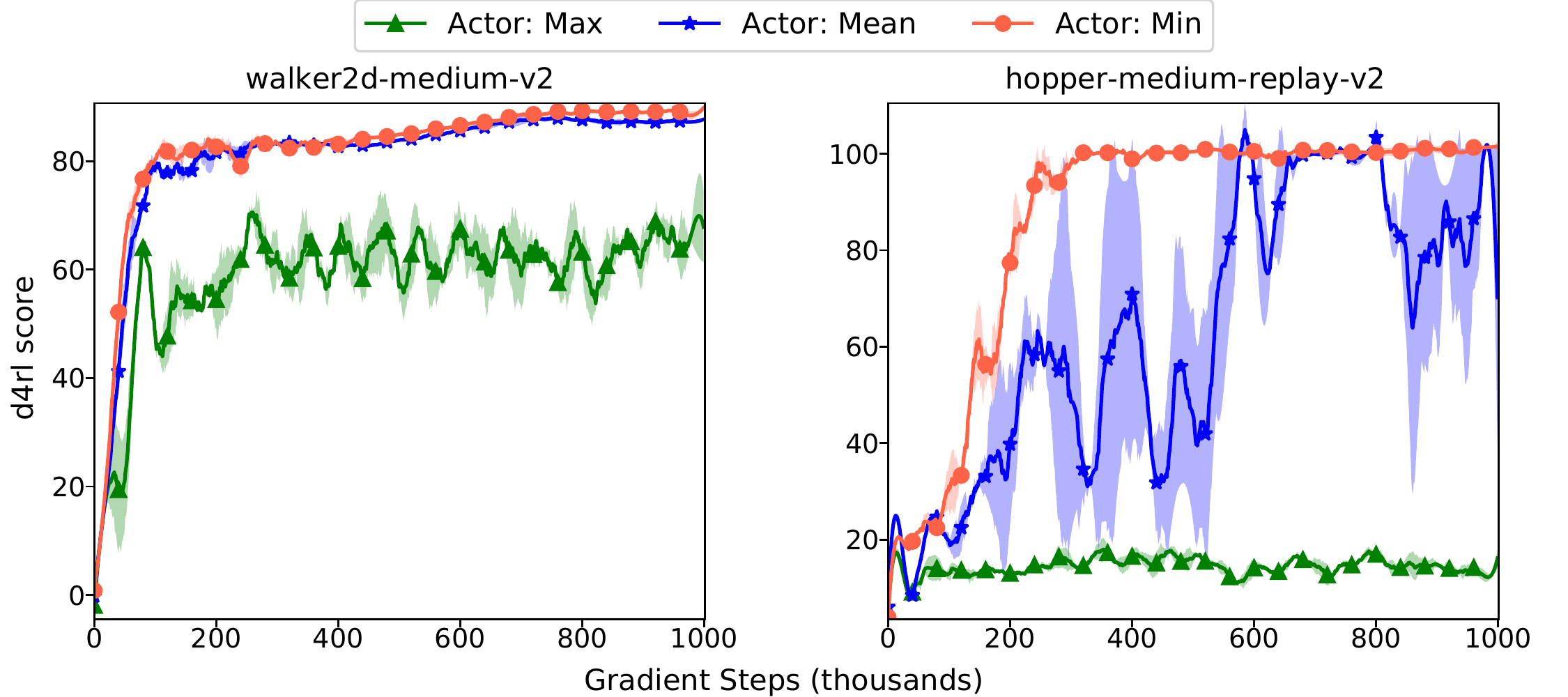}
\caption{The Ablation on action selection scheme of the actor.}
\label{fig:aba-action}
\end{figure*}

\paragraph{Number of OOD samples.} The pessimism in critic training is implemented by sampling OOD actions and then performing uncertainty penalization, as shown in Eq.~(\ref{eq:finalLoss}). In each training step, we perform OOD sampling and sample $N_{\rm ood}$ actions from the learned policy. We perform an ablation study with $N_{\rm ood}\in\{0,2,5,10\}$. According to Fig~\ref{fig:aba-NOOD}, the performance is poor without OOD sampling (i.e., $N_{\rm ood}=0)$. We find the performance becomes better with a small number of OOD actions (i.e., $N_{\rm ood}=2)$. Also, PBRL is robust to different settings of $N_{\rm ood}$.

\begin{figure*}[h!]
\centering
\includegraphics[width=4.8in]{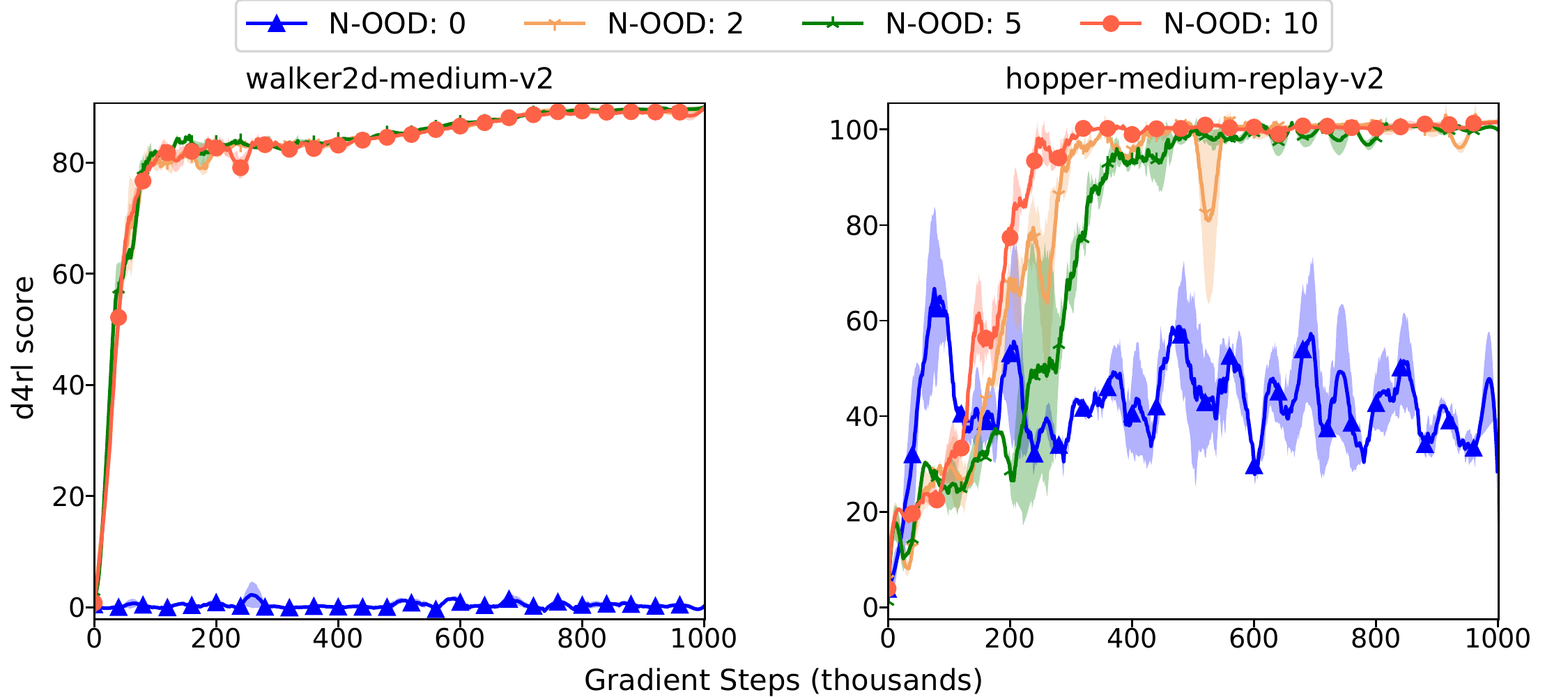}
\caption{The Ablation on different number of OOD actions. the performance becomes better even with very small amout of OOD actions.}
\label{fig:aba-NOOD}
\end{figure*}

\paragraph{OOD Target.} In \S \ref{sec::reg_theory}, we show that setting the learning target of OOD samples to zero enforces a Gaussian prior to $Q$-function under the linear MDP setting. However, in DRL, we find such a setup leads to overly pessimistic value function and performs poorly in practice, as shown in Fig.~\ref{fig:aba-zeroTarget}. Specifically, we observe that such a strong regularizer enforces the $Q$-functions to be close to zero for both in-distribution and OOD state-action pairs, as shown in Fig.~\ref{fig:aba-zeroTarget1} and Fig.~\ref{fig:aba-zeroTarget2}. In contrast, our proposed PBRL target performs well and does not yield large extrapolation errors, as shown in Fig.~\ref{fig:aba-zeroTarget1} and Fig.~\ref{fig:aba-zeroTarget2}.

\begin{figure*}[h!]
\centering
\includegraphics[width=4.8in]{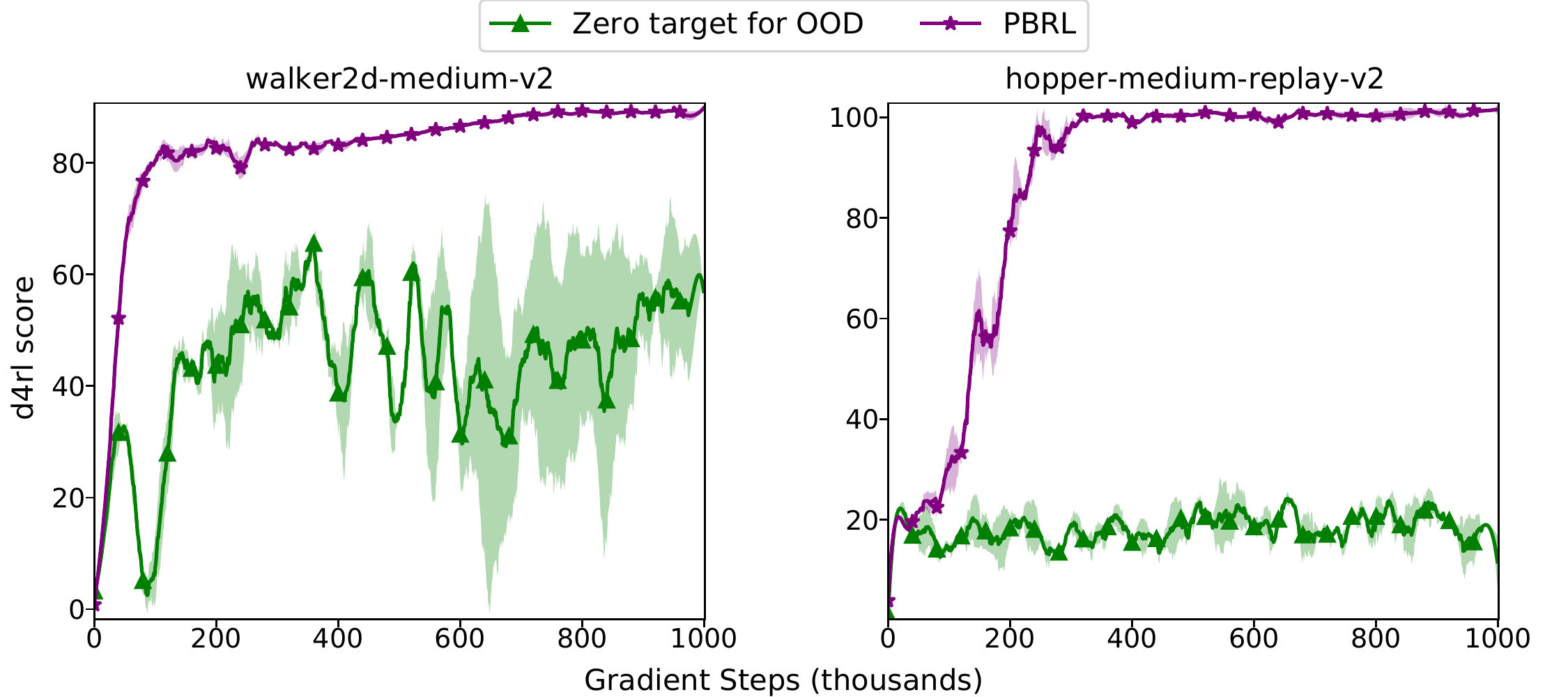}
\caption{The Ablation on OOD target with $y^{\rm ood}=0$ (\emph{normalized scores}).}
\vspace{2em}
\label{fig:aba-zeroTarget}
\end{figure*}
 
\begin{figure*}[h!]
\centering
\includegraphics[width=4.8in]{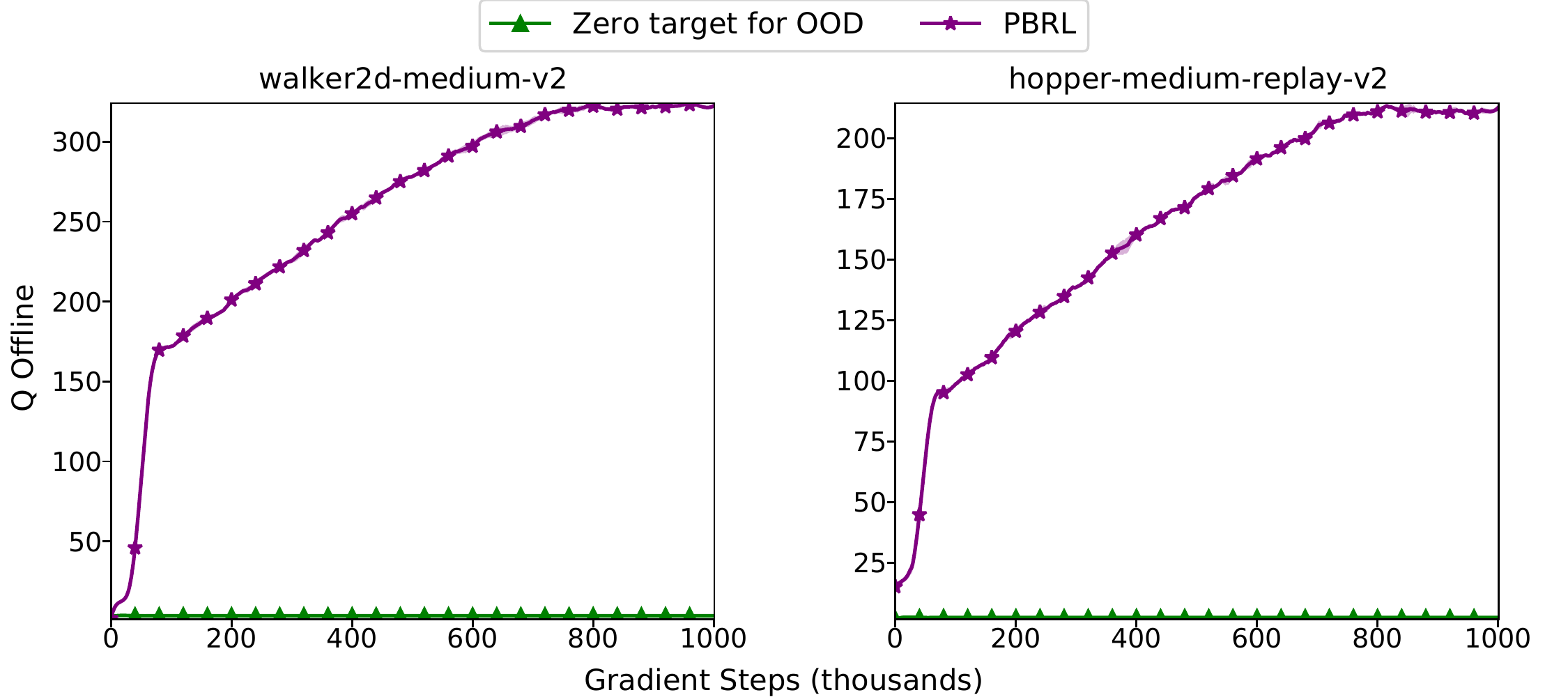}
\caption{The Ablation on OOD target with $y^{\rm ood}=0$. $Q$-offline is the $Q$-value for $(s,a)$ pairs sampled from the offline dataset, where $a$ follows the behavior policy.}
\label{fig:aba-zeroTarget1}
\end{figure*}

\begin{figure*}[h!]
\centering
\includegraphics[width=4.8in]{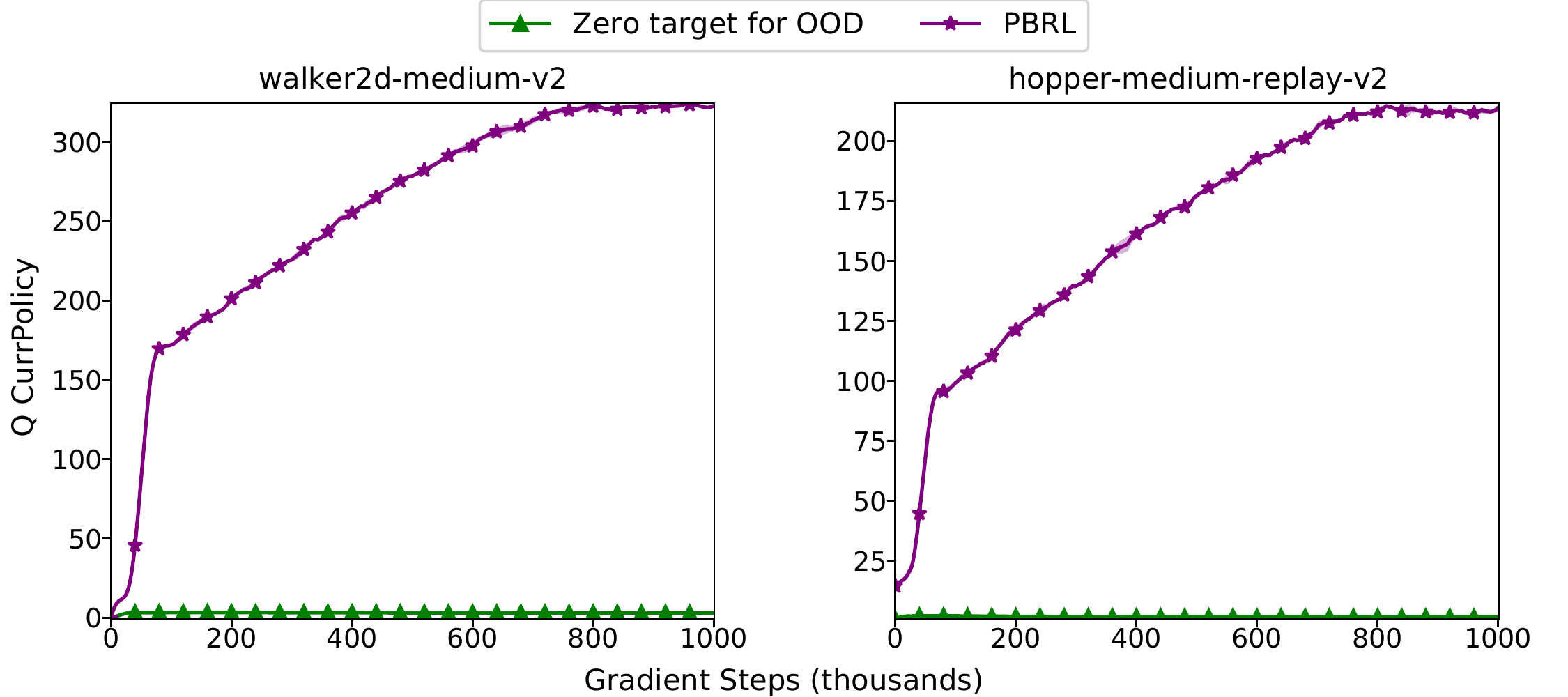}
\caption{The Ablation on OOD target with $y^{\rm ood}=0$. $Q$-CurrPolicy is the $Q$-value for $(s,a_{\pi})$ pairs, where $a_{\pi}\sim \pi(a|s)$ follows the training policy $\pi$.}
\label{fig:aba-zeroTarget2}
\end{figure*}

\clearpage
\section{Regularization for PBRL}\label{app:regu}

In this section, we compare different regularization methods that are popular in other Deep Learning (DL) literature to solve the extrapolation error in offline RL. Specifically, we compare the following regularization methods.
\begin{itemize}
\item \emph{None}. We remove the OOD-sampling regularizer in PBRL and train bootstrapped $Q$-functions solely based on the offline dataset through $\widehat\cT^{\rm in}$.
\item \emph{$L_2$-regularizer.} We remove the OOD-sampling regularizer and use $L_2$ normalization instead. As we discussed in \S\ref{sec:them}, in linear MDPs, LSVI utilizes $L_2$ regularization to control the extrapolation behavior of $Q$-functions in OOD samples. In DRL, we add the $L_2$-norm of weights in the $Q$-network in loss functions to conduct $L_2$-regularization. We attempt two scale factors $\{1e-2,1e-4\}$ in our experiments.
\item \emph{Spectral Normalization (SN).} We remove the OOD-sampling regularizer and use SN instead. SN constrains the Lipschitz constant of layers, which measures the smoothness of the neural network. Recent research \citep{spectral-2021} shows that SN helps RL training when applied to specific layers of the $Q$-network. In our experiment, we follow \citet{spectral-2021} and consider two cases, namely, applying SN in the output layer (denoted by \emph{SN[-1]}) and applying SN in both the output layer and the one before it (denoted by \emph{SN[-1,-2]}), respectively.
\item \emph{Pessimistic Initialization (PI).} Optimistic initialization is simple and efficient for RL exploration, which initializes the $Q$-function for all actions with a high value. In online RL, such initialization encourages the agent to explore all actions in the interaction. Motivated by this method, we attempt a pessimistic initialization to regulate the OOD behavior of the $Q$-network. In our implementation, we draw the initial value of weights and a bias of the $Q$-networks from the uniform distribution $\rm{Unif}(a, b)$. We try two settings in our experiment, namely, (\romannumeral1) \emph{PI-small} that sets $(a,b)$ to $(-0.2,0)$, and (\romannumeral2) \emph{PI-large} that sets $(a,b)$ to $(-1.0,0)$. In both settings, we remove the OOD sampling and use PI instead. 
\end{itemize}

We illustrate (\romannumeral 1) the normalized performance in the training process, (\romannumeral 2) the $Q$-value along the trajectory of the training policy, and (\romannumeral 3) the uncertainty quantification along the trajectory of the training policy. We present the results in Fig.~\ref{fig:aba-norm-score}, Fig.~\ref{fig:aba-norm-Q}, and
Fig.~\ref{fig:aba-norm-uncertainty}, respectively. In the sequel, we discuss the empirical results.
\begin{itemize}
\item We observe that OOD sampling is the only regularization method with reasonable performance. Though $L_2$-regularization and SN yield reasonable performance in supervised learning, they do not perform well in offline RL.
\item In the `medium-replay' dataset, we observe that PI and SN can gain some score in the early stage of training. Nevertheless, the performance drops quickly along with the training process. We conjecture that both PI and SN have the potential to be effective with additional parameter tuning and algorithm design. 
\end{itemize}

In conclusion, previous regularization methods for DL and RL are not sufficient to handle the distribution shift issue in offline RL. 
Combining such regularization techniques with policy constraints and conservatism methods may lead to improved empirical performance, which we leave for future research.

\clearpage

\vspace{3em}

\begin{figure*}[h!]
\centering
\includegraphics[width=4.8in]{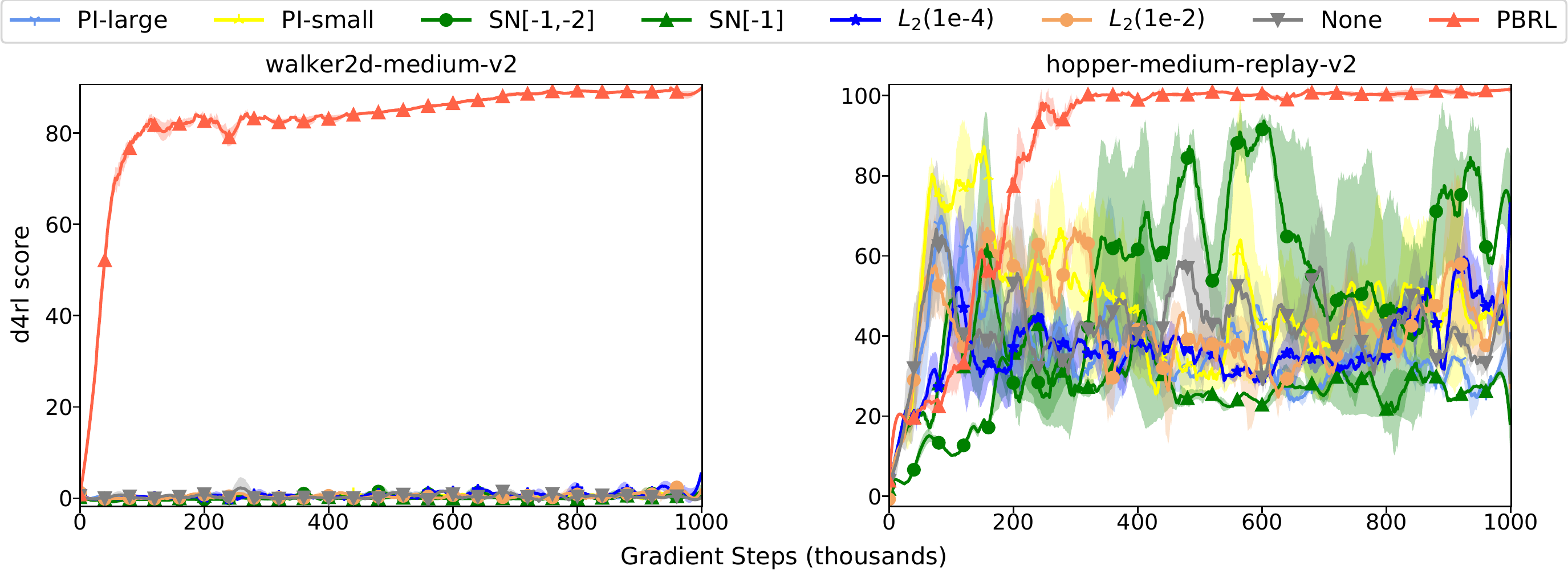}
\caption{Comparision of different regularizers (\emph{normalized score})}
\label{fig:aba-norm-score}
\end{figure*}

\vspace{3em}

\begin{figure*}[h!]
\centering
\includegraphics[width=4.8in]{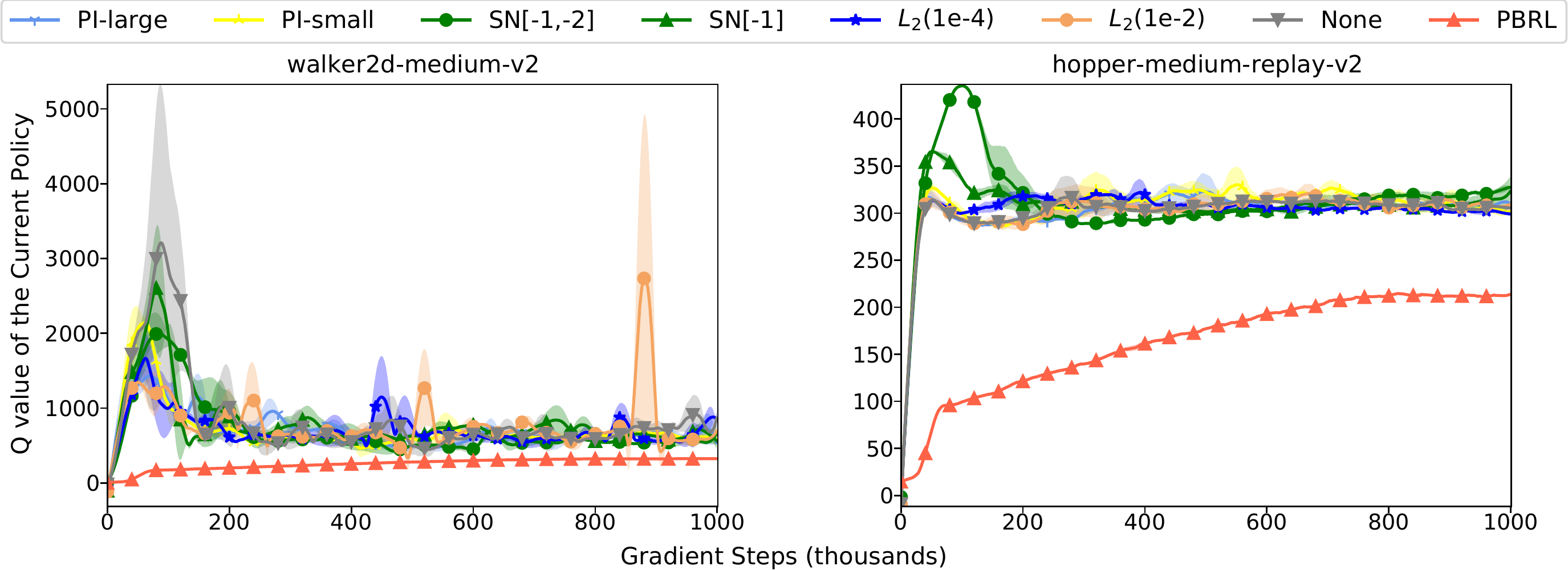}
\caption{Comparision of different regularizers (\emph{Q-value along trajectories of the training policy})}
\label{fig:aba-norm-Q}
\end{figure*}

\vspace{3em}

\begin{figure*}[h!]
\centering
\includegraphics[width=4.8in]{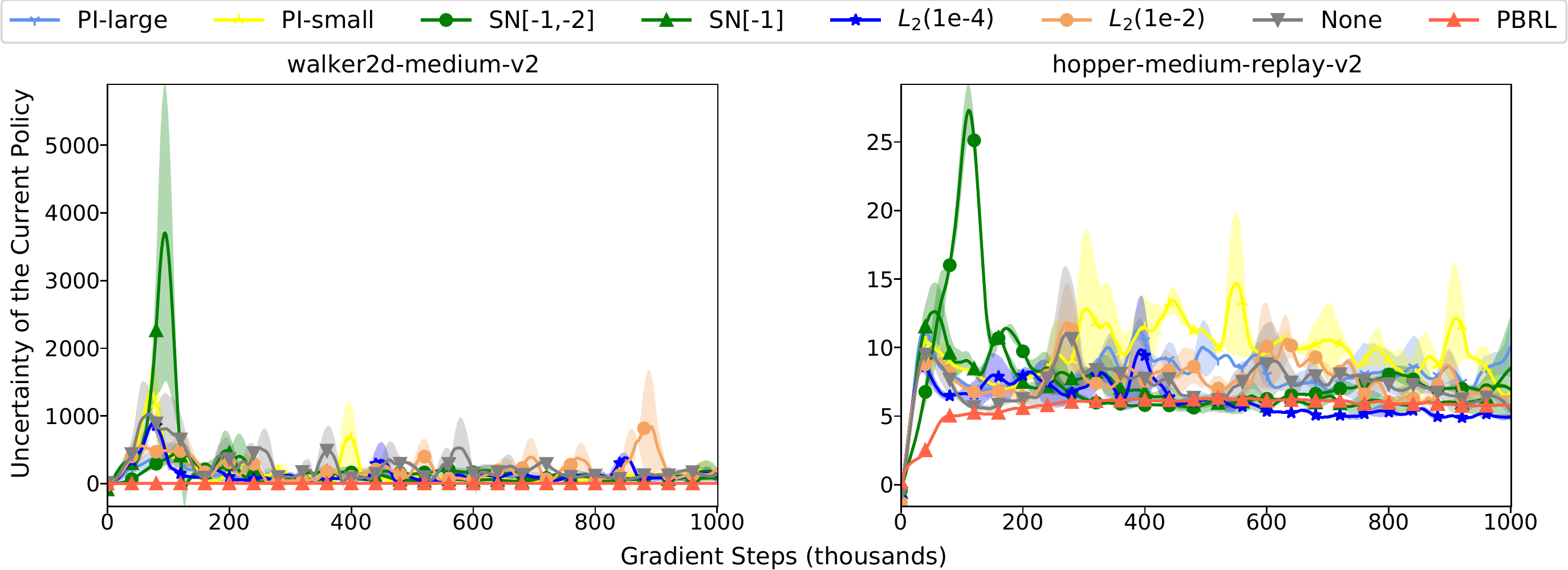}
\caption{Comparision of different regularizers (\emph{Uncertainty along trajectories of the training policy})}
\label{fig:aba-norm-uncertainty}
\end{figure*}

\clearpage
\section{Experiments in Adroit Domain}\label{app:adroit}

In Adroit, the agent controls a $24$-DoF robotic hand to hammer a nail, open a door, twirl a pen, and move a ball, as shown in Fig.~\ref{fig:d4rl_tasks}. The Adroit domain includes three dataset types, namely, demonstration data from a human (`human'), expert data from an RL policy (`expert'), and fifty-fifty mixed data from human demonstrations and an imitation policy (`cloned'). The adroit tasks are more challenging than the Gym domain in task complexity. In addition, the use of human demonstration in the dataset also makes the task more challenging. We present the normalized scores in Table~\ref{tab:adroit}. We set $\beta_{\rm in}=0.0001$ and $\beta_{\rm ood}=0.01$. The other hyper-parameters in Adroit follows Table~\ref{tab:hyper-PBRL}.

In Table~\ref{tab:adroit}, the scores of BC, BEAR, and CQL are adopted from the D4RL benchmark \citep{d4rl-2020}. We do not include the scores of MOPO \citep{mopo-2020} as it is not reported and we cannot find suitable hyper-parameters to make it work in the Adroit domain. We also attempt TD3-BC \citep{td3-2018} with different BC weights and fail in getting reasonable score for most of the tasks. In addition, since UWAC \citep{UWAC-2020} has a different evaluation process, we re-train UWAC with the official implementation and report the scores in Table~\ref{tab:adroit}.

In Table~\ref{tab:adroit}, we add the average score without `expert' dataset since the scores of `expert' dataset dominate that of the rest of the datasets. We find CQL \citep{cql-2020} and BC have the best average performance among all baselines. Meanwhile, we observe that PBRL slightly outperforms CQL and BC. We remark that the human demonstrations stored in `human' and `cloned' are inherently different from machine-generated `expert' data since (i) the human trajectories do not follow the Markov property, (ii) human decision may be affected by unobserved states such as prior knowledge, distractors, and the action history, and (iii) different people demonstrate differently as their solution policies. Such characteristic makes the `human' and `cloned' dataset challenging for offline RL algorithms. In addition, we remark that the recent study in robot learning from human demonstration also encounter such challenges \citep{human-2021}.

\begin{figure}[h!]
  \begin{center}
    \includegraphics[width=0.20\textwidth]{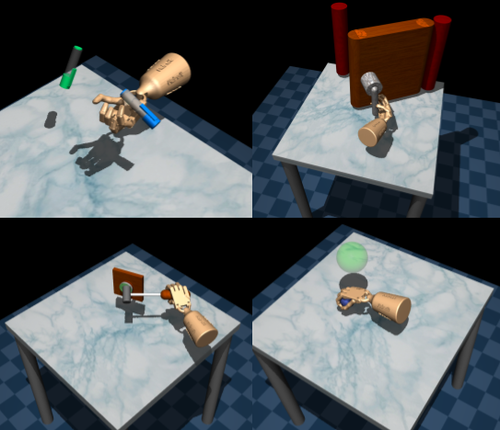}
  \end{center}
  \caption{Illustration of tasks in Adroit domain.}
  \label{fig:d4rl_tasks}
\end{figure}

\begin{table}[h!]
\small
\centering
\renewcommand{\arraystretch}{0.1}
\begin{tabular}{llrrr@{\hspace{-1pt}}lrrrr@{\hspace{-0.5pt}}l}
\toprule
& & BC & BEAR & \multicolumn{2}{c}{UWAC} & CQL & MOPO & TD3-BC & \multicolumn{2}{c}{\textbf{PBRL}} \\ 
\midrule
\multirow{4}{*}{\rotatebox[origin=c]{90}{Human}} & Pen  & 34.4 & -1.0 & \colorbox{white}{10.1}   & {\color[HTML]{525252} $\pm$3.2} & \colorbox{mine}{37.5} & - & 0.0 & \colorbox{white}{35.4} & {\color[HTML]{525252} $\pm$3.3} \\
& Hammer & 1.5 & 0.3 & \colorbox{white}{1.2}   & {\color[HTML]{525252} $\pm$0.7} & \colorbox{mine}{4.4} & - & 0.0 & \colorbox{white}{0.4} & {\color[HTML]{525252} $\pm$0.3}  \\
& Door & 0.5 & -0.3 & \colorbox{white}{0.4}   & {\color[HTML]{525252} $\pm$0.2} & \colorbox{mine}{9.9} & - & 0.0 & \colorbox{white}{0.1} & {\color[HTML]{525252} $\pm$0.0} \\
& Relocate & 0.0 & -0.3 & \colorbox{white}{0.0}   & {\color[HTML]{525252} $\pm$0.0} & \colorbox{mine}{0.2} & - & 0.0 & \colorbox{white}{0.0} & {\color[HTML]{525252} $\pm$0.0} \\
\hline
\multirow{4}{*}{\rotatebox[origin=c]{90}{Cloned}} & Pen  & \colorbox{white}{56.9} & 26.5 & \colorbox{white}{23.0}   & {\color[HTML]{525252} $\pm$6.9} & 39.2 & - & 0.0 & \colorbox{mine}{74.9} & {\color[HTML]{525252} $\pm$9.8} \\
& Hammer & 0.8 & 0.3 & \colorbox{white}{0.4}   & {\color[HTML]{525252} $\pm$0.0} & \colorbox{mine}{2.1} & - & 0.0 & \colorbox{white}{0.8} & {\color[HTML]{525252} $\pm$0.5} \\
& Door & -0.1 & -0.1 & \colorbox{white}{0.0}   & {\color[HTML]{525252} $\pm$0.0} & 0.4 & - & 0.0 & \colorbox{mine}{4.6} & {\color[HTML]{525252} $\pm$4.8} \\
& Relocate & -0.1 & -0.3 & \colorbox{white}{-0.3}   & {\color[HTML]{525252} $\pm$0.0} & -0.1 & - & 0.0 & \colorbox{mine}{-0.1} & {\color[HTML]{525252} $\pm$0.0} \\
\hline
\multirow{4}{*}{\rotatebox[origin=c]{90}{Expert}} & Pen  & 85.1 & 105.9 & \colorbox{white}{98.2}   & {\color[HTML]{525252} $\pm$9.1} & 107.0 & - & 0.3 & \colorbox{mine}{137.7} & {\color[HTML]{525252} $\pm$3.4} \\
& Hammer & \colorbox{white}{125.6} & \colorbox{white}{127.3} & \colorbox{white}{107.7}   & {\color[HTML]{525252} $\pm$21.7} & 86.7 & - & 0.0 & \colorbox{mine}{127.5} & {\color[HTML]{525252} $\pm$0.2} \\
& Door & 34.9 & 103.4 & \colorbox{mine}{104.7}   & {\color[HTML]{525252} $\pm$0.4} & 101.5 & - & 0.0 & \colorbox{white}{95.7} & {\color[HTML]{525252} $\pm$12.2} \\
& Relocate & \colorbox{white}{101.3} & \colorbox{white}{98.6} & \colorbox{mine}{105.5}   & {\color[HTML]{525252} $\pm$3.2} & 95.0 & - & 0.0 & \colorbox{white}{84.5} & {\color[HTML]{525252} $\pm$12.2} \\
\midrule
\multicolumn{2}{l}{Average} & 36.73 & \colorbox{mine}{38.41} & \colorbox{white}{37.57} & {\color[HTML]{525252} $\pm$3.8} & \colorbox{mine}{40.31} & - & 0.02 & \colorbox{mine}{46.79}& {\color[HTML]{525252} $\pm$3.9} \\
\multicolumn{2}{l}{Average w/o expert} & \colorbox{mine}{11.74} & 3.21 & \colorbox{white}{4.35} & {\color[HTML]{525252} $\pm$1.4} & \colorbox{mine}{11.70} & - & 0.02 & \colorbox{mine}{14.52}& {\color[HTML]{525252} $\pm$2.3}  \\
\bottomrule
\end{tabular}
\caption{Average normalized score over 3 seeds in Adroit domain. The highest performing scores are highlighted. Among all methods, PBRL and CQL outperform the best of the baselines. }
\label{tab:adroit}
\end{table}

\newpage
\section{Reliable Evaluation for Statistical Uncertainty}

Recent research \citep{agarwal2021deep} proposes reliable evaluation principles to address the statistical uncertainty in RL. Since the ordinary aggregate measures like \emph{mean} can be easily dominated by a few outlier scores, \citet{agarwal2021deep} presents several efficient and robust alternatives that are not unduly affected by outliers and have small uncertainty even with a handful of runs. In this paper, we adopt these evaluation methods for each method in Gym domain with $M_{\rm task} * N_{\rm seed}$ runs. 
\begin{itemize}
    \item \emph{Stratified Bootstrap Confidence Intervals.} The Confidence Intervals (CIs) for a finite-sample score estimates the plausible values for the true score. Bootstrap CIs with stratified sampling can be applied to small sample sizes and is better justified than sample standard deviations. 
    \item \emph{Performance Profiles.} Performance profiles reveal performance variability through score distributions. A score distribution shows the fraction of runs above a certain score and is given by $\hat{F}(\tau)=\hat{F}(\tau;x_{1:M,1:N})=\frac{1}{M}\sum_{m=1}^{M}\frac{1}{N}\sum_{n=1}^{N}\mathbbm{1}[x_{m,n}\ge \tau].$
    \item \emph{Aggregate Metrics}. Based on bootstrap CIs, we can extract aggregate metrics from score distributions, including median, mean, interquartile mean (IQM), and optimality gap. IQM discards the bottom and top 25\% of the runs and calculates the mean score of the remaining 50\% runs. Optimality gap calculates the amount of runs that fail to meet a minimum score of $\eta= 50.0$.
\end{itemize}

The result comparisons are give in Fig.~\ref{fig:cl-1} and Fig.~\ref{fig:cl-2}. Specifically, Fig.~\ref{fig:cl-1} shows aggregate metrics based on 95\% bootstrap CIs, and Fig.~\ref{fig:cl-2} shows performance profiles based on score distribution. For both evaluations, our PBRL and PBRL-prior outperforms other methods with small variability.

\begin{figure}[h!]
  \begin{center}
    \includegraphics[width=1.0\textwidth]{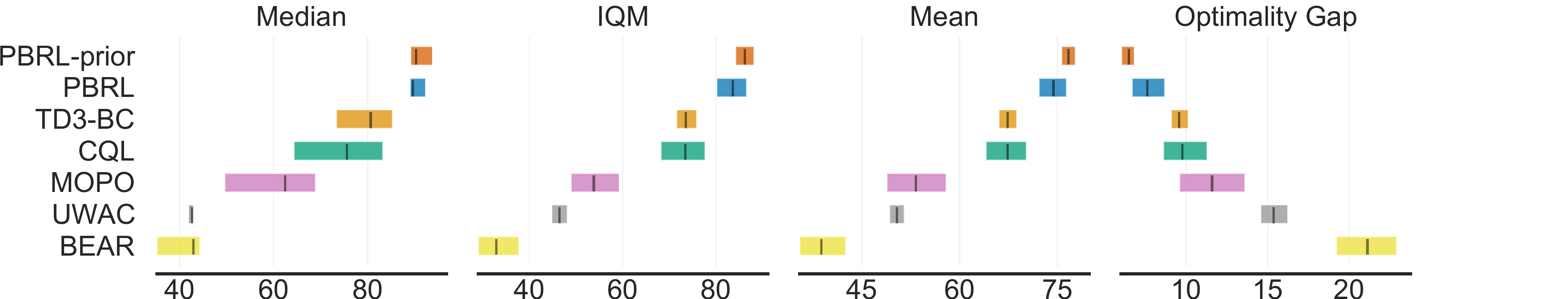}
  \end{center}
  \caption{Aggregate metrics on D4RL with 95\% CIs based on 15 tasks and 5 seeds for each task. Higher mean, median and IQM scores, and lower optimality gap are better. The CIs are estimated using the percentile bootstrap with stratified sampling. Our methods perform better than baselines.}
  \label{fig:cl-1}
\end{figure}

\begin{figure}[h!]
  \begin{center}
    \includegraphics[width=0.90\textwidth]{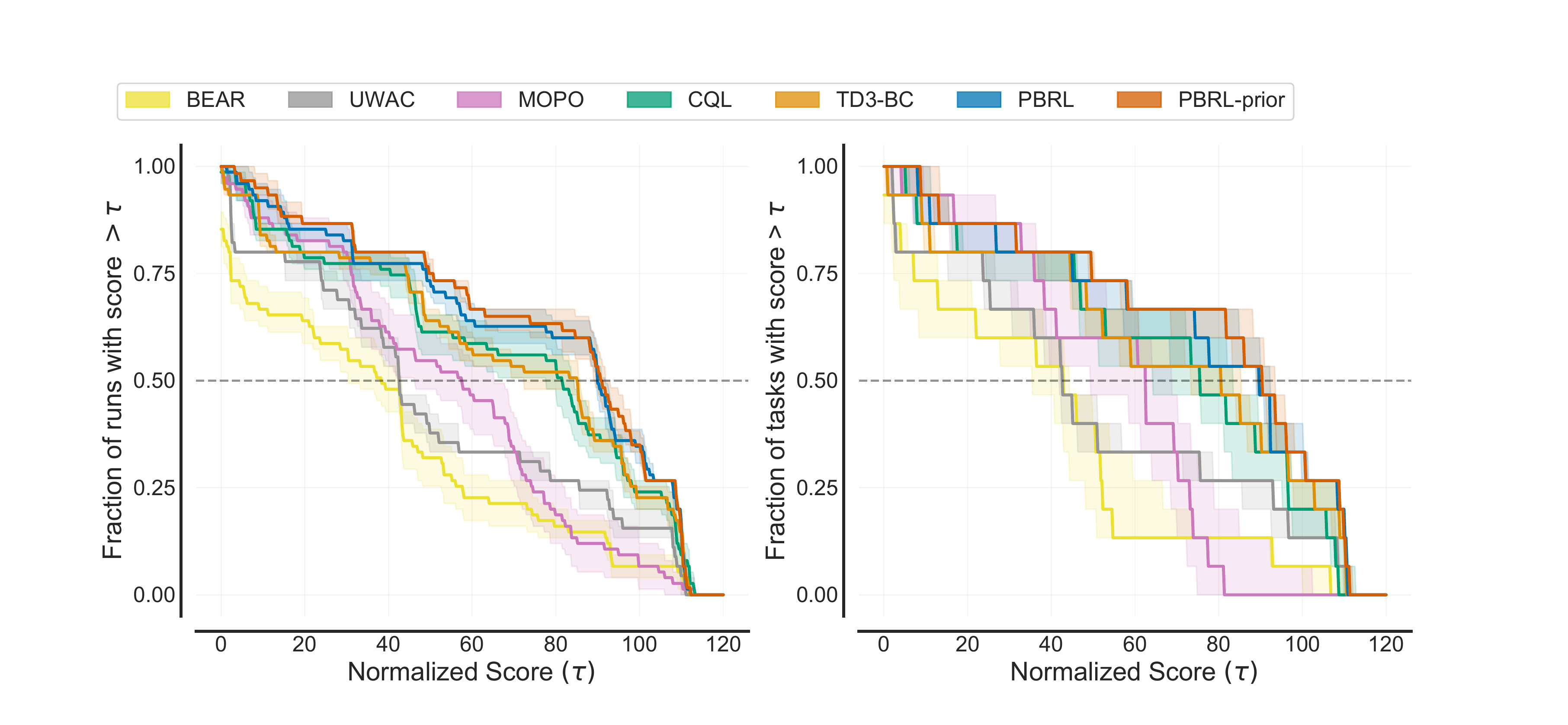}
  \end{center}
  \caption{Performance profiles on D4RL based on score distributions (left), and average score distributions (right). Shaded regions show pointwise 95\% confidence bands based on percentile bootstrap with stratified sampling. The $\tau$ value where the profiles intersect $y = 0.5$ shows the median, and the area under the performance profile corresponds to the mean.}
  \label{fig:cl-2}
\end{figure}

\end{document}